\newtheorem{theorem}{Theorem}
\newtheorem{corollary}{Corollary}
\newtheorem{lemma}{Lemma}
\newtheorem{proposition}{Proposition}
\newtheorem{definition}{Definition}
\newenvironment{proof}[1][Proof]{\textbf{#1. }}{\ \rule{0.5em}{0.5em}}%
\begin{document}

\title{Distributed  Learning with Dependent Samples}

\author{Zirui Sun and Shao-Bo Lin
\IEEEcompsocitemizethanks{\IEEEcompsocthanksitem  Z. Sun and S. B. Lin  are with the Center of Intelligent Decision-Making and Machine Learning, School of Management, Xi'an Jiaotong University.  (Corresponding author: Shao-Bo Lin, Email: sblin1983@gmail.com)}}

\IEEEcompsoctitleabstractindextext{%
\begin{abstract}
This paper focuses on learning rate analysis of distributed kernel ridge regression (DKRR) for strong mixing sequences. Using a recently developed integral operator approach and a classical covariance inequality for Banach-valued strong mixing sequences, we succeed in deriving optimal learning rates  of DKRR. As a byproduct, we   deduce a sufficient condition for the mixing property to guarantee the  optimal learning rates for kernel ridge regression, which fills the gap of learning rates between i.i.d. samples and strong mixing sequences. A series of numerical experiments are conducted to verify our theoretical assertions via showing excellent learning performance of DKRR in learning both toy and real world time series data.
All these results extend the applicable range of distributed learning
from i.i.d. samples  to non-i.i.d. sequences.
\end{abstract}


\begin{IEEEkeywords}
Distributed learning,  strong mixing sequences, kernel ridge regression,
learning rate.
\end{IEEEkeywords}}

\maketitle

\IEEEdisplaynotcompsoctitleabstractindextext

\IEEEpeerreviewmaketitle
\section{Introduction}
With the development of data mining, data of massive size are collected in numerous application areas including   recommender systems, medical analysis, search engines, financial analysis, online text  sensor network monitoring  and social activity mining.
%
%
%
These massive data certainly bring    benefits for data analysis in terms of improving the prediction capability \cite{Zhang2015}, discovering potential structure of data which cannot be reflected by data of small size \cite{Chui2019}, and creating new growth opportunities to combine and analyze industry data \cite{Ban2019}. However, they also bring  several challenges, called massive data challenges, in data analysis  as follows:

$\bullet$ Distributive storage:   Chunks of data are spread across multiple possibly distant servers.
The cost  of data communications between different servers and risk of loading data in a remote server are relatively large. This makes the classical batch learning schemes such as kernel methods \cite{Steinwart2008} and neural networks \cite{Gyorfi2002} be not available. For example, Google has seen 30 trillion URLs, crawls over 20 billion of those a day \cite{Sullivan2012}, and answers 100 billion search queries a month. It is impossible to store all these data in a serve and  answer the query swiftly by using all these data in a batch manner.

 %
%
%
%

$\bullet$ Data privacy preservation: Individual  data such as clinical records, personal social networks and  financial fraud detection \cite{Horvitz2015} are frequently sensitive and cannot be shared for the sake of privacy. It is difficult  to analyze these  data   in a batch manner, hoping that all   sensitive data can be successfully  collected.
The data privacy issue  becomes  urgent when   data are owned by different organizations that wish to collaboratively use them. 
For instance, multiple hospitals are highly desired   to collaboratively train a classifier to improve the quality of clinical decision-making, in the premise of preserving the privacy of  their own medical records \cite{Fung2010}.

$\bullet$ Dependence for samples:
Massive data are collected asynchronously throughout time and behave as a stream or a collection of streams in many applications such as medical research and revenue management. Therefore, it is unreasonable to assume  them to be collected via an independent and identical (i.i.d.) manner. Under this circumstance, the classical learning approaches   \cite{Gyorfi2002,Cucker2007,Steinwart2008} developed for i.i.d. samples are no more efficient. For example, WTI (Western Texas Intermediate) Spot Prices from EIA (Energy Information Administration), an important part of the international energy pricing system \cite{Wang2011}, collect numerous  spot  prices as time series data which obviously do not satisfy the i.i.d. assumption. Analysis of massive WTI data is of great significance for economic development, enterprise production operations and investment.

Distributed learning \cite{Bertsekas1989} is a natural and preferable approach to conquer  massive data challenges. Different from the popular decentralized setting \cite{Boyd2005,Olfati-Saber2007,Nedic2009,Koppel2018}, we are interested in distributed learning with a global coordination (global machine) among servers (local machines) \cite{Zhang2013,Duchi2014,Lee2017,Jordan2018,Volgushev2019}. In such a setting,  data subsets are stored on numerous local machines and some specific learning algorithm is implemented on each local machine  to yield a local estimator. Then,  communications between  different local machines are conducted to exchange information  to improve the quality of local estimators. Finally, as shown in Figure 1, all the obtained local estimators are transmitted   to a global machine to produce a global estimator.  Therefore, there are  three ingredients of the
distributed learning:   local processing, communication, and synthesization. Such a distributed learning is a special type  of    the well known map-reduce approach \cite{Dean2008} and can be easily realized via the well developed Hadoop system \cite{White2012}.
\begin{figure}\label{Fig:1}
\begin{center}
\includegraphics[scale=0.3]{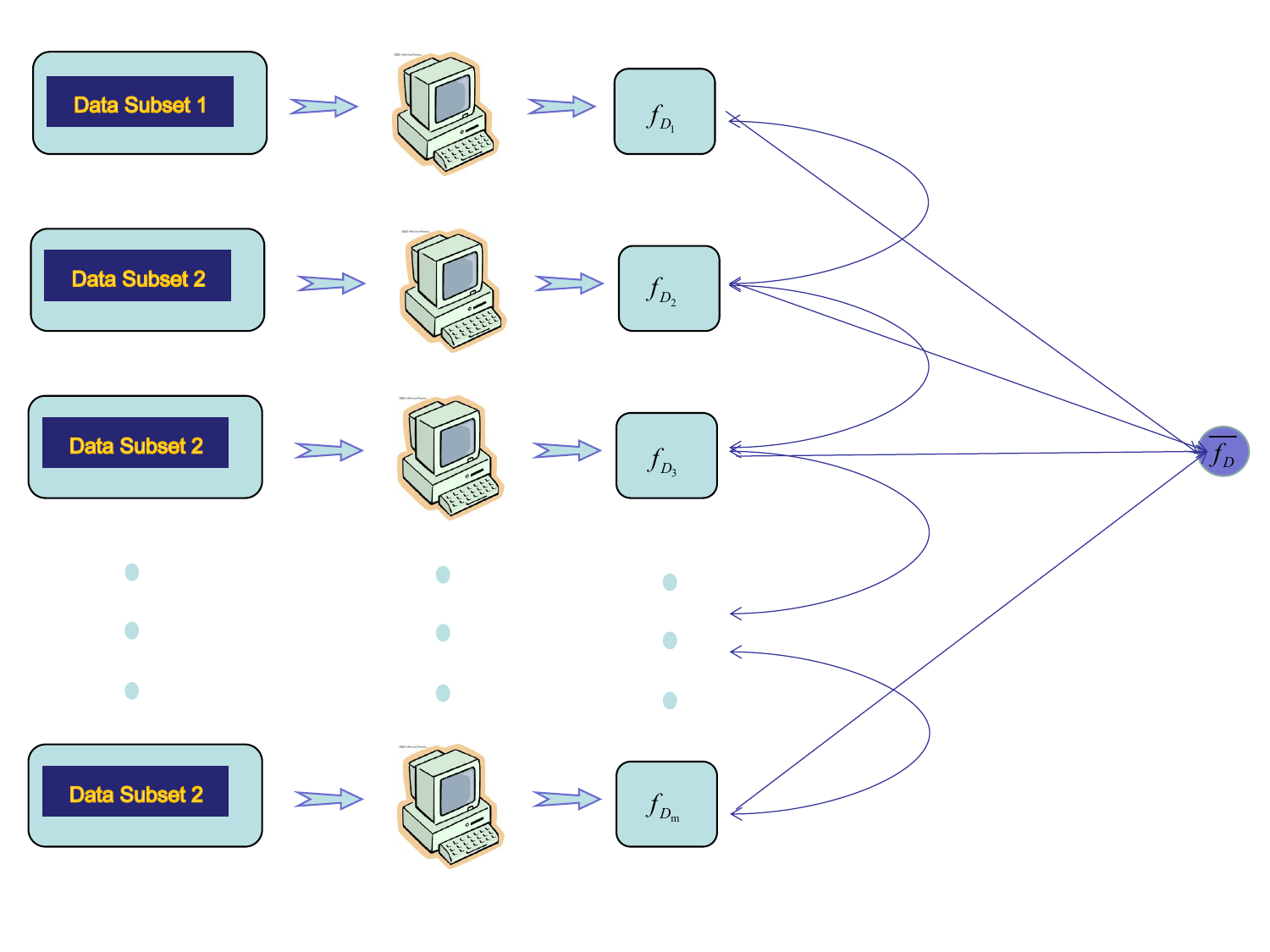}	
\caption{Training flow of distributed learning}
\end{center}
\end{figure}

Since  communications  among different local machines bring additional risk of  privacy disclosure \cite{Wang2018},  we consider distributed  learning without  communications among local machines, where only local processing and synthesization are adopted in the   distributed system \cite{Zhang2015,Huang2019}.
If the data are i.i.d. drawn, it was proved  in \cite{Zhang2015,Chang2017a,Lin2018CA,Mucke2018}  that such a distributed learning scheme performs the same as its batch counterpart that  runs  learning algorithms with whole data   stored on a large enough machine, provided  the number of local machines is not too large. The problem is, however, that such nice results depend  heavily on the i.i.d. nature of data and cannot be extended to non-i.i.d. sequences directly. In fact, it remains open to demonstrate the feasibility of distributed learning  in tackling distributively stored non-i.i.d. data, especially for the theoretical side. The aim of this paper is to present theoretical verifications to guarantee the feasibility and efficiency of distributed learning for non-i.i.d. data.

\subsection{Challenges for analysis}
In this paper, we take the widely used distributed kernel ridge regression (DKRR) \cite{Zhang2015,Lin2017,Chang2017} for example to illustrate the feasibility of distributed learning  to tackle distributively stored strong mixing sequences (or $\alpha$-mixing sequences) \cite{Doukhan1994,Modha1996}. Our results  can be easily  extended to other kernel-based learning algorithms such as the distributed kernel-based gradient descent algorithm  \cite{Lin2018CA} and distributed kernel-based spectral algorithms \cite{Guo2017,Mucke2018,Linj2018}.

To study the learning performance of DKRR, we cast our analysis in the framework of statistical learning theory \cite{Cucker2007} to derive its optimal learning rates\footnote{In this paper, we use the terminology ``learning rate"  to refer the relation between the prediction accuracy and number of  samples, which is popular in learning theory \cite{Cucker2007,Smale2007}, and is different from the terminology in optimization to mean the step-size for  iteration algorithms.}. Different from i.i.d. samples,   the dependence nature of non-i.i.d. data plausibly  reduces the number of  effective data \cite{Yu1994,Modha1996}. Therefore, the existing   analysis \cite{Yu1994,Modha1996} showed  that learning rates of some specific learning algorithms for non-i.i.d. data are always worse than those for i.i.d. data. In particular, it is still unclear   whether the learning rates   established for kernel ridge regression (KRR) \cite{Xu2008,Steinwart2009}  are optimal. In this way, there exist two challenges to analyze the learning performance of DKRR for non-i.i.d. data. The  one is to deduce  optimal learning rates of KRR to provide a baseline and the other is to derive the optimal learning rates of DKRR.

The existing approach to derive  learning rates of KRR for strong mixing data is the empirical process technique \cite{Dehling2002} equipped with a Bernstein inequality established  in \cite{Modha1996}. Due to the dependence nature, even for geometrical $\alpha$-mixing sequences with large $\gamma_0$ (see (\ref{def-Galpha}) below for the detailed definition), there are only $N^\frac{\gamma_0}{\gamma_0+1}$ effective samples involved in the Bernstein inequality \cite{Modha1996}, making the learning rates of KRR derived via the empirical process technique be much  worse  than those  for i.i.d. samples, where $N$ denotes the size of samples. To establish optimal learning rates of KRR for $\alpha$-mixing sequences, it  is necessary to develop   novel analysis techniques rather than the classical empirical process technique.

Concerning DKRR, the recently developed integral operator approaches \cite{Lin2017,Chang2017,Guo2017,Mucke2018} for i.i.d. samples showed that the generalization error can be divided into approximation error, sample error and distributed error. The approximation error,
independent of the sample, describes the approximation capability of the hypothesis space. The sample error, which measures the quality of samples,  reflects the
power of   (weighted) averaging in  distributed learning.
The distributed error,    describing the  limitation of  distributed
learning,  presents a  restriction on the number of local machines to guarantee the
optimal learning rate. The main tool for this error decomposition is the
relation
$
   E[\langle \xi_i,\xi_j\rangle]=\langle E[\xi_i],E[\xi_j]\rangle,
$
where $\{\xi_i\}_{i=1}^\infty$ are i.i.d. drawn.    However, this perfect covariance equality  does not hold for non-i.i.d. sequences, making the classical error decomposition strategy for DKRR \cite{Lin2017,Chang2017,Guo2017,Mucke2018} be unavailable. A novel error decomposition scheme  for   non.i.i.d. sequences  is thus   required.

\subsection{Related Work}

KRR is one of the most popular learning algorithms in learning theory \cite{Cucker2007,Steinwart2008}. Its theoretical behaviors for i.i.d. samples  were sufficiently explored   \cite{Caponnetto2007,SteinwartHS,Lin2017} by showing that KRR can achieve   optimal learning rates. 
Studying learning performance of KRR for strong mixing sequences is also a classical and long-standing research topic  and can date back to 1996, when \cite{Modha1996} derived a Bernstein-type concentration inequality for geometrical  $\alpha$-mixing sequences.
This  nice result  was  adopted  in \cite{Xu2008,Steinwart2009,Hang2014} to derive learning rates for KRR. The problem is, however, that the concentration inequality for $\alpha$-mixing sequences is somewhat worse than that for i.i.d. samples, since the dependence of data is doomed to reduce the number of effective samples. As a result, all learning rates derived from such a Bernstein-type concentration inequality   are worse than the optimal learning rates, no matter how fast the $\alpha$-mixing coefficients decay.
%
Noticing this dilemma, \cite{Sun2010} is the first work, to the best of our knowledge, to study the learning rate without using  the Bernstein-type concentration inequality. Instead, \cite{Sun2010} utilized the integral operator approach in \cite{Smale2007} to derive learning rates of KRR for strong mixing sequences.  However, their results did not take the regularity of kernel into account, making the derived learning rates be sub-optimal. In summary,  the challenge for KRR  proposed in the previous subsection is not settled in these existing  literature.

Distributed learning  based on kernel methods as described in Figure 1 is a powerful approach to tackle  distributively stored massive data. Learning rates analysis of DKRR has been also conducted in \cite{Zhang2015,Lin2017,Mucke2018,Linj2018,Lin2020} for i.i.d. samples. In particular, optimal learning rates for DKRR have been established  \cite{Zhang2015,Lin2017} in the sense that the distributed learning based on a divide-and-conquer scheme performs almost the same as its batch counterpart   provided that the number of local machines is not so large. The main tools for the analysis in \cite{Zhang2015,Lin2017,Mucke2018,Linj2018,Lin2020}  are   operator  concentration inequalities \cite{Smale2007,Caponnetto2007} for i.i.d. samples  and an exclusive  error decomposition strategy based on the independence of samples. Since we are interested in non-i.i.d. data, both tools are unavailable and therefore the challenge for DKRR stated in the previous   remains open.

Different from these interesting  results, we aim at developing a novel   framework to analyze the learning performance of KRR and DKRR for $\alpha$-mixing sequences. We devote to deriving   optimal learning rates for  KRR and DKRR under mild conditions for $\alpha$-mixing sequences.
It should be mentioned  that ``distributed learning'' in this paper is indeed a  parallel computation approach that simply synthesizes   results of all local machines  on a central global machine, and is different from the widely used distributed learning scheme \cite{Boyd2005} that trades all nodes equally without a central agent. We declare that the reason to adopt ``distributed learning'' in this paper  is to highlight that our algorithm is designed for distributively stored data that cannot be shared with each other, and is different from the classical parallel computation that is only developed for   acceleration.
Such a nomination system   has been utilized  in large literature via naming the classical distributed learning scheme \cite{Boyd2005}  as ``decentralized distributed learning'' \cite{Olfati-Saber2007,Nedic2009,Koppel2018} and the parallel computation  with  a central agent for distributively stored data as
``centralized distributed learning'' \cite{Zhang2013,Duchi2014,Lee2017,Jordan2018,Volgushev2019,Linj2018,Chen2019,Wang2020}.

\subsection{Contributions  }
Our  purposes in this paper is to derive optimal learning rates of DKRR for strong mixing sequences to circumvent  the  above challenges.
Our novelty can be attributed  into the following three aspects.

$\bullet$ {\it Methodology:} As mentioned in the previous subsection, empirical process techniques associated with Bernstein-type concentration inequalities \cite{Yu1994,Modha1996} are classical tools to derive learning rates  of KRR for strong mixing sequences. In this paper, we  formulate  a novel integral operator theory framework   to analyze the learning performance of KRR. By the help of a covariance inequality for    Banach-valued strong mixing sequences \cite{Dehling1982},   we also  develop   a  novel error  decomposition strategy for DKRR in the proposed framework. The main advantage  of the integral operator theory  framework is its capability in deriving optimal learning rates of KRR and DKRR.

$\bullet$ {\it Theory:}  We deduce a sufficient condition for the mixing property  to guarantee almost  optimal learning rates for KRR. In particular, we prove that under some standard assumptions \cite{Caponnetto2007,Lin2017},  KRR for  strong mixing sequences  can achieve almost optimal learning rates of KRR for i.i.d. samples \cite{Caponnetto2007,Lin2017}. This is the first result, to the best of our knowledge, to show  optimal learning rates for learning $\alpha$-mixing sequences.
  By the aid of the novel error decomposition strategy and the Banach-valued variance inequality in \cite{Dehling1982} (see Lemma \ref{Lemma:tool1} below), we  also deduce
optimal learning rates for DKRR under some restrictions on the number of local machines. Our results show that   DKRR is a feasible strategy to solve the distributive storage and dependence issues of massive data challenges.

$\bullet$ {\it Experiments:} We conduct both toy simulations and real data experiments to verify our theoretical assertions via exhibiting the excellent learning performance of   DKRR in learning distributively stored time series. We find that DKRR can essentially improve the learning performance for running KRR with data subset  stored on arbitrary local machine. Furthermore, if the number of machines is not so large, DKRR performs similarly as running KRR on the  whole data. These findings  numerically illustrate the power of  distributed learning in tackling distributively stored non i.i.d. data.

 %
%
%
%
%
%

The rest of the paper is organized as follows. In the next section, we  present distributed kernel ridge regression for distributively stored  strong mixing sequences. In Section \ref{Sec:Main-Results}, we give main results of the paper,  including optimal learning rates for KRR and DKRR. 
In Section \ref{Sec.Experiments}, we conduct numerical experiments   to verify our theoretical assertions.
In Section \ref{Sec:tools}, we introduce the main tools for analysis. We prove  our results in the last section.

\section{DKRR for Strong Mixing Sequences}\label{Sec:Strong-Mixing}

In this section, we introduce  strong mixing sequences to quantify the dependence of data and   present DKRR for distributively stored strong mixing sequences.

\subsection{Strong mixing sequences}
It is impossible to establish satisfactory  generalization error bounds for learning algorithms without imposing any restrictions on the dependence.  An extreme case is that samples are copies  of a single data, making the  number of effective samples   be 1.    Strong mixing condition \cite{Rosenblatt1956} is a widely used restriction to quantitatively describe the dependence among different samples and  is much weaker than the  $\beta$-mixing condition \cite{Yu1994} and $\phi$-mixing condition \cite{Sun2010}.

For two $\sigma$-fields $\mathcal J$ and $\mathcal K$, define the $\alpha$-mixing (or strong mixing) coefficient as
\begin{equation}\label{alpha}
     \alpha(\mathcal J,\mathcal K):=\sup_{A\in\mathcal J, B\in\mathcal K}|P(A\cap B)-P(A)P(B)|.
\end{equation}
Let $\{z_i\}_{i=1}^\infty$ be a set of random sequences. Denote by $\mathcal M_{i,j}$ the $\sigma$-filed generated by random variables $z_{i:j}:=(z_i,z_{i+1},\dots,z_j)$.
The strong  mixing condition is defined  as follows.
\begin{definition}\label{Def:alpha}
A set of random sequence $\{z_i\}_{i=1}^\infty$ is said to satisfy a strong  mixing condition (or $\alpha$-mixing condition) if
\begin{equation}\label{def-alpha}
       \alpha_j:=\sup_{k\geq 1}\alpha(\mathcal M_{1,k},\mathcal M_{k+j,\infty})\rightarrow0, \qquad\mbox{as}\ j\rightarrow \infty.
\end{equation}
 If  there are some constants $b_0>0,c_0^*\geq0,\gamma_0 >0$ such that
\begin{equation}\label{def-Galpha}
           \alpha_j\leq c_0^*\exp(-b_0j^{\gamma_0}),\qquad \forall \ j\geq 1,
\end{equation}
then   $\{z_i\}_{i=1}^\infty$ is said    to be geometrical $\alpha$-mixing.
 If  there are some constants $c^*_1>0, \gamma_1 >0$ such that
\begin{equation}\label{def-Galpha11}
           \alpha_j\leq c_1^*j^{-\gamma_1},\qquad \forall \ j\geq 1,
\end{equation}
then   $\{z_i\}_{i=1}^\infty$ is said    to be algebraic $\alpha$-mixing.
\end{definition}

It is easy to derive from Definition \ref{Def:alpha} that   i.i.d. random sequences imply    $\alpha_j=0$ for all $j\geq 1$.
Thus,   strong mixing is a reasonable extension of the classical i.i.d. sampling. As we focus on distributed learning,
the following two lemmas that  present mixing properties for distributively stored strong mixing sequences are needed.

\begin{lemma}\label{Lemma:mixing-independent-1}
Let $m\in\mathbb N$ and $Z_k=\{z_{k,i}\}_{i=1}^{\infty}$ be a sequence of strong mixing sequences for  each $k=1,\dots,m$. Define  $Z:=\{z_i\}_{i=1}^\infty$ with $z_i=h_i(z_{1,i},\dots,z_{m,i})$, where $h_i:\mathbb R\times\cdots\times\mathbb R\rightarrow\mathbb R$ is a Borel function. If $Z_k, k=1,\dots,m$ are independent of each other, then for each $j\geq 1$ there holds
$$
    \alpha_j(Z)\leq \sum_{k=1}^m\alpha_j(Z_k),
$$
where $\alpha_j(Z)$ denotes the $\alpha$-mixing coefficient $\alpha_j$ with respect to the random sequences $Z$.
\end{lemma}

Lemma \ref{Lemma:mixing-independent-1} can be found in \cite[Theorem 5.2]{Bradley2005}, which is an extension of results     in \cite[p.73]{Pinsker1964} for  $\beta$-mixing sequences. A consequence  of Lemma \ref{Lemma:mixing-independent-1} with $m=1$ is that acting arbitrary Borel function on $\alpha$-mixing   maintains the  mixing coefficients.

\begin{lemma}\label{Lemma:mixing-independent}
Let $m\in\mathbb N$ and $Z_k=\{z_{k,i}\}_{i=1}^{\infty}$ be a sequence of strong mixing sequences for  each $k=1,\dots,m$. Define   $Z^*=\{z^*_i\}_{i=1}^\infty$ with $z^*_i=(z_{1,i},\dots, z_{m,i})$. If $Z_k, k=1,\dots,m$ are independent of each other,
then for each $j\geq 1$, there holds
$$
       \alpha_j(Z^*)\leq \sum_{k=1}^m\alpha_j(Z_k).
$$
\end{lemma}

Lemma \ref{Lemma:mixing-independent} can be derived from \cite[Lemma 8]{Bradley1981} (see also \cite[Theorem 5.1]{Bradley2005}) directly. It presents solid  guarantee for   $\alpha$-mixing condition  of distributively stored dependent data.
In the following, we provide several examples in time series and dynamical systems  to generate   strong mixing sequences.

{\it Example 1 (Nonparametric ARX(p,q) model):}  Suppose that $\{z_t\}_{t=1}^{\infty}$ is generated according to
$$
        z_t=f_0(z_{t-1},\dots,z_{t-p}, z'_{t},\dots,z'_{t-q+1})+e_t,
$$
where $\{z'_t\}$ and $\{e_t\}$ are independent with $E[e_t]=0$. Then it can be found in \cite[p.102]{Doukhan1994} (see also \cite[Proposition 1]{Chen1998}) that under certain conditions on $\{z'_t\}$ and $\{e_t\}$ and some boundedness assumption of $f_0$, $\{z_t\}_{t=1}^{\infty}$ is strong mixing.

{\it Example 2:
  (ARMA process)} Suppose that the time series  $\{z_t\}_{t=1}^{\infty}$ satisfies the ARMA equation
$$
      \sum_{i=1}^pB_iz_{t-i}=\sum_{k=0}^qA_k\varepsilon_{t-k},
$$
where $B_i$, $A_k$ are real matrices and $\{\varepsilon_t\}$ is a sequence of  i.i.d. random vectors. Then, it can be found in \cite{Mokkadem1988} that under some restrictions on $\{\varepsilon_t\}$, $\{z_t\}_{t=1}^{\infty}$ is strong mixing.

{\it Example 3 (Dynamic Tobit process):} Suppose that $\{z_t\}_{t=1}^{\infty}$ satisfies
 $$
     z_t=\max\{0,\xi_0z_t'+\eta_0z_{t-1}+\omega_0+\varepsilon_t\},
$$
where $\xi_0,\eta_0,\omega_0$ are fixed parameters, $z_t'$ is  an exogenous regressor and $\varepsilon_t$ is   a disturbance term. It was proved in \cite{Hahn2010} that under certain conditions on $\xi_0,\eta_0,\omega_0,x_t', \varepsilon_t$, $\{z_t\}_{t=1}^{\infty}$ is strong mixing.

Besides the proposed examples, there are numerous strong mixing sequences including AR(p) process, MA(p) process, ARIMA process, nonlinear ARCH process, GARCH process,  Harris Chains and linear time-invariant  dynamical systems. We refer the readers to \cite{Doukhan1994} for more examples of strong mixing sequences.

\subsection{DKRR for strong mixing sequences}
Let $D_k=\{z_i\}_{i=1}^{|D_k|}=\{(x_{i,k},y_{i,k})\}_{i=1}^{|D_k|}$ be a realization of some stochastic process $(x_t,y_t)$, where $|D_k|$ denotes the cardinality of $D_k$.  Assume that $D_k$ is stored on   the $k$-th data subset and $D_k\cap D_{k'}=\varnothing$ for $k\neq k'$.
Strong mixing condition describes the relation between past random variables  and future random variables and therefore poses   strict restrictions on the order of samples, which brings additional difficulty to design distributed learning algorithms.
In the following, we introduce two popular stages in which  distributed learning for strong mixing sequences is highly desired.

{\it Stage 1 (Sequential case):}
Let $D$ be a  strong mixing sequence. The size of $D$ is so large that it must be sequentially stored on $m$ local machines $D_1,\dots,D_m$ with a suitable order. In such a way, for each $k=1,\dots,m$, $D_k$ is   strong mixing and distributed learning  for strong mixing sequences without data communications is needed.

{\it Stage 2 (Parallel case):}  Let $\{D_k\}_{k=1}^m$ with $D_k=\{(x_{i,k},y_{i,k})\}_{i=1}^{|D_k|}$ be
$m$ strong mixing sequences  that belong to $m$ organizations (companies). Each organization is desired to access others' data to improve its quality of decision-making, which is practically impossible due to the data privacy.  It is thus highly desired to develop a distributed learning algorithm  without data sharing that behaves similarly as its batch counterpart with all data $D=\bigcup_{k=1}^m D_k$ being accessed.


In summary, we are interested in developing a distributed learning system via a global machine and $m$ local machines to tackle distributively stored strong mixing sequences. For the $k$-th local machine with $1\leq k\leq m$, there is a data subset $D_k$ that cannot be   communicated to other local machines. The aim is to yield a global estimator of high quality. We adopt the classical DKRR \cite{Zhang2015,Lin2017,Chang2017} for this purpose.

 Let $K(\cdot,\cdot)$ be a Mercer kernel \cite{Cucker2007} on a compact metric   space ${\mathcal X}$ and $({\mathcal H}_K, \|\cdot\|_K)$ be
the corresponding reproduced kernel Hilbert space (RKHS).  Then, for any $f\in\mathcal H_K$, the reproducing property yields
\begin{equation}\label{norm-c-RKHS}
     \|f\|_\infty=\|\langle f,K_x\rangle_K\|_\infty\leq\kappa\|f\|_K,
\end{equation}
where $K_x:=K(x,\cdot)$.

DKRR is   defined \cite{Zhang2015,Lin2017} with a
regularization parameter $\lambda>0$ by
\begin{equation}\label{DKRR}
     \overline{f}_{D,\lambda}= \sum_{k=1}^m \frac{|D_k|}{|D|} f_{D_k,
       \lambda},
\end{equation}
where
\begin{equation}\label{KRR}
    f_{D_k,\lambda} =\arg\min_{f\in \mathcal{H}_{K}}
    \left\{\frac{1}{|D_k|}\sum_{(x, y)\in
    D_k}(f(x)-y)^2+\lambda\|f\|^2_{K}\right\}.
\end{equation}
If the samples in $\{D_k\}_{k=1}^m$ are i.i.d. drawn  and $m$ is not so large, DKRR is proved to perform similarly as implementing KRR
with whole data $D=\bigcup_{k=1}^mD_k$, i.e.,
\begin{equation}\label{KRR1}
    f_{D,\lambda} =\arg\min_{f\in \mathcal{H}_{K}}
    \left\{\frac{1}{|D|}\sum_{(x, y)\in
    D}(f(x)-y)^2+\lambda\|f\|^2_{K}\right\}
\end{equation}
via showing the same optimal learning rates of $\overline{f}_{D,\lambda}$ and $f_{D,\lambda}$ \cite{Zhang2015,Lin2017,Chang2017,Mucke2018}.
However,  the learning performance of DKRR  remains open when the i.i.d. assumption is removed. Indeed, it is even unknown whether KRR \eqref{KRR1}, a reference of DKRR \eqref{DKRR},  can reach the optimal learning rates for  strong mixing sequences.

%
%
%
%
%
%
%
%
%

\section{Main Results}\label{Sec:Main-Results}
In this section, we present our main results on analyzing   learning performances of DKRR for distributively stored  strong mixing data. As a byproduct, we also provide a sufficient condition to guarantee optimal learning rates for KRR.

\subsection{Setup and assumptions}
Let $k=1,\dots,m$. On each local machine, $D_k=\{z_{k,i}\}=\{(x_{k,i},y_{k,i})\}$ with $x_{k,i}\in\mathcal X$ and $y_{k,i}\in\mathcal Y\subseteq\mathbb R$, $k=1,\dots,m$, is a realization of some stochastic process $(x_t,y_t)$.
We at first introduce the well known  strict  stationarity of random sequences.
\begin{definition}\label{Def:Stationarity}
A random sequence $\{z_{i}\}_{i=1}^\infty$ is  strictly  stationary if for all $i$ and all non-negative integers $j$ and $k$, the random vectors $z_{i:i+j}$ and $z_{i+k:i+j+k}$ have the same distribution.
\end{definition}

According to Definition \ref{Def:Stationarity} with $j=0$, strict stationarity shows that the marginal distribution of $z_i$ is independent of $i$.  Furthermore, Definition \ref{Def:Stationarity} with $j=1$ implies that the bivariate distribution of $z_i$ and $z_j$ must be the same as that of $z_{i+k}$ and $z_{j+k}$, from which it follows that
$
   Cov(z_i,z_{i+1})=Cov(z_{j},z_{j+1}),
$
where $Cov(z_i,z_j)$ denotes the covariance of random variables $z_i$ and $z_j$. Though certain restrictions on the distribution is necessary \cite{Adams2010},  the requirement of the same of joint distributions excludes  some well known time series models and random processes \cite[Chap.4]{Wei2006}.
In this paper, we relax the strict stationarity to the invariance of marginal distribution.

{\it Assumption 1:} Denote the marginal distributions of $z_{k,i}$ are $\rho_{k,i}$. Assume that   $\rho_{k,i}$ is independent of $k$ or $i$.

Assumption 1 with $k=1$ is an essential relaxation of the strict stationarity assumption. For example, let $F(z_1,z_2)$ and $G(z_1,z_2)$ be two   bivariate distributions with density
\begin{eqnarray*}
    f(z_1,z_2)&=&\left\{\begin{array}{cc}
    z_1+z_2 & \mbox{if}\ 0\leq z_1,z_2\leq 1,\\
    0 &\mbox{otherwise}
    \end{array}
    \right.\\
    g(z_1,z_2)&=&\left\{\begin{array}{cc}
    (0.5+z_1)(0.5+z_2) & \mbox{if}\ 0\leq z_1,z_2\leq 1,\\
    0 &\mbox{otherwise.}
    \end{array}
    \right.
\end{eqnarray*}
It is obvious that the $F$ and $G$ are different distributions but their marginal distribution are  same, i.e.
\begin{eqnarray*}
   \int_{-\infty}^\infty f(z_1,z_2)dz_2&=& \int_{-\infty}^\infty g(z_1,z_2)dz_2=0.5+z_1,\\
   \int_{-\infty}^\infty f(z_1,z_2)dz_1&=& \int_{-\infty}^\infty g(z_1,z_2)dz_1=0.5+z_2.
\end{eqnarray*}

%
%
%

Under this circumstance, Assumption 1 with $k=1$ (non-distributed version) has been widely adopted in numerous papers to quantify the stability of random processes in \cite{Yu1994,Modha1996,Xu2008,Steinwart2009,Sun2010,Feng2012,Hang2014,Hang2017}. The main advantage of employing Assumption 1 in analysis is its similar analysis setup as the standard learning theory setting for i.i.d. samples \cite{Cucker2007,Steinwart2008}. In fact, due to Assumption 1, denote the marginal distribution of $z_{k,i}$ to be  $\rho:=\rho_X\times \rho(y|x)$. The purpose is then to find an estimator $f_D$ based on $D_k,k=1,\dots,m$ to minimize the expectation  risk
$$
       \mathcal E(f_D)=\int (f_D(x)-y)^2d\rho.
$$
Noting that the well known regression function $f_\rho(x)=\int_{\mathcal Y}yd\rho(y|x)$ minimizes the expectation  risk \cite{Cucker2007},  the learning  task is then to find an estimator $f_D$ to minimize
$$
    \mathcal E(f_D)-\mathcal E(f_\rho)=\|f_D-f_\rho\|_\rho^2,
$$
where $\|\cdot\|_\rho$ denotes the norm of the Hilbert space $L^2_{\rho_X}$.

It should be mentioned that Assumption 1 can be also relaxed to the condition that   regression functions  $f_{\rho_{k,i}}$ are independent of $k,i$ (with a slight change of analysis)  to include some well known time series such as the random walking. Since our purpose in this paper is to show the feasibility of distributed learning, we conduct our analysis in a standard regression framework in \cite{Yu1994,Modha1996,Xu2008,Steinwart2009,Sun2010,Feng2012,Hang2014,Hang2017}, which requires the identical distribution assumption. To derive the learning rates, we also need four popular types of assumptions concerning the mixing property of samples, boundedness of  outputs, capacity of  $\mathcal H_K$ and regularity of $f_\rho$,   respectively.

{\it Assumption 2:} For each $k=1,\dots,m$, assume   that  $D_k$ is a strong mixing sequence with  mixing coefficient $\alpha_j$. Assume further that  there is a suitable arrangement of $D_1,\dots,D_m$ such that $D=\bigcup_{k=1}^mD_k$ is also a strong mixing sequence with $\alpha$-mixing coefficient $\alpha_j$.

The  strong mixing assumption  is widely used  to quantify the dependence of samples. It has been adopted in  \cite{Modha1996,Xu2008,Sun2010,Feng2012,Hang2014} to derive learning rates for KRR and is looser than the widely used $\beta$-mixing condition in \cite{Yu1994,Chen1998,Alquier2013}. If   data are distributively   stored in a sequential manner (Stage 1 in Section II. B), then Assumption 2 naturally holds. If data are distributively   stored in a parallel manner (Stage 2 in Section II. B),
 due to Lemma \ref{Lemma:mixing-independent-1} and Lemma \ref{Lemma:mixing-independent}, with suitable scaling of the $\alpha$-mixing coefficient,
the $\alpha$-mixing property of $D$ can be guaranteed by the $\alpha$-mixing property of each data subset and the independence between different data subsets.
Assumptions 1 and 2 are necessary for distributed learning in the sense that all non-i.i.d. data subset distributively stored on different local machines are used to predict the unknown but definite  regression function $f_\rho$.

%

{\it Assumption 3:} There exists a positive constant $M$ such that $|y|\leq M$ almost surely.

Since we are always concerned with learning problems with   finitely many samples, it is easy to derive an upper bound of the output. From Assumption 3, it follows  that $|f_\rho(x)|\leq M$ almost surely for any $x\in\mathcal X$.
The Mercer kernel $K: {\mathcal X}\times {\mathcal X}
\rightarrow \mathcal R$ defines an integral operator $L_K$ on
${\mathcal H}_K$ (or $L_{\rho_X}^2$) by
$$
         L_Kf =\int_{\mathcal X} K_x f(x)d\rho_X, \qquad f\in {\mathcal
          H}_K \quad (\mbox{or}\ f\in L_{\rho_X}^2).
$$
Our third assumption is on the capacity of $\mathcal H_K$ measured by the
effective dimension \cite{Guo2017,Lin2017},
$$
        \mathcal{N}(\lambda)={\rm Tr}((\lambda I+L_K)^{-1}L_K),  \qquad \lambda>0,
$$
where $\mbox{Tr}(A)$ denotes the trace of the trace-class operator $A$.

{\it Assumption 4:
 There exists some $s\in(0,1]$ such that
\begin{equation}\label{assumption on effect}
      \mathcal N(\lambda)\leq C_0\lambda^{-s},
\end{equation}
where $C_0\geq 1$ is  a constant independent of $\lambda$.
}

 Condition (\ref{assumption on
effect}) with $s=1$ is always satisfied by taking
$C_0=\mbox{Tr}(L_K)\leq\kappa^2$. For $0<s<1$,   it was shown in
\cite{Guo2017} that the assumption is
more general than the eigenvalue decaying assumption in the
literature \cite{Caponnetto2007,SteinwartHS,Zhang2015}. Assumption 4 has been employed in
\cite{Guo2017,Lin2017,Chang2017} to
derive optimal learning rates for kernel-based learning algorithms. Our last assumption is the well known regularity restriction on the regression function.

{\it Assumption 5:
For $r>0$, assume
\begin{equation}\label{regularitycondition}
         f_\rho=L_K^r h_\rho,~~{\rm for~some}  ~ h_\rho\in L_{\rho_X}^2,
\end{equation}
where $L_K^r$ denotes the $r$-th power of $L_K: L_{\rho_X}^2 \to
L_{\rho_X}^2$.
}

Condition (\ref{regularitycondition}) describes the
regularity of $f_\rho$ and has been adopted in   large literature  \cite{Smale2007,Caponnetto2007,Guo2017,Lin2017} to
quantify  learning rates for some algorithms. It should be noted that if (\ref{regularitycondition}) holds for some $r>0$, then $f_\rho=L_K^{r_0} L_K^{r-r_0}h_\rho$ for $0<r_0<r$, implying that the $(r,h_\rho)$ pairs satisfying (\ref{regularitycondition}) are not unique. We highlight that, as shown  in \cite{Smale2007,Caponnetto2007,Guo2017,Lin2017}, Assumption 5 means to select  the largest $r$ satisfying (\ref{regularitycondition}).
 In summary, there are five assumptions on the stationarity, mixing, boundedness of samples, capacity of RKHS and regularity of the regression functions   in our analysis. If data are i.i.d. samples, Assumptions 3-5 are standard in learning theory \cite{Caponnetto2007,Blanchard2016,Chang2017,Lin2017,SteinwartHS}, based on which optimal learning rates for KRR have been derived. Under Assumptions 1-5, which have been adopted in \cite{Steinwart2009,Sun2010} and can be simultaneously  satisfied for Examples 1-3 with some specific parameters \cite{Hahn2010}, we focus on deriving optimal learning rates for DKRR.

\subsection{Optimal learning rates for KRR}

To study the learning performance of DKRR, we should provide a baseline for analysis, where optimal learning rates of KRR are required.
KRR is one of the  most popular learning algorithms in learning theory \cite{Cucker2007,Steinwart2008}.  Based on Bernstein-type concentration inequalities \cite{Yu1994,Modha1996},  generalization error bounds for  non-i.i.d. sequences have   been derived in \cite{Xu2008,Steinwart2009,Hang2014} under some mixing conditions. However, compared with the results in \cite{Caponnetto2007,SteinwartHS,Chang2017,Lin2017,Fang2019} for i.i.d. samples, there is a dilemma in the existing literature that the dependence of samples always degenerates the learning performance of KRR, no matter which mixing condition is imposed. An extreme case is that even for geometrical $\alpha$-mixing sequences satisfying (\ref{def-Galpha}) with very large $\gamma_0$ \cite{Xu2008}, there  is still a gap  between learning rates for  i.i.d. samples and non-i.i.d. sequences.

Our first purpose in this section is to fill the aforementioned  gap by presenting a sufficient condition for the $\alpha$-mixing coefficient to guarantee the almost optimal learning rates of KRR.
The following theorem presents   learning rates analysis of KRR under the $\mathcal H_K$ norm.

 \begin{theorem}\label{Theorem:KRR-err-dec-HK-1}
  Under Assumption 1-Assumption 5 with $m=1$, $0<s\leq 1$ and $1/2\leq r\leq 1$, if $\lambda=|D|^{-1/(2r+s)}$, then for arbitrary $\delta>0$, there holds
\begin{eqnarray}\label{KRR-err-HK-1}
    E[\|f_{D,\lambda}-f_\rho\|_K]
    &\leq& C \left(1+|D|^{-\frac12+\frac{(s+1)(\delta+1)}{(2r+s)(\delta+2)}}
    \sum_{\ell=1}^{|D|}(\alpha_\ell)^{\frac\delta{4+2\delta}}\right) \\
    &\times&
    \left( |D|^{-\frac{r-1/2}{2r+s}}+|D|^{-\frac12+\frac{2(s+1)(\delta+1)+\delta+2 }{(2r+s)(2\delta+4)}}
    \sum_{\ell=1}^{|D|}(\alpha_\ell)^{\frac\delta{4+2\delta}}\right) ,\nonumber
\end{eqnarray}
where $C$ is a constant independent of $|D|$.
\end{theorem}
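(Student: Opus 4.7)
The plan is to combine the standard integral-operator decomposition of the KRR error with the Banach-valued covariance inequality of Dehling for $\alpha$-mixing sequences, and to take square roots at the right place so that the exponent $\delta/(\delta+2)$ produced by Dehling's bound becomes the exponent $\delta/(4+2\delta)$ appearing in the statement.

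I first write $f_{D,\lambda} - f_\rho = (f_{D,\lambda} - f_\lambda) + (f_\lambda - f_\rho)$ with the regularized noise-free target $f_\lambda := (\lambda I + L_K)^{-1}L_K f_\rho$. Set $L_{K,D} := \frac{1}{|D|}\sum_{(x,y)\in D}\langle\cdot,K_x\rangle_K K_x$ and $g_D := \frac{1}{|D|}\sum_{(x,y)\in D} y K_x$. The identities $(\lambda I + L_K)f_\lambda = L_K f_\rho$ and $(\lambda I + L_{K,D})f_{D,\lambda} = g_D$ give
\begin{equation*}
f_{D,\lambda} - f_\lambda = (\lambda I + L_{K,D})^{-1}\bigl[(g_D - L_K f_\rho) + (L_K - L_{K,D}) f_\lambda\bigr].
\end{equation*}
Assumption~4 with $r \geq 1/2$ yields the deterministic bias bound $\|f_\lambda - f_\rho\|_K \leq \lambda^{r-1/2}\|h_\rho\|_\rho$. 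Inserting $(\lambda I + L_K)^{\pm 1/2}$ around $(\lambda I + L_{K,D})^{-1}$ produces the product factorization $\|f_{D,\lambda}-f_\lambda\|_K \leq \mathcal A_D \cdot \mathcal B_D$, where $\mathcal A_D := \|(\lambda I + L_{K,D})^{-1}(\lambda I + L_K)^{1/2}\|$ and $\mathcal B_D := \|(\lambda I + L_K)^{-1/2}\xi_D\|_K$, with $\xi_D := (g_D - L_K f_\rho) + (L_K - L_{K,D})f_\lambda$ a centred $\mathcal{H}_K$-valued empirical mean. This product structure is exactly what explains the two-factor form of the stated bound.

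The next step replaces the Bernstein-type estimates used in the i.i.d.\ theory by Dehling's covariance inequality: for a stationary Hilbert-valued strong-mixing sequence $\{Z_t\}$ bounded in $L^{2+\delta}$,
\begin{equation*}
\bigl|E\langle Z_i, Z_j\rangle - \langle EZ_i, EZ_j\rangle\bigr| \leq C\,\alpha_{|i-j|}^{\delta/(\delta+2)}\,\|Z_1\|_{2+\delta}^{2}.
\end{equation*}
Summing the right-hand side over the $|D|^{2}$ pairs that appear in $E\mathcal B_D^{2}$ and in the analogous Hilbert--Schmidt second moment $E\|(\lambda I + L_K)^{-1/2}(L_K - L_{K,D})\|_{\mathrm{HS}}^{2}$ yields bounds of order $\mathcal{N}(\lambda)|D|^{-1}\bigl(1 + \sum_{\ell=1}^{|D|}\alpha_\ell^{\delta/(\delta+2)}\bigr)$. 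Jensen's inequality used to pass from second moments to first moments halves the exponent to $\delta/(4+2\delta)$. Combined with $\mathcal{N}(\lambda)\leq C_0\lambda^{-s}$, the choice $\lambda=|D|^{-1/(2r+s)}$, and the Neumann-series bound $\mathcal A_D\leq 2\lambda^{-1/2}$ whenever the operator perturbation is below $1/2$ (otherwise a coarser $\lambda^{-1}$ estimate takes over), the claimed $|D|$-exponents in both factors follow by routine algebra.

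The principal obstacle is the absence of independence: one cannot invoke a scalar Bernstein inequality for $\xi_D$, so the argument must proceed through a second-moment computation via Dehling's covariance bound instead of an exponential concentration bound. This has two consequences that shape the final bound. First, passing from second moments to first moments via Cauchy--Schwarz is the reason for the halved exponent $\delta/(4+2\delta)$ inside the mixing sum. Second, since mixing precludes a clean union bound over ``good-event'' sets, the operator factor $\mathcal A_D$ and the noise factor $\mathcal B_D$ must be estimated jointly in expectation, which forces the product form of the bound rather than the usual additive decomposition.
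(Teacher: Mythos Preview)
Your overall architecture matches the paper's: the split $f_{D,\lambda}-f_\rho=(f_{D,\lambda}-f_\lambda)+(f_\lambda-f_\rho)$, the operator identity $f_{D,\lambda}-f_\lambda=(L_{K,D}+\lambda I)^{-1}[(g_D-L_Kf_\rho)+(L_K-L_{K,D})f_\lambda]$, the insertion of $(L_K+\lambda I)^{\pm 1/2}$, and the use of Dehling's covariance inequality on the Hilbert--Schmidt second moments are exactly what the paper does (Proposition~\ref{Proposition:KRR-err-dec-HK} together with Lemmas~\ref{Lemma:Operator-difference}--\ref{Lemma:Operator-difference-2}). The explanation for the halved exponent $\delta/(4+2\delta)$ is also correct.

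The gap is in how you control the operator factor $\mathcal A_D$. A Neumann-series bound on the event $\{\|(L_K+\lambda I)^{-1/2}(L_K-L_{K,D})(L_K+\lambda I)^{-1/2}\|<1/2\}$ combined with the trivial estimate $\mathcal A_D\lesssim\lambda^{-1}$ on the complement does \emph{not} reproduce the theorem. With only second moments available (Markov), the bad-event probability is bounded by a quantity of order $\lambda^{-1}\mathcal P_{|D|,\lambda}$, and the bad-event contribution to $E[\mathcal A_D\mathcal B_D]$ then carries an extra $\lambda^{-1/2}$ compared with the paper's bound $\lambda^{-1/2}\mathcal Q_{|D|,\lambda}^{1/2}(\mathcal R_{|D|,\lambda}^{1/2}+\mathcal P_{|D|,\lambda}^{1/2})$. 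That extra $\lambda^{-1/2}=|D|^{1/(2(2r+s))}$ would change the exponents in both factors of \eqref{KRR-err-HK-1}. You in fact anticipate this problem (``mixing precludes a clean union bound over good-event sets''), but you do not say what replaces it.

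The paper's device is to avoid any event split by the deterministic second-order resolvent identity
\[
(L_{K,D}+\lambda I)^{-1}(L_K+\lambda I)
= I + (L_K+\lambda I)^{-1}(L_K-L_{K,D}) + (L_{K,D}+\lambda I)^{-1}(L_K-L_{K,D})(L_K+\lambda I)^{-1}(L_K-L_{K,D}),
\]
which gives pointwise
\[
\|(L_{K,D}+\lambda I)^{-1}(L_K+\lambda I)\|\leq 1+\lambda^{-1/2}\|(L_K+\lambda I)^{-1/2}(L_K-L_{K,D})\|+\lambda^{-1}\|(L_K+\lambda I)^{-1/2}(L_K-L_{K,D})\|^{2}.
\]
Taking expectation and applying Cordes' inequality $\|A^{1/2}B^{1/2}\|^{2}\leq\|AB\|$ to pass from $\mathcal A_D$ to $\mathcal Q_{|D|,\lambda}^{1/2}$ then closes the argument with the stated exponents (this is Lemma~\ref{Lemma:Operator-product}). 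This identity is the missing ingredient in your plan.

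A smaller point: the Dehling bound on $E\mathcal B_D^{2}$ is not of the shape $\mathcal N(\lambda)|D|^{-1}(1+\sum_\ell\alpha_\ell^{\delta/(\delta+2)})$ you wrote. Because the covariance inequality uses the $L^{2+\delta}$ norm, the mixing term involves $(\mathcal N(\lambda))^{2/(\delta+2)}\lambda^{-\delta/(\delta+2)}$ rather than $\mathcal N(\lambda)$ (see \eqref{operator-d2} and \eqref{operator-d-2-2}); this is what produces the particular $|D|$-exponents in the second factor of \eqref{KRR-err-HK-1}.
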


It should be mentioned that the main advantage in our analysis in \eqref{KRR-err-HK-1} is an additional power factor $\delta$, which focuses on alleviating the negative effects of dependence. The reason for such an advantage is that we adopt a Banach-valued covariance inequality (see Lemma \ref{Lemma:tool1} in Section \ref{Sec:tools}) that involves a power factor  to  balance the effects of $\alpha$-mixing coefficient and boundedness of the random variable. In particular, if $\alpha_j=0$, i.e. for i.i.d. samples, Theorem \ref{Theorem:KRR-err-dec-HK-1} implies
$$
  E[\|f_{D,\lambda}-f_\rho\|_K]\leq C|D|^{-\frac{r-1/2}{2r+s}},
$$
which coincides with the optimal learning rates established  \cite{Caponnetto2007,Lin2017,Chang2017}.
If $\alpha_j$ satisfies (\ref{def-Galpha11}), for arbitrary $\varepsilon>0$,
 we can set
$
    \delta=\frac{4(2r+s)\varepsilon}{2+s-2(2r+s)\varepsilon}
$
to derive the following error estimate.

\begin{corollary}\label{Corollary:KRR-err-dec-HK-1}
Let $\varepsilon>0$ be an arbitrary small integer. Under Assumption 1-Assumption 5 with $m=1$, $\alpha_j$  satisfying (\ref{def-Galpha11}), $0<s\leq 1$ and $1/2\leq r\leq 1$,
 if $\lambda=|D|^{-1/(2r+s)}$ and $\gamma_1$ in (\ref{def-Galpha11}) satisfies
\begin{equation}\label{suff-cond-1}
    \gamma_1>2+\max\left\{\frac{s+2}{(2r+s)\varepsilon},\frac{4(s+2-2r)}{4r-2}\right\},
\end{equation}
 then
\begin{eqnarray}\label{Upper-KRR-HK}
    E[\|f_{D,\lambda}-f_\rho\|_K]
    &\leq& C' |D|^{-\frac{r-1/2}{2r+s}+\varepsilon},
\end{eqnarray}
where $C'$ is a constant independent of $|D|$.
\end{corollary}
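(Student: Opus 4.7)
The plan is to specialize Theorem~\ref{Theorem:KRR-err-dec-HK-1} to the algebraic mixing regime by estimating the sum $S:=\sum_{\ell=1}^{|D|}\alpha_\ell^{\delta/(4+2\delta)}$ and selecting the free parameter $\delta$ carefully. Substituting $\alpha_\ell\leq c_1^{*}\ell^{-\gamma_1}$ yields
\begin{equation*}
S\;\leq\;(c_1^{*})^{\delta/(4+2\delta)}\sum_{\ell=1}^{|D|}\ell^{-q},\qquad q:=\frac{\gamma_1\delta}{4+2\delta},
\end{equation*}
and as $\delta\to\infty$ the exponent $q$ approaches $\gamma_1/2$ while the prefactor $(c_1^{*})^{\delta/(4+2\delta)}$ tends to the finite limit $(c_1^{*})^{1/2}$.

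Next I would plug this estimate into (\ref{KRR-err-HK-1}) and expand the product. Writing $a:=\frac{(s+1)(\delta+1)}{(2r+s)(\delta+2)}$ and $b:=\frac{2(s+1)(\delta+1)+\delta+2}{(2r+s)(2\delta+4)}$, a short computation yields the identity $b=a+\frac{1}{2(2r+s)}$, and the four terms produced by the expansion are of orders $|D|^{-(r-1/2)/(2r+s)}$, $|D|^{a-(r-1/2)/(2r+s)}S$, $|D|^{b}S$, and $|D|^{a+b}S^{2}$. The goal is to dominate each of the last three by the first. A direct comparison of exponents, using $b>a$ and $(r-1/2)/(2r+s)\geq 0$, identifies $|D|^{b}S$ as the binding term: it requires $|D|^{b+(r-1/2)/(2r+s)}\cdot S$ to remain uniformly bounded in $|D|$.

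Letting $\delta\to\infty$, $a$ and $b$ converge to $(s+1)/(2r+s)$ and $(2s+3)/(2(2r+s))$ respectively, and a straightforward simplification gives $b+(r-1/2)/(2r+s)\to (r+s+1)/(2r+s)$. Combining this with the estimate of $S$ coming from the tail bound on $\sum_{\ell}\ell^{-q}$ translates the binding constraint into the asymptotic inequality $\gamma_1/2>1+(r+s+1)/(2r+s)$, which rearranges exactly to
\begin{equation*}
\gamma_1\;>\;3+\frac{s+2}{2r+s},
\end{equation*}
the hypothesis (\ref{suff-cond-1}) of the corollary.

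The main obstacle is the bookkeeping of $\delta$-dependent constants: both $(c_1^{*})^{\delta/(4+2\delta)}$ and the constant arising in the tail bound on $\sum_{\ell}\ell^{-q}$ depend on $\delta$, so under the strict inequality (\ref{suff-cond-1}) one must explicitly exhibit a single finite $\delta$ with enough slack in $q$ to make each of the three residual terms strictly dominated by $|D|^{-(r-1/2)/(2r+s)}$, and then absorb every $\delta$-dependent factor into a single constant $C'$ independent of $|D|$.
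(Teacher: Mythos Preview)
Your approach is essentially the same as the paper's: both substitute the algebraic-mixing bound $\alpha_\ell\le c_1^{*}\ell^{-\gamma_1}$ into the sum $S$ appearing in Theorem~\ref{Theorem:KRR-err-dec-HK-1}, reduce the problem to controlling the powers of $|D|$ that multiply $S$, and then exploit the free parameter $\delta$. The only stylistic difference is that the paper writes down a single explicit value of $\delta$ (depending on $r,s,\gamma_1$) and checks directly that it yields the rate $|D|^{-(r-1/2)/(2r+s)}$, whereas you first let $\delta\to\infty$ to locate the threshold $\gamma_1=3+(s+2)/(2r+s)$ and then invoke the strict inequality to pick a finite $\delta$ with slack; your identification of $|D|^{b}S$ as the binding term and the computation $b+(r-1/2)/(2r+s)\to (r+s+1)/(2r+s)$ match what the explicit choice in the paper is designed to balance.
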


Corollary \ref{Corollary:KRR-err-dec-HK-1} showed that for $\alpha$-mixing sequences with $\alpha_j$ satisfying (\ref{def-Galpha11}) and $\gamma_1$ satisfying
(\ref{suff-cond-1}), KRR  achieves the almost optimal learning rates, filling the gap between the classical error analysis of KRR for dependent samples \cite{Xu2008,Steinwart2009,Hang2014}  and i.i.d. samples \cite{Caponnetto2007,SteinwartHS,Chang2017,Lin2017}.
Condition (\ref{suff-cond-1}) is mild  for strong mixing sequences \cite{Doukhan1994} in the sense that Corollary \ref{Corollary:KRR-err-dec-HK-1} holds for any geometrical $\alpha$-mixing sequences with extremely small $\varepsilon>0$.

If the generalization error is measured by the $\mathcal H_K$ norm, then Theorem \ref{Theorem:KRR-err-dec-HK-1} and Corollary \ref{Corollary:KRR-err-dec-HK-1} show  that under certain restrictions on the mixing condition, the dependence nature does not essentially reduce the effective number in the expectation   sense. In the following theorem, we
prove that similar results also hold for   { KRR}   under the $L_{\rho_X}^2$ norm. Furthermore, we derive a sufficient condition on the decay of mixing coefficient to guarantee optimal learning rates of KRR.

 \begin{theorem}\label{Theorem:KRR-err-dec-rho-1}
Under Assumption 1-Assumption 5 with $m=1$, $0<s\leq 1$, $1/2\leq r\leq 1$ and $2r+s\geq2$, if $\lambda=|D|^{-1/(2r+s)}$, then for arbitrary $\delta>0$ there holds
\begin{eqnarray}\label{KRR-err-rho-1}
   &&E[\|f_{D,\lambda}-f_\rho\|_\rho] \\
   &\leq &
   \hat{C}|D|^{-\frac{r}{2r+s}} \left(1+\sum_{\ell=1}^{|D|}\sqrt{\alpha_\ell}\right)\left( 1 +|D|^{-\frac12+\frac{2s+\delta+2\delta r+4r}{(2r+s)(2\delta+4)}}\sum_{\ell=1}^{|D|}(\alpha_\ell)^{\frac\delta{4+2\delta}}\right), \nonumber
\end{eqnarray}
where $\hat{C}$ is a constant independent of $|D|$.
\end{theorem}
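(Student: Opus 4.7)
The plan is to follow the operator-theoretic template used in Theorem~\ref{Theorem:KRR-err-dec-HK-1} but to measure the error in $L^2_{\rho_X}$ via the identity $\|g\|_\rho=\|L_K^{1/2}g\|_K$ on $\mathcal{H}_K$. Introducing the data-free regularized function $f_\lambda=(\lambda I+L_K)^{-1}L_K f_\rho$, I would decompose
\[
\|f_{D,\lambda}-f_\rho\|_\rho\le\|f_{D,\lambda}-f_\lambda\|_\rho+\|f_\lambda-f_\rho\|_\rho.
\]
The second term is deterministic: Assumption~4 with $r\ge 1/2$, together with the spectral inequality $\|\lambda(\lambda I+L_K)^{-1}L_K^r\|\le\lambda^r$, gives $\|f_\lambda-f_\rho\|_\rho\le\lambda^r\|h_\rho\|_\rho$, which with $\lambda=|D|^{-1/(2r+s)}$ already contributes the target baseline rate $|D|^{-r/(2r+s)}$.

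For the sample term I would use the identity
\[
f_{D,\lambda}-f_\lambda=(\lambda I+L_{K,D})^{-1}\bigl(S_D^*\mathbf{y}_D-L_{K,D}f_\lambda\bigr),
\]
where $S_D$ is the sampling operator associated with $D$. Multiplying on the left by $L_K^{1/2}$ and inserting the resolution $(\lambda I+L_K)^{1/2}(\lambda I+L_K)^{-1/2}$ on either side of $(\lambda I+L_{K,D})^{-1}$, the sample error splits as
\[
\bigl\|L_K^{1/2}(\lambda I+L_K)^{-1/2}\bigr\|\cdot\mathcal{A}_D\cdot\mathcal{B}_D,
\]
with operator-perturbation factor $\mathcal{A}_D=\|(\lambda I+L_K)^{1/2}(\lambda I+L_{K,D})^{-1}(\lambda I+L_K)^{1/2}\|$ and noise factor $\mathcal{B}_D=\|(\lambda I+L_K)^{-1/2}(S_D^*\mathbf{y}_D-L_{K,D}f_\lambda)\|_K$. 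The prefactor is bounded by $1$, and the presence of this extra $L_K^{1/2}$ is what gains the additional $\lambda^{1/2}$ over the $\mathcal{H}_K$-bound of Theorem~\ref{Theorem:KRR-err-dec-HK-1}, upgrading the rate from $|D|^{-(r-1/2)/(2r+s)}$ to $|D|^{-r/(2r+s)}$.

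The dependent-sample part then reduces to two applications of the Banach-valued covariance inequality of \cite{Dehling1982}. For $\mathcal{A}_D$ I would, as in the proof of Theorem~\ref{Theorem:KRR-err-dec-HK-1}, bound the Hilbert--Schmidt norm of $(\lambda I+L_K)^{-1/2}(L_K-L_{K,D})(\lambda I+L_K)^{-1/2}$ by writing it as a sum of centered operator-valued random variables, using the $L^\infty$ bound $\kappa^2/\lambda$ on each summand together with the second-moment bound supplied by the effective-dimension estimate $\mathcal{N}(\lambda)\le C_0\lambda^{-s}$, and then invoking Dehling's inequality with H\"older exponent $2+\delta$; this produces the factor $|D|^{(2s+\delta+2\delta r+4r)/((2r+s)(2\delta+4))}\sum_{\ell}(\alpha_\ell)^{\delta/(4+2\delta)}$. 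For $\mathcal{B}_D$ the summands are real-valued after pairing with any unit vector, so the covariance inequality for bounded random variables (which for strong mixing loses only a factor $4\alpha_\ell$) applies to the second moment; taking square roots yields the factor $\sum_{\ell}\sqrt{\alpha_\ell}$, and the boundedness of $f_\lambda$ and of $y$ from Assumption~2 keeps the leading constant finite.

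The main obstacle is the careful bookkeeping of H\"older exponents and powers of $\lambda$: one must combine the baseline $\lambda^r=|D|^{-r/(2r+s)}$ factor pulled out of the approximation error and of the noise factor with the perturbation prefactor in $\mathcal{A}_D$ so that the final bound has the exact shape stated in \eqref{KRR-err-rho-1}. The condition $2r+s\ge 2$ enters precisely at this step: it ensures that the spectral factor $\lambda^{r-1/2}$ arising from $L_K^{1/2}f_\lambda$ and from the second-moment estimate of $L_{K,D}f_\lambda-L_Kf_\lambda$ can be absorbed into $\lambda^r$ rather than dominating it, so that the overall dependence on $|D|$ in the prefactor is no worse than the one displayed in \eqref{KRR-err-rho-1}.
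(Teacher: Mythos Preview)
Your plan follows the standard i.i.d.\ template based on the single identity $f_{D,\lambda}-f_\lambda=(\lambda I+L_{K,D})^{-1}(S_D^T y_D-L_{K,D}f_\lambda)$ and the factorization $\|f_{D,\lambda}-f_\lambda\|_\rho\le \mathcal A_D\cdot\mathcal B_D$. The difficulty you have not addressed is how to take expectations of this product. In the i.i.d.\ setting one controls $\mathcal A_D$ with high probability via a Bernstein inequality; here the whole point is that such concentration is unavailable, and the paper works only with second-moment covariance bounds (Lemma~\ref{Lemma:tool1}). Cauchy--Schwarz would require $E[\mathcal A_D^2]$, but the second-order resolvent expansion (Lemma~\ref{Lemma:Operator-product}) only delivers $E[\mathcal A_D]$; squaring that expansion would demand a fourth-moment bound on $(\lambda I+L_K)^{-1/2}(L_K-L_{K,D})$, which Lemma~\ref{Lemma:Operator-difference} does not provide. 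So the decoupling step in your outline is missing a key ingredient, and the argument as written does not close.

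The paper sidesteps this obstruction by a genuinely different, two-stage decomposition (Proposition~\ref{Proposition:error-dec-KRR}). In Stage~1 it uses \eqref{Op-KRR-d1}, which writes $f_{D,\lambda}-f_\lambda$ with the \emph{deterministic} resolvent $(\lambda I+L_K)^{-1}$ on the outside; after passing to $\|\cdot\|_\rho$ this leaves the random factor $\|(\lambda I+L_K)^{-1/2}(L_K-L_{K,D})\|$ multiplied by $\|f_{D,\lambda}-f_\lambda\|_K$, and now Cauchy--Schwarz only needs $\mathcal P_{|D|,\lambda}=E[\|(\lambda I+L_K)^{-1/2}(L_K-L_{K,D})\|^2]$ and $E[\|f_{D,\lambda}-f_\lambda\|_K^2]$. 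In Stage~2 the latter is bounded via \eqref{Op-KRR-d2} using the \emph{crude} estimate $\|(\lambda I+L_{K,D})^{-1}\|\le\lambda^{-1}$, so that only the unweighted quantities $\mathcal S_{|D|}=E[\|L_K-L_{K,D}\|^2]$ and $\mathcal T_{|D|}=E[\|L_Kf_\rho-S_D^Ty_D\|_K^2]$ enter. It is precisely these crude bounds (Lemma~\ref{Lemma:Operator-difference}, eq.~\eqref{operator-d1}, and Lemma~\ref{Lemma:Operator-difference-2}, eq.~\eqref{operator-d-2-1}, both obtained with $u=v=\infty$ in Lemma~\ref{Lemma:tool1}) that produce the factor $\sum_\ell\sqrt{\alpha_\ell}$, while the refined bounds $\mathcal P_{|D|,\lambda}^{1/2},\mathcal R_{|D|,\lambda}^{1/2}$ produce the $\sum_\ell(\alpha_\ell)^{\delta/(4+2\delta)}$ factor. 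Your attribution of the $\sqrt{\alpha_\ell}$ sum to the weighted noise term $\mathcal B_D$ is thus backwards. Finally, the condition $2r+s\ge 2$ is used not to compare $\lambda^{r-1/2}$ with $\lambda^r$, but to ensure $\lambda^{-1}|D|^{-1/2}\le 1$, so that the extra $\lambda^{-1}$ paid for the crude Stage~2 bound is harmless.
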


We then derive almost optimal learning rates for KRR with dependent samples based on Theorem \ref{Theorem:KRR-err-dec-rho-1}. Denote by $\mathcal M(m,c_1^*,\gamma_1,M,C_0,s,r)$ the set of all distributions such that Assumption 1-Assumption 5 hold   with     $\alpha_j$ satisfying (\ref{def-Galpha11}), $M>0$, $s\in(0,1]$, $C_0\geq 1$ and $r>0$.
We enter into a competition over $ \Psi_D$,  which denotes  the class of all functions derived from the data set   $D$  and define
\begin{eqnarray*}
          e(m,c_1^*,\gamma_1,M,C_0,s,r)
           := \sup_{\rho\in \mathcal M(m,c_1^*,\gamma_1,M,C_0,s,r)}\inf_{f_D\in \Psi_D} E(\|f_\rho-f_{D}\|_\rho).
\end{eqnarray*}
For an arbitrarily small $\varepsilon>0$,
setting $\delta=\frac{8r\varepsilon+4s\varepsilon}{1-s-4r\varepsilon-2s\varepsilon}$ in Theorem \ref{Theorem:KRR-err-dec-rho-1}, we obtain the following corollary directly.

\begin{corollary}\label{Corollary:KRR-err-dec-rho-1}
Let $\varepsilon>0$,  $0<s\leq 1$, $1/2\leq r\leq 1$,   $2r+s\geq 2$ and
\begin{equation}\label{suff-cond-2}
    \gamma_1>2+\frac{4-4s}{8r\varepsilon+4s\varepsilon}.
\end{equation}
If  $\lambda=|D|^{-1/(2r+s)}$, then
\begin{eqnarray}\label{Upper-KRR-rho}
 &&\hat{C}''|D|^{-\frac{r}{2r+s}} \leq  e(1,c_1^*,\gamma_1,M,C_0,s,r)\nonumber\\
 &\leq&
 \sup_{\rho\in \mathcal M(1,c_1^*,\gamma_1,M,C_0,s,r)}  E[\|f_{D,\lambda}-f_\rho\|_\rho]
    \leq
   \hat{C}'|D|^{-\frac{r}{2r+s}+\varepsilon}
\end{eqnarray}
where $\hat{C}'$ and $\hat{C}''$ are constants independent of $|D|$.
In particular, if $s=1$, then for any $1/2\leq r\leq 1$, $\gamma_1>2$ implies
\begin{eqnarray*}
 &&\hat{C}''|D|^{-\frac{r}{2r+1}} \leq  e(1,c_1^*,\gamma_1,M,C_0,1,r)\nonumber\\
 &\leq&
 \sup_{\rho\in \mathcal M(1,c_1^*,\gamma_1,M,C_0,1,r)}  E[\|f_{D,\lambda}-f_\rho\|_\rho]
    \leq
   \hat{C}'|D|^{-\frac{r}{2r+1}}.
\end{eqnarray*}
\end{corollary}

Corollary \ref{Corollary:KRR-err-dec-rho-1}  provides a sufficient condition for the mixing property   to guarantee the optimal learning rate of KRR. The rate in
\eqref{Upper-KRR-rho} shows that under (\ref{def-Galpha11}) and (\ref{suff-cond-2}), strong mixing sequences behave similarly as i.i.d. samples for KRR.  Compared with the i.i.d. case further, additional restrictions on the regularity of $f_\rho$ and capacity of $\mathcal H_K$, i.e. $2r+s\geq 2$ are imposed in our results. This is a technical assumption and is much stricter than the widely used condition $1/2\leq r\leq 1$ and $0<s\leq 1$ in \cite{Caponnetto2007,SteinwartHS,Lin2017}.
We believe that it can be relaxed to $1/2\leq r\leq 1$, just as Theorem \ref{Theorem:KRR-err-dec-HK-1} did for the $\mathcal H_K$ norm. We leave it as our future studies. Since $s=1$ and $r\geq 1/2$ implies $2r+s\geq 2$, a special case for Corollary \ref{Corollary:KRR-err-dec-rho-1} is that under  the capacity-independent (without Assumption 4),    KRR for strong mixing sequences can achieve the optimal learning rates of KRR for i.i.d. samples.

\subsection{Optimal learning rates for DKRR}
We are now in a position to study the learning performance of DKRR defined by (\ref{DKRR}). It was shown in \cite{Zhang2015,Lin2017}  that the defined DKRR in (\ref{DKRR}) maintains the learning performance of KRR with whole data, provided the number of local machines is not so large and samples are i.i.d. drawn. Our result shows that under stricter condition on the number of local machines, DKRR also achieves the optimal learning rates for   $\alpha$-mixing sequences  in (\ref{Upper-KRR-rho}), showing that DKRR is a powerful learning scheme for distributively stored dependent data.

\begin{theorem}\label{Theorem:KRR-derr-dec}
Under Assumption 1-Assumption 5 with $0<s\leq 1$, $1/2\leq r\leq 1$ and $2r+s\geq2$, if $\lambda=|D|^{-1/(2r+s)}$, then for arbitrary $\delta>0$ there holds
\begin{eqnarray}\label{KRR-derr-rho-1}
    &&\max\left\{E[\|\overline{f}_{D,\lambda}-f_\rho\|_\rho],|D|^{-\frac{1}{4r+2s}}E[\|\overline{f}_{D,\lambda}-f_\rho\|_K]\right\} \nonumber \\
    &\leq&
     \bar{C}|D|^{-\frac{2r+s-1}{2r+s}}
     \sum_{j=1}^m \left(1+\sum_{\ell=1}^{|D_j|}\sqrt{\alpha_\ell}\right) \nonumber \\
    &\times&
      \left(|D|^\frac{s}{4r+2s}+ |D|^\frac{2s+\delta+1}{(2r+s)(2\delta+4)}\sum_{\ell=1}^{|D_j|}(\alpha_\ell)^{\frac\delta{4+2\delta}}\right) \nonumber \\
     &+&
     \bar{C} \left( |D|^{-\frac{r}{2r+s}}  +|D|^{-\frac12+\frac{2s+\delta+1}{(2r+s)(2\delta+4)}}\sum_{\ell=1}^{|D|}(\alpha_\ell)^{\frac\delta{4+2\delta}}\right).
\end{eqnarray}
where $\bar{C}$ is a constant independent of $|D|$.
\end{theorem}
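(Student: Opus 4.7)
The plan is to adapt the integral-operator second-order decomposition used for i.i.d.\ distributed KRR to the strong-mixing setting. I would introduce the regularized population target $f_\lambda=(L_K+\lambda I)^{-1}L_Kf_\rho$ and split
\[
\overline{f}_{D,\lambda}-f_\rho = (\overline{f}_{D,\lambda}-f_\lambda) + (f_\lambda-f_\rho).
\]
Under Assumption 4, the deterministic approximation error satisfies $\|f_\lambda-f_\rho\|_\rho\leq C_r\lambda^r=C_r|D|^{-r/(2r+s)}$ and $\|f_\lambda-f_\rho\|_K\leq C_r\lambda^{r-1/2}$; after the normalization $|D|^{-1/(4r+2s)}=\lambda^{1/2}$ acts on the $\mathcal H_K$ estimate, both contributions reduce to the $|D|^{-r/(2r+s)}$ term in the second bracket of the bound.

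\textbf{Core decomposition.} For the stochastic deviation, multiplying through by $L_K+\lambda I$ and invoking the normal equations $(L_{K,D_j}+\lambda I)f_{D_j,\lambda}=S_{D_j}^*y_{D_j}$ yields
\begin{equation*}
(L_K+\lambda I)(\overline{f}_{D,\lambda}-f_\lambda)
= \sum_{j=1}^m\frac{|D_j|}{|D|}\Bigl[(S_{D_j}^*y_{D_j}-L_Kf_\rho) + (L_K-L_{K,D_j})f_{D_j,\lambda}\Bigr].
\end{equation*}
The first sum is linear in the data and benefits from averaging over $m$ machines; the second is bilinear in the operator fluctuation $L_K-L_{K,D_j}$ and the local estimator. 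I would further write $(L_K-L_{K,D_j})f_{D_j,\lambda}=(L_K-L_{K,D_j})f_\lambda+(L_K-L_{K,D_j})(f_{D_j,\lambda}-f_\lambda)$ to peel off a purely linear correction (since $f_\lambda$ is deterministic) from a truly quadratic remainder whose norm I would bound on each $D_j$ using Theorems \ref{Theorem:KRR-err-dec-HK-1} and \ref{Theorem:KRR-err-dec-rho-1}. The prefactor $|D|^{1/(2r+s)}=\lambda^{-1}$ in the first line of the bound arises from $\|(L_K+\lambda I)^{-1}\|\le\lambda^{-1}$ when inverting this identity.

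\textbf{Mixing control and norm conversion.} After symmetric insertion of $(L_K+\lambda I)^{-1/2}$ factors, each linear contribution becomes an $\mathcal H_K$-valued sum of centered, stationary $\alpha$-mixing random variables of the form $(L_K+\lambda I)^{-1/2}(y_iK_{x_i}-L_Kf_\rho)$ or $(L_K+\lambda I)^{-1/2}(L_K-K_{x_i}\otimes K_{x_i})f_\lambda$. I would bound their expected norms by the Banach-valued covariance inequality of Dehling invoked in Section \ref{Sec:tools}, which yields exactly the $(1+\sum_\ell\sqrt{\alpha_\ell})$ and $\sum_\ell(\alpha_\ell)^{\delta/(4+2\delta)}$ factors in the theorem, with summation range $|D_j|$ for per-machine contributions and $|D|$ for the remaining global term. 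The exponents $s/(4r+2s)$ and $(2s+\delta+1)/((2r+s)(2\delta+4))$ then track the powers of $\lambda$ coming from the effective-dimension bound $\mathcal N(\lambda)\leq C_0\lambda^{-s}$ (Assumption 3), from $\|L_K^{1/2}f_\lambda\|_K$, and from the moment conditions of the covariance inequality. Converting the $L_{\rho_X}^2$ estimate to an $\mathcal H_K$ estimate costs precisely one factor of $\lambda^{-1/2}=|D|^{1/(4r+2s)}$, which is exactly absorbed by the normalization on the left-hand side of the theorem.

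\textbf{Main obstacle.} The chief difficulty is the coupling between the random operator $\mathcal Q_{D_j,\lambda}=\|(L_K+\lambda I)^{1/2}(L_{K,D_j}+\lambda I)^{-1/2}\|$ and the residuals in the bilinear remainder. In the i.i.d.\ case one decouples them via Bernstein-type high-probability bounds and integrates afterwards; here, because the covariance inequality produces expectation bounds with mixing tails rather than deviation probabilities, one must track moments of $\mathcal Q_{D_j,\lambda}$ jointly with expected-norm bounds for the residual sums without any independence. The technical constraint $2r+s\geq 2$ is forced by the requirement that the bilinear remainder, which scales like $\lambda^{-1}\cdot(\text{per-machine error})^2$, be dominated by the leading linear contribution---the same barrier that restricts Theorem \ref{Theorem:KRR-err-dec-rho-1}. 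A final subtlety is that the arrangement required by Assumption 1 must simultaneously preserve the $\alpha$-mixing coefficients of each $D_j$ and of the concatenation $D$, so that the per-machine sums $\sum_{\ell=1}^{|D_j|}$ and the global sum $\sum_{\ell=1}^{|D|}$ in the bound can both be invoked from a single mixing hypothesis.
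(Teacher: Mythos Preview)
Your overall strategy and decomposition are essentially those of the paper: the paper also splits through $f_\lambda$, arrives at exactly the identity you wrote (after summing the linear pieces $(L_K-L_{K,D_j})f_\lambda$ and $S_{D_j}^*y_{D_j}-L_Kf_\rho$ into the global quantities $(L_K-L_{K,D})f_\lambda$ and $S_D^Ty_D-L_Kf_\rho$), and then bounds the bilinear remainder $\sum_j\frac{|D_j|}{|D|}(L_K+\lambda I)^{-1}(L_K-L_{K,D_j})(f_{D_j,\lambda}-f_\lambda)$ machine by machine via Cauchy--Schwarz. This is precisely Proposition~\ref{Proposition:error-dec-DKRR} and the display \eqref{Error:distl}.

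Where you diverge, and where there is a genuine gap, is in how you control the local factor $\|f_{D_j,\lambda}-f_\lambda\|_K$ inside the bilinear remainder. You propose to invoke Theorems~\ref{Theorem:KRR-err-dec-HK-1} and~\ref{Theorem:KRR-err-dec-rho-1}, but those give only \emph{first}-moment bounds and, through Proposition~\ref{Proposition:KRR-err-dec-HK}, bring in the random product $\mathcal Q_{|D_j|,\lambda}$---exactly the coupling you then flag as the chief obstacle. After Cauchy--Schwarz against $\|(L_K+\lambda I)^{-1/2}(L_K-L_{K,D_j})\|$, what is actually needed is a bound on $E[\|f_{D_j,\lambda}-f_\lambda\|_K^2]$, and for this the paper uses the \emph{crude} second-moment estimate \eqref{boundforKnorm},
\[
E[\|f_{D_j,\lambda}-f_\lambda\|_K^2]\ \leq\ 2\lambda^{-2}\bigl(\mathcal T_{|D_j|}+\|f_\lambda\|_K^2\,\mathcal S_{|D_j|}\bigr),
\]
obtained directly from \eqref{Op-KRR-d2} together with the trivial bound $\|(L_{K,D_j}+\lambda I)^{-1}\|\leq\lambda^{-1}$. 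This sidesteps $\mathcal Q$ entirely, supplies the prefactor $\lambda^{-1}=|D|^{1/(2r+s)}$ in the first line of the theorem (so that factor does \emph{not} come from inverting $L_K+\lambda I$ on the outer identity, as you suggest), and is also the source of the $(1+\sum_\ell\sqrt{\alpha_\ell})$ factor via $\mathcal S_{|D_j|}^{1/2}$ and $\mathcal T_{|D_j|}^{1/2}$. The restriction $2r+s\geq 2$ is needed precisely to absorb this crude $\lambda^{-1}$ loss. Once you replace Theorems~\ref{Theorem:KRR-err-dec-HK-1}--\ref{Theorem:KRR-err-dec-rho-1} by \eqref{boundforKnorm} at this step, your perceived main obstacle dissolves and the rest of your plan matches the paper's proof.
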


For an arbitrary $\varepsilon>0$, setting $\delta=\frac{8r\varepsilon+4s\varepsilon}{1-s-4r\varepsilon-2s\varepsilon}$, we have from Theorem \ref{Theorem:KRR-derr-dec} the following corollary.

\begin{corollary}\label{Corollary:KRR-derr-dec}
  Under Assumption 1-Assumption 5 with $0<s\leq 1$, $1/2\leq r\leq 1$ and $2r+s\geq2$, if  $\lambda=|D|^{-1/(2r+s)}$, (\ref{def-Galpha11}) holds with (\ref{suff-cond-2}), $|D_1|=\dots=|D_m|$
and $m$ satisfies
\begin{equation}\label{condition-m}
    m\leq|D|^{\frac{2r+s-2}{4r+2s}},
\end{equation}
 then
\begin{eqnarray}\label{DKRR-op-1}
   E[\|\overline{f}_{D,\lambda}-f_\rho\|_K]\leq \bar{C}'|D|^{-\frac{r-1/2}{2r+s}+\varepsilon},
\end{eqnarray}
and
\begin{eqnarray}\label{DKRR-op-2}
 &&\hat{C}''|D|^{-\frac{r}{2r+s}} \leq  e(m,c_1^*,\gamma_1,M,C_0,s,r)\nonumber\\
 &\leq&
 \sup_{\rho\in \mathcal M(,m,c_1^*,\gamma_1,M,C_0,s,r)}  E[\|\overline{f}_{D,\lambda}-f_\rho\|_\rho]
    \leq
  \bar{C}'|D|^{-\frac{r}{2r+s}+\varepsilon},
\end{eqnarray}
where $\bar{C}'$ is a constant independent of $|D|$.
\end{corollary}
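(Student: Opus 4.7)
The plan is to obtain Corollary~\ref{Corollary:KRR-derr-dec} as a direct consequence of Theorem~\ref{Theorem:KRR-derr-dec} by substituting the algebraic mixing bound $\alpha_\ell \leq c_1^* \ell^{-\gamma_1}$, the equipartition $|D_j| = |D|/m$, and the size restriction (\ref{condition-m}) into the right-hand side of (\ref{KRR-derr-rho-1}). Since the left-hand side of (\ref{KRR-derr-rho-1}) is a maximum, it suffices to show that the right-hand side is $O(|D|^{-r/(2r+s)})$: inequality (\ref{DKRR-op-2}) then follows immediately, and (\ref{DKRR-op-1}) follows upon multiplying by $|D|^{1/(4r+2s)}$.

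The first step is to dispose of the $\sqrt{\alpha_\ell}$ factor. Because (\ref{suff-cond-2}) forces $\gamma_1 > 2$, the series $\sum_{\ell \geq 1} \sqrt{\alpha_\ell}$ is dominated by $\sum_\ell \ell^{-\gamma_1/2}$ and hence absolutely convergent, so every factor $1 + \sum_{\ell=1}^{|D_j|} \sqrt{\alpha_\ell}$ appearing in (\ref{KRR-derr-rho-1}) is uniformly bounded. Next I would track the leading polynomial contribution. Using $\sum_j (|D_j|/|D|) |D_j|^{-1/2} = (|D|/m)^{-1/2}$, the product
\[ |D|^{1/(2r+s)} \cdot (|D|/m)^{-1/2} \cdot (|D|/m)^{-1/2} |D|^{s/(4r+2s)} \]
collapses to $m \cdot |D|^{-(4r+s-2)/(4r+2s)}$, and (\ref{condition-m}) turns this into exactly $|D|^{-r/(2r+s)}$. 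The standalone $\bar{C}|D|^{-r/(2r+s)}$ appearing in (\ref{KRR-derr-rho-1}) is already of the desired order.

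The remaining work is to show that the two contributions involving $\sum_\ell (\alpha_\ell)^{\delta/(4+2\delta)}$ are also $O(|D|^{-r/(2r+s)})$. Writing $a(\delta) = \gamma_1\delta/(4+2\delta)$, algebraic decay yields $\sum_{\ell=1}^{N}(\alpha_\ell)^{\delta/(4+2\delta)} \leq C \max\{1,\, N^{1-a(\delta)}\}$ up to a logarithmic factor at $a(\delta)=1$. I would pick $\delta^\ast$ large enough that $a(\delta^\ast)$ is sufficiently close to $\gamma_1/2$ and $(2s+\delta^\ast+1)/(2\delta^\ast+4)$ is sufficiently close to $1/2$; condition (\ref{suff-cond-2}) is engineered so that, once $\delta^\ast$ is chosen in this way, the aggregate exponent of $|D|$ in each offending term, after combining with $|D|^{1/(2r+s)}$, with $(|D|/m)^{-1/2}=m^{1/2}|D|^{-1/2}$, and with the bound (\ref{condition-m}) on $m$, is strictly bounded by $-r/(2r+s)$.

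The main obstacle will be the book-keeping of this last step: three positive polynomial factors must be simultaneously absorbed, and both (\ref{condition-m}) and (\ref{suff-cond-2}) are essentially tight (the threshold $\gamma_1 = 2 + (2r-1)/(2r+s)$ marks the exact boundary at which the mixing contribution fails to be absorbed into $|D|^{-r/(2r+s)}$ as $\delta \to \infty$). A strict inequality in (\ref{suff-cond-2}) leaves a small margin that singles out an admissible finite $\delta^\ast$ making every scalar exponent strictly negative. Once that verification is completed, summing the four contributions on the right of (\ref{KRR-derr-rho-1}) gives the required bound on the max, and (\ref{DKRR-op-1}) together with (\ref{DKRR-op-2}) follow as described in the first paragraph.
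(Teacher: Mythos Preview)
Your proposal is correct and follows essentially the same route as the paper: start from Theorem~\ref{Theorem:KRR-derr-dec}, bound the mixing sums via the algebraic decay $\alpha_\ell\le c_1^*\ell^{-\gamma_1}$, use the equipartition and the restriction~(\ref{condition-m}) to control the leading polynomial factor, and exploit the free parameter $\delta$ to absorb the remaining terms. The only cosmetic difference is that the paper commits to the explicit choice $\delta=\frac{12r+6s}{(\gamma_1-2)(2r+s)-2r-1}$ and invokes its precomputed estimate~(\ref{b3.3}), whereas you argue more abstractly that a sufficiently large finite $\delta^\ast$ exists because of the strict inequality in~(\ref{suff-cond-2}); both amount to the same verification.
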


Comparing Corollary \ref{Corollary:KRR-derr-dec} with Corollary \ref{Corollary:KRR-err-dec-HK-1} and Corollary \ref{Corollary:KRR-err-dec-rho-1}, we get that DKRR performs similarly as running KRR on  $\alpha$-mixing sequences  stored on a large enough machine, provided the number of local machines is not so large.  This extends the existing distributed learning theory from i.i.d. samples to strong mixing sequences and extends the applicable range for distributed learning.

Theoretical assessments  for DKRR as described above for i.i.d. samples are  a popular topic  in the realm of learning theory and optimal learning rates  have been established in   \cite{Zhang2015,Lin2017,Guo2017,Mucke2018,Lin2018CA,Linj2018,Lin2020}.
 A general conclusion is that under Assumptions 2-4, if
$ m \leq |D|^{\frac{2r+s-1}{2r+s}}$ and $1/2\leq r\leq 1$, then DKRR  for i.i.d. samples achieves the optimal learning rates like (\ref{Upper-KRR-rho}).
Our Theorem \ref{Theorem:KRR-derr-dec} extends the results from i.i.d. samples to $\alpha$-mixing sequences, the price of which is    a stricter restriction on the number of local machines, i.e. from $ m \leq |D|^{\frac{2r+s-1}{2r+s}}$ to $m\leq|D|^{\frac{2r+s-2}{4r+2s}}$. It should be highlighted that such a degradation is reasonable since the dependence of samples destroys the classical error decomposition for distributed learning  \cite{Guo2017,Chang2017} and requires a novel error analysis technique.  Based on our results, we find that DKRR  is a good candidate to overcome the massive data challenges since it is efficient for distributively stored and dependent data and also provides a feasible way to resolve the privacy issue in the sense that  there are no communications between local machines.

We end this section by discussing the  role of the regularization parameter $\lambda$. As shown in Corollaries 1, 2, 3,   regularization parameters for both KRR and DKRR are   same, implying that  KRR in each local machine over-fits the training samples and the weighted average operator succeeds in reducing the variance. It should be highlighted that in our analysis, the choice of the regularization parameter $\lambda$ in DKRR is independent of  $k$, $k=1,\dots,m$, and is the same as KRR with whole data. This is consistent with \cite{Zhang2015,Lin2017} for i.i.d. samples. However, it is practically difficult to present such a perfect $\lambda$ under the distributed learning framework. Since we only consider the theoretical feasibility  of DKRR in this paper,  we   present an adaptive selection strategy of $\lambda$ without theoretical verifications. We will continue our study on this topic and   present provable parameter selection strategy in a future work. Under the assumptions of this paper, the theoretically optimal regularization parameter satisfies $\lambda\sim |D|^{-1/(2r+s)}$. It is well known \cite[Chap.7]{Gyorfi2002} that it can be realized by using the well known cross-validation approach.
However, in the distributed learning setting, there are only $|D_k|$ data in the $k$-th local machine and we can get a regularization $\lambda_k\sim |D_k|^{-1/(2r+s)}$ via  cross-validation, which leads to sub-optimal learning rates. Theoretically, given $\lambda_k$ derived by cross-validation, we can set $\hat{\lambda}_k= (\lambda_k)^{\log_{|D_k|}|D|}$. Then $\lambda_k\sim |D_k|^{-1/(2r+s)}$ implies $\hat{\lambda}_k\sim|D|^{-1/(2r+s)}$, which is theoretically optimal.

\section{Numerical Results}\label{Sec.Experiments}

In this section, we report experimental results to verify our theoretical assertions and show the power of DKRR in tackling distributively stored  time series. The numerical results are divided into three parts: verifying  theoretical assertions of DKRR for distributively stored data in a sequential manner, verifying  theoretical assertions of DKRR for distributively stored data in a parallel manner and showing the performance of DKRR in  a real world application.   The environment of our experiments is: Matlab R2021a with Intel(R) Xeon(R) Gold 5119T CPU @1.90GHz, Windows 10.

Throughout this section, we use  one-step prediction to show the power of the proposed learning algorithms. To be detailed, that is, given training data $D_{tr}:=\{x_i^{tr}\}$ and testing data $D_{te}:=\{x_j^{te}\}$, we get an estimator $f_{D_{tr}}$ based on  $D_{tr}$, and then produce an approximation of $x_j^{te}$ as $f_{D_{tr}}(x_{j-1}^{te})$, where $x_{0}^{te}=x_{|D_{tr}|}^{tr}$ denotes the last samples in the training set. In this way, the testing error recorded in our experiments is the average of all the one-step prediction error, i.e., $\sqrt{\frac{1}{|D_{te}|}\sum_{j=1}^{|D_{te}|} |f_{D_{tr}}(x_{j-1}^{te})-x_j^{tr}|^2}$.
More specific ways in which data are generated will be explained separately in each experiment.

\subsection{DKRR for sequentially distributively stored  time series}
In this toy simulation, we adopt  the widely used Wendland  kernel \cite{Chang2017}  in DKRR as follow:
\begin{eqnarray*}
   K(x,x')=\left\{
\begin{array}{cc}
 (1-\|x-x'\|_2)^4(4\|x-x'\|_2+1)\ &\text{if}\ 0<\|x-x'\|_2\leq1 \\
 0 &\ \text{if}\ \|x-x'\|_2>1.
\end{array}
\right.
\end{eqnarray*}
It is easy to check that $K$ is a Mercer kernel.
We consider four time series, including different generating mechanisms   and different types of noise.
\begin{eqnarray}
\mbox{Mechanism 1}:\qquad x_{t+1} &=& 0.5\sin(x_{t})+\varepsilon_t,  \label{f_1}\\
\mbox{Mechanism 2}:\qquad x_{t+1} &=&\cos(x_{t})\sin(x_{t-1})+\varepsilon_t, \label{f_2}
\end{eqnarray}
where $\varepsilon_t$ is the independent noise satisfying either  $\varepsilon_t\sim\mathcal{N}(0,0.4^2)$ or  $\varepsilon_t\sim\mathcal{U}(-0.7,0.7)$, and $x_1$ for (\ref{f_1}) and $x_1,x_2$ for (\ref{f_2}) are generated from $\mathcal{U}(0,1)$. Then it can be found in \cite{Alquier2013} that the above four time series are  geometrical $\alpha$-mixing sequences. Furthermore, it can be easily deduced from the definition of $f_\rho$ that  regression functions for Mechanism 1 with both Gaussian noise and uniform noise is $f_\rho(x)=0.5\sin x$ and for
 Mechanism 2 with both Gaussian noise and uniform noise is $f_\rho(x)=\cos(x_{t})sin(x_{t-1})$.

Our purposes in this simulation are three fold. The first one is to study the role of number of local machines in DKRR  to verify Theorem \ref{Theorem:KRR-derr-dec}. The second one is to verify Corollary \ref{Corollary:KRR-derr-dec} via showing the relation between test error and size of samples. The last purpose is to exhibit the power of DKRR on predicting time series.
In this way, we adopt three criteria for comparison. The first criterion is the \emph{global mean squared error} (GMSE) which is the mean square error (MSE) of running  KRR on all samples in a batch mode. We regard it as   a baseline for  analysis. The second criterion is the \emph{average error} (AE) which is the MSE of DKRR. AE describes the learning performance of DKRR.  The last one is the \emph{last local error} (LLE), which is the MSE of running KRR on the data subset stored on the last local machine. Since the time series are distributively stored in a  sequential manner, LLE reflects the learning performance of running KRR on data subset of size $N/m$ in one single machine. For the sake of brevity, we denote such a KRR to be $DKRR_{last}$. It should be also highlighted that LLE also provides a reference for DKRR to show the power of distributed learning.

\subsubsection{Relation between MSE and number of local machines}

In this simulation, we assume that there are  $N=5000$ training samples generated via (\ref{f_1}) or (\ref{f_2})  distributively stored on $m$ local machines in a sequential manner.  We  also generate 100 testing samples according to (\ref{f_1}) or (\ref{f_2}).
Since our aim is to estimate the regression function, we remove the noise term to record the test error.
 {The number of local machines  varies from $\{1,2,4,5,10,20,25,30,40,50,100\}$. The experiments are repeated 20 times, and we record the logarithm of MSE when the number of local machines  varies from $\{1,2,4,5,10,20\}$ in the subgraphs of corresponding figures.}
Regularization parameters in all experiments are selected by grid search. For a given data, the regularization parameters for DKRR with different $m$ are the same and are the same as the optimal regularization parameter  $\lambda_{opt}$  for KRR, as shown in  Table \ref{Tab:lambda}. The numerical results are reported in Figure \ref{select_m1}.
\begin{table}[htbp]
\caption{$\lambda_{opt}$   for different time series}\label{Tab:lambda}
\begin{center}
\begin{tabular}{|c|c|c|}
\hline
&Uniform&Gaussian\\
\hline
Mechanism 1&0.0024&0.0004\\
\hline
Mechanism 2&0.001&0.0007\\
\hline
\end{tabular}
\end{center}
\end{table}

\begin{figure}[htbp]
\centering
\subfigure[Gaussian noise on (\ref{f_1})]{\includegraphics[scale=0.25]{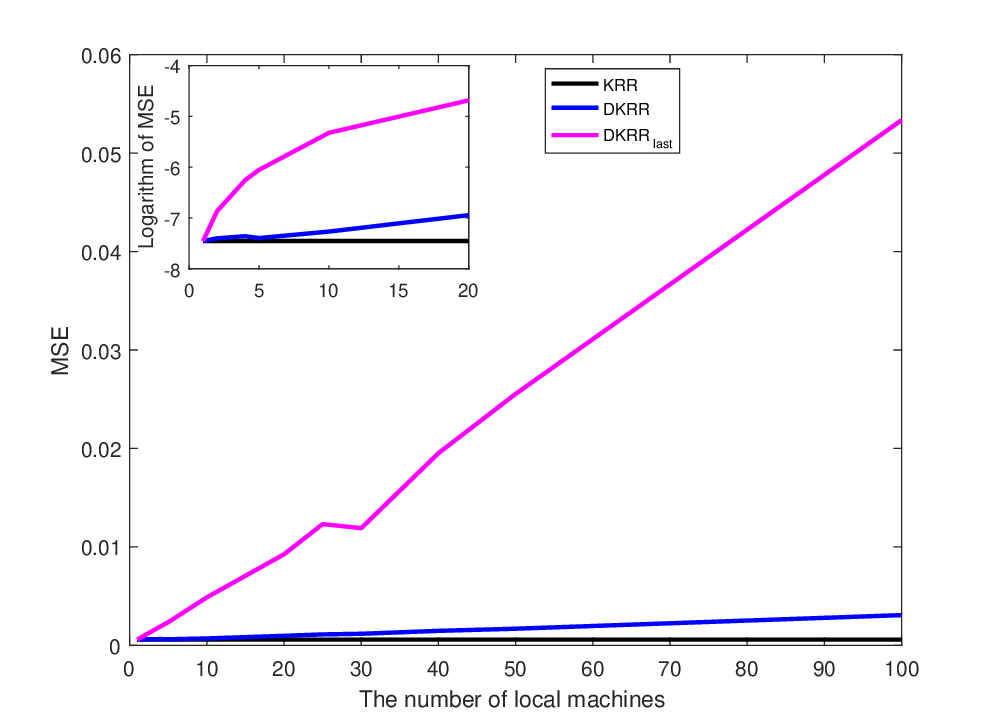}\label{seq_gau_m1}}
\subfigure[Uniform noise on (\ref{f_1})]{\includegraphics[scale=0.25]{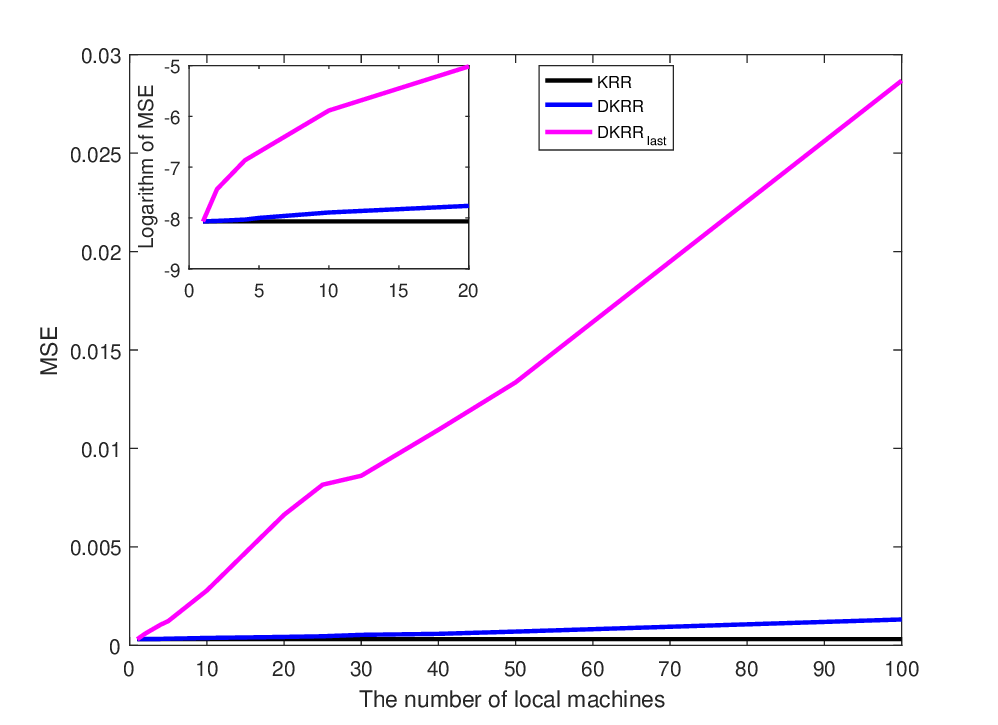}\label{seq_uni_m1}}\\
\subfigure[Gaussian noise on (\ref{f_2})]{\includegraphics[scale=0.25]{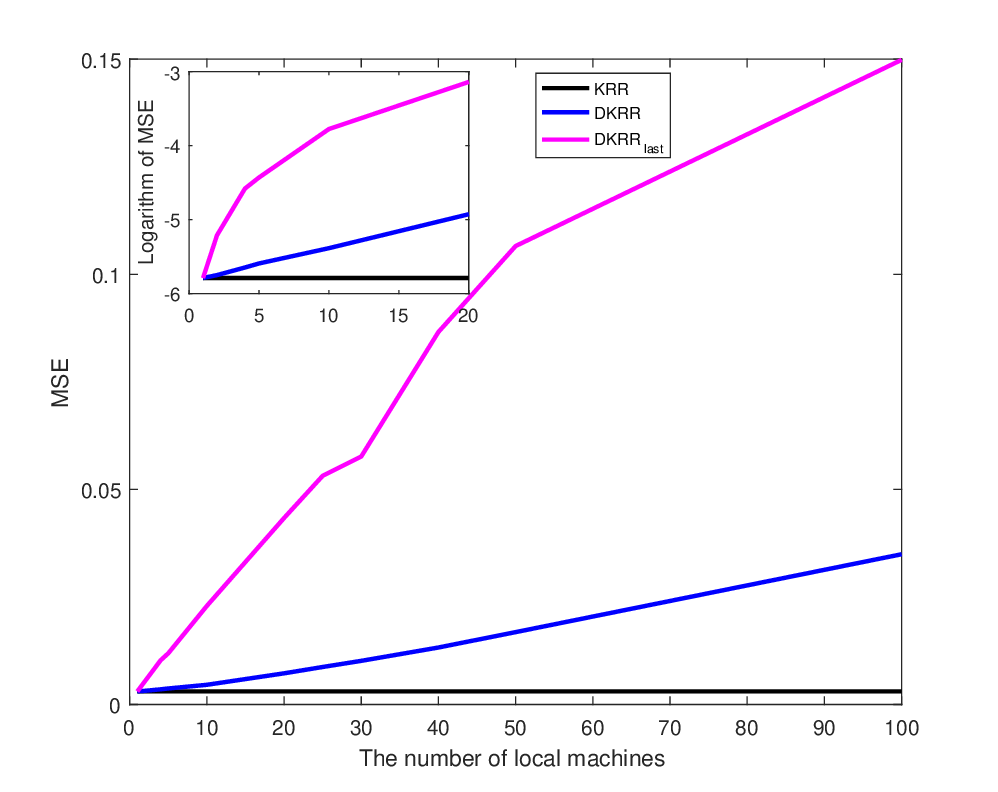}\label{seq_gau_m2}}
\subfigure[Uniform noise on  (\ref{f_2}) ]{\includegraphics[scale=0.25]{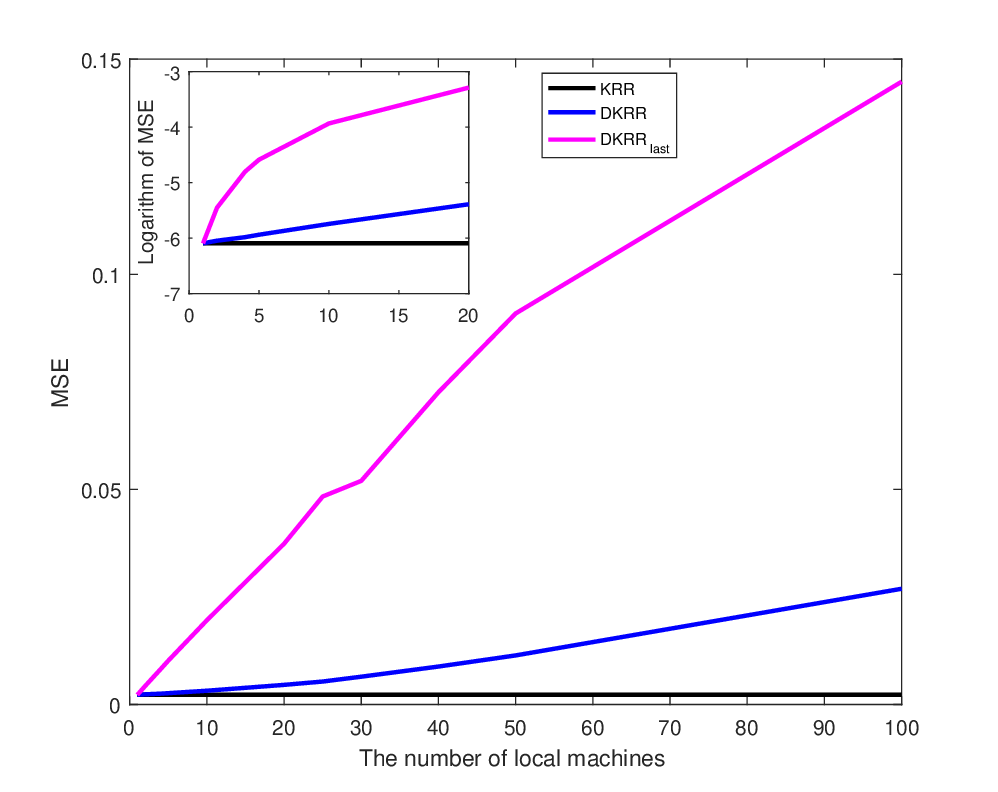}\label{seq_uni_m2}}
\caption{Relation between MSE and the number of local machines.}\label{select_m1}
\end{figure}

Figure \ref{select_m1} exhibits the relation between MSE for DKRR and the number of local machines. Based on Figure \ref{select_m1},  three conclusions can be drawn: 1) AEs are always comparable to GMSEs when the number of local machines is not too large. This verifies the theoretical statement in Corollary \ref{Corollary:KRR-derr-dec}, showing that   DKRR performs similarly as  KRR for strong mixing sequences when $m$ is not so large.
2) There exists an upper bound of $m$, larger than which DKRR will degrade  the performance of KRR, just as shown in (\ref{condition-m}) purports to show.    3) LLE curve  increases dramatically and is far from the AE curves, implying that DKRR can significantly improve prediction accuracy in comparison to prediction  produced by a single local machine. All these show that for tackling sequentially and distributively stored time series, DKRR can finish the learning task much better than KRR with arbitrary data subset and performs similarly as KRR with all data, provided $m$ is not so large.


\subsubsection{Relation between MSE and number of samples}
In this simulation, we test the generalization performance of DKRR with the number of training samples varying from 5000 to 100000 (the interval is 5000) when $m=10,20,50,100$ respectively. Other experimental setting is the same as that in the previous simulation. For the sake of brevity, we only report the result for Mechanism 2 in (\ref{f_2}) in  Figure \ref{select_n1}.
\begin{figure}[htbp]
\centering
\subfigure[Gaussian noise]{\includegraphics[scale=0.295]{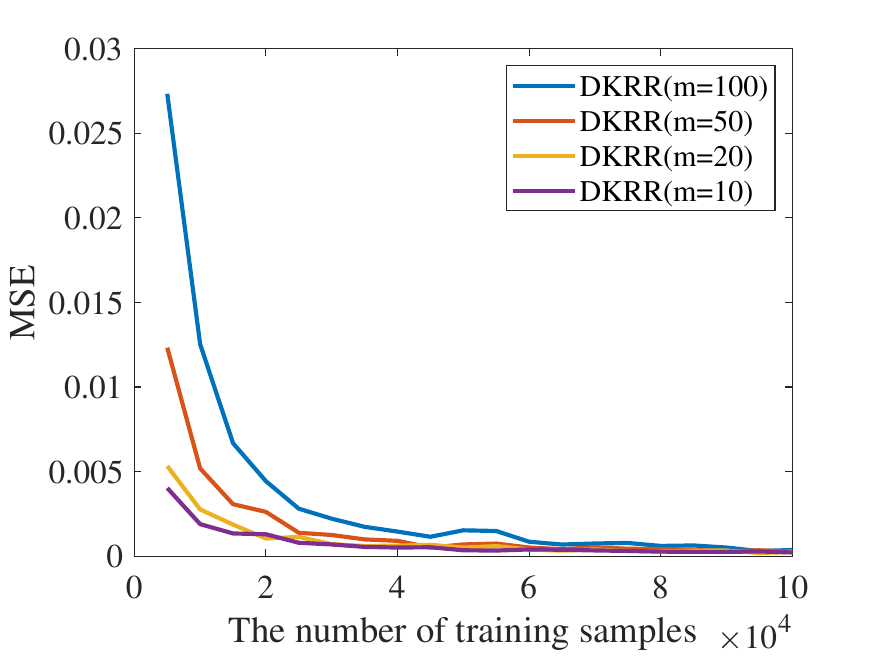}\label{seq_gau_n}}
\subfigure[Uniform noise]{\includegraphics[scale=0.295]{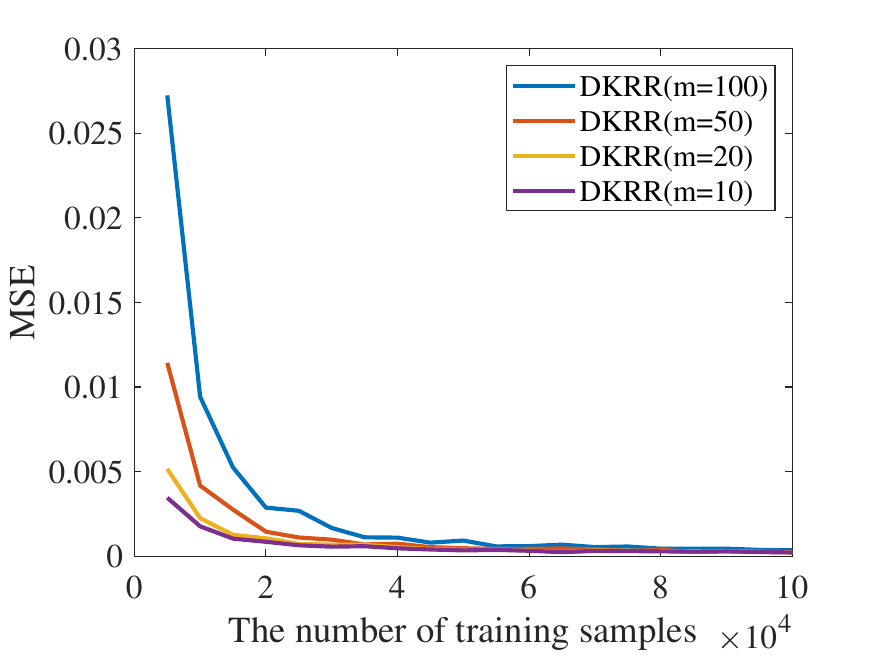}\label{seq_uni_n}}
\caption{Relation between MSE and number of samples for (\ref{f_2}).}\label{select_n1}
\end{figure}

From Figure \ref{select_n1} we see that, as the number of training samples increases, MSE  for DKRR decreases.  This means that DKRR can get better performance with more training samples. 
For each fixed number of local machines, there exists a lower bound of training samples to achieve a relatively small error. Obviously, the lower bound increases as the number of local machines increases. This also confirms the conclusion in Corollary \ref{Corollary:KRR-derr-dec}. Additionally, using DKRR with larger $m$ can significantly shorten the computational  time. Specifically, we run DKRR on Matlab R2021a and the running time for $m = 10, 20, 50, 100$ is about $80, 40, 20, 5$ hours respectively. In other words, DKRR can help to resolve  time-consuming problem for large-scale time series.

\subsubsection{Trends of DKRR in forecasting time series}
We predict the time series generated by (\ref{f_1}) with Gaussian noise. In this simulation, we set $m=10$ in DKRR. The total number of training samples and test samples are 5000 and 50 respectively. Figure~\ref{prediction_w/onoise} shows prediction of (\ref{f_1}) without label noise and Figure~\ref{prediction_wnoise} shows prediction of { (\ref{f_1})} with label noise, where ``Label'' denotes the real trend generated by (\ref{f_1}). As shown in Fig.~\ref{prediction_w/onoise} and Fig.~\ref{prediction_wnoise}, DKRR can successfully capture the trend information, especially when the effects of noise are neglected. This shows that, DKRR is a powerful tool to predict the trend of time series, provided they are distributively in a sequential manner.

\begin{figure}[htbp]
\centering
\subfigure[ label without noise]{\includegraphics[scale=0.295]{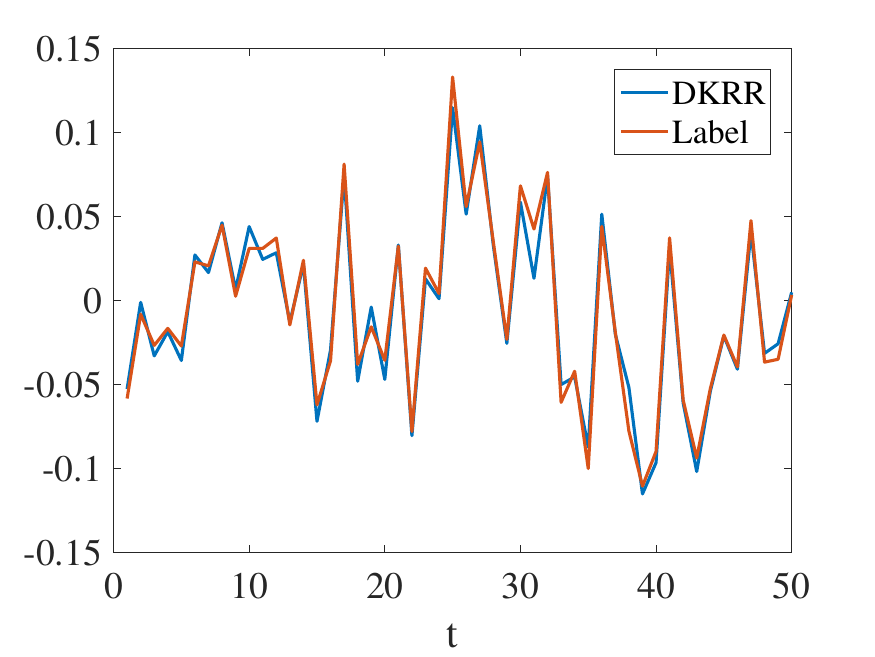}\label{prediction_w/onoise}}
\subfigure[ label with noise]{\includegraphics[scale=0.295]{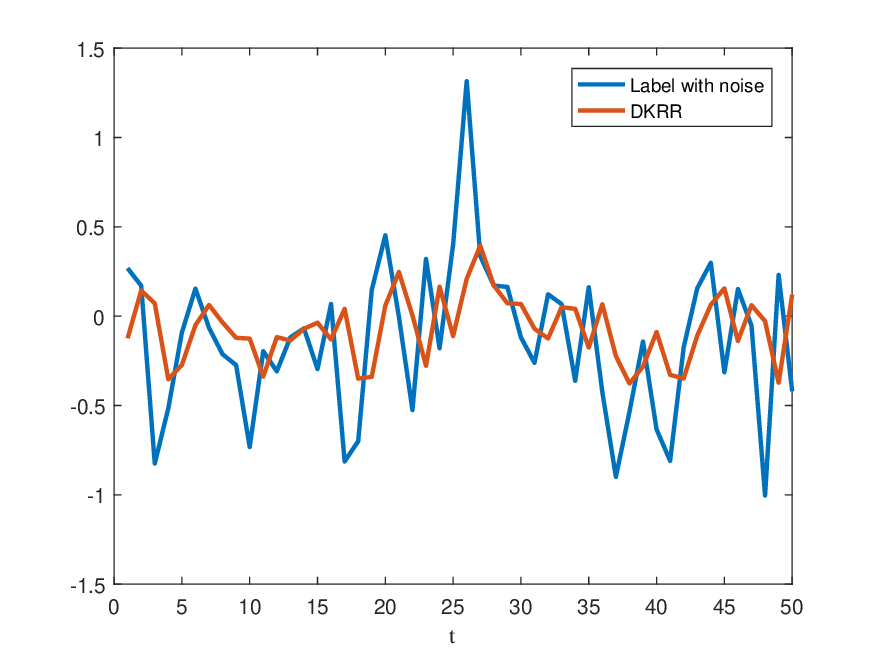}\label{prediction_wnoise}}
\caption{Prediction of time series generated by (\ref{f_1})}\label{prediction}
\end{figure}

\subsection{DKRR for parallel distributively stored   time series}
The kernel, time series generating mechanism  and regularization parameter of this simulation are the same as those in the previous subsection. The only difference is that the time series in this simulation are distributively stored in a parallel manner.
Denote by $N_{tr}$ and $N_{te}$ the number of total training samples and testing samples generated by a local machine, respectively. For each local machine, we generate $[N_{tr}/m]+N_{te}+1$ samples via (\ref{f_1}) or $[N_{tr}/m]+N_{te}+3$ samples via (\ref{f_2}). The training samples for $k$-th local machine are: $\{x_{t,k},x_{t+1,k}\}_{t=1}^{[N_{tr}/m]}$ for (\ref{f_1}) with $x_{1,k}$ drawn randomly according to $\mathcal U(0,1)$ and  $\{[x_{t,k},x_{t+1,k}],x_{t+2,k}\}_{t=1}^{[N_{tr}/m]}$ for (\ref{f_2}) with $x_{1,k},x_{2,k}$ randomly drawn according to $\mathcal U(0,1)$. We can construct a test set containing $N_{te}$ samples: $ D_{te}^k\triangleq \{x_{t,k},x_{t+1,k}-\varepsilon_t\}_{t=[N_{tr}/m]+1}^{[N_{tr}/m]+N_{te}}$ for (\ref{f_1}), $D_{te}^k\triangleq \{[x_{t,k},x_{t+1,k}],x_{t+2,k}-\varepsilon_t\}_{t=[N_{tr}/m]+2}^{[N_{tr}/m]+N_{te}+1}$ for (\ref{f_2}). And the test samples are the aggregation of test set of all local machines: $D_{te}\triangleq\bigcup_{k=1}^m\{D_{te}^k\}$.

Similar as LLE in the previous subsection, we adopt the \emph{minimum local error} (MLE) which  is the smallest  MSE for running KRR on each data subset. We denote it by $DKRR_{min}$. Actually, $DKRR_{min}$ is near to the widely used real-world setting: samples are stored in different companies or institutions and they do not want to share their data because of some privacy policies. Thus each company or institution has to predict independently. In such a way, $DKRR_{min}$ reflects the best prediction of individual company with their own data  and presents a bottleneck for such a learning scheme.

\subsubsection{Relation between MSE and number of local machines}

In this simulation, the number of local machines  varies from $\{1,2,4,5,10,20,25,40,50,100\}$, and we also record
the logarithm of MSE when the number of local machines  varies from $\{1,2,4,5,10,20\}$ in the subgraphs of corresponding
figures.
We set $N_{tr} =5000$, $N_{te}=100$  and the test samples are $\bigcup_{k=1}^m\{D_{te}^k\}$, that is, the total numbers of training samples is fixed as 5000 and testing samples is $100m$ for different value of $m$. Other numerical settings are the same as its sequential counterpart in the previous subsection. The numerical results are shown in Fig.~\ref{select_m}.

\begin{figure}[htbp]
\centering
\subfigure[Gaussian noise on (\ref{f_1})]{\includegraphics[scale=0.25]{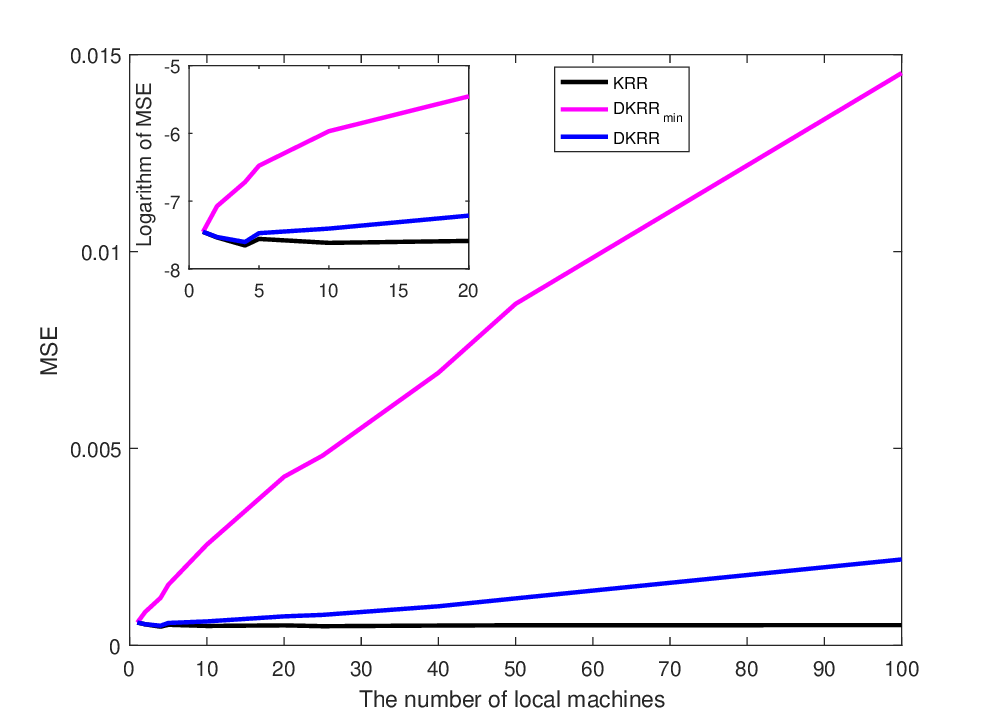}\label{para_gau_m1}}
\subfigure[Uniform noise on (\ref{f_1})]{\includegraphics[scale=0.25]{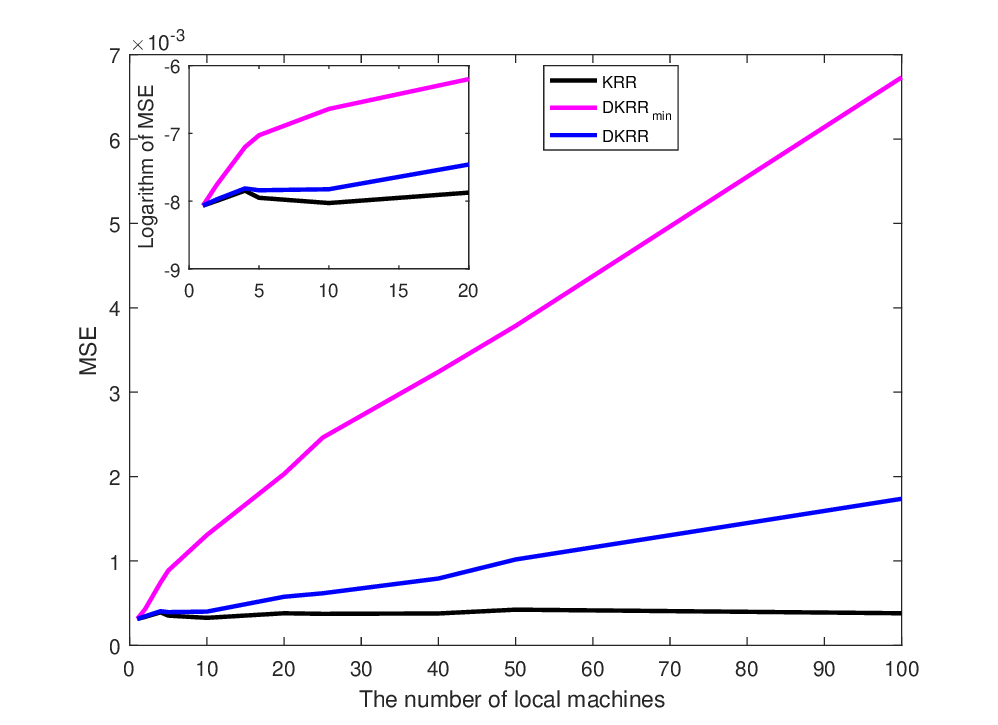}\label{para_uni_m1}}\\
\subfigure[Gaussian noise on (\ref{f_2})]{\includegraphics[scale=0.25]{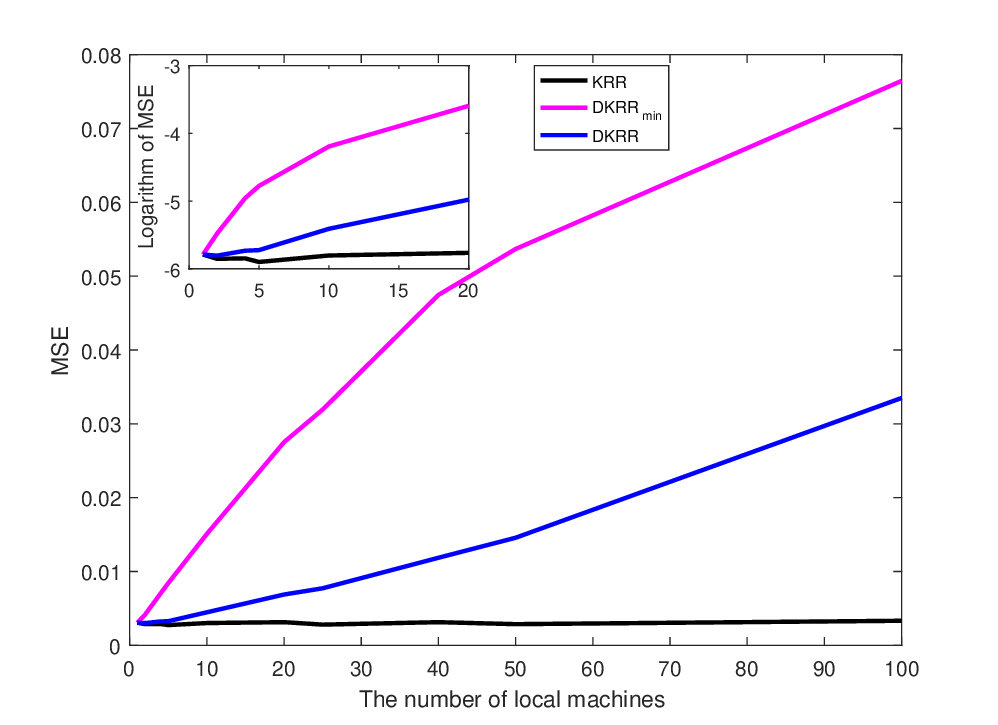}\label{para_gau_m2}}
\subfigure[Uniform noise on (\ref{f_2})]{\includegraphics[scale=0.25]{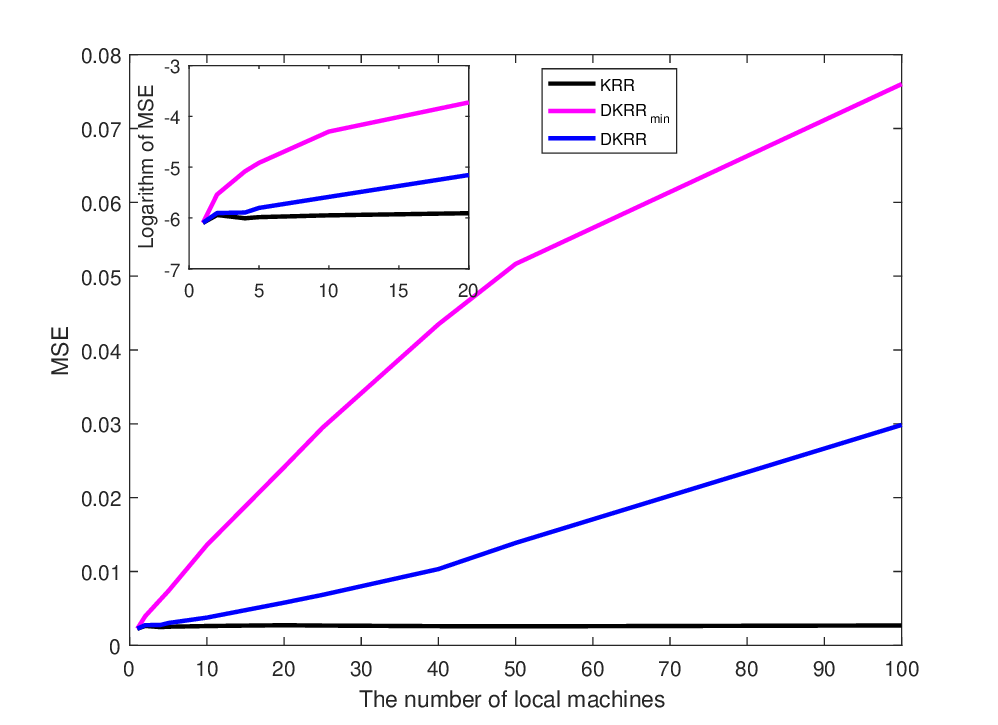}\label{para_uni_m2}}\\
\caption{Relation between MSE and number of local machines}\label{select_m}
\end{figure}

From Fig.~\ref{select_m}, we can draw the following two conclusions similar as its sequential counterpart in the previous subsection: 1) DKRR performs similar as KRR with whole data provided $m$ is not so large by showing similar AEs and GMSEs.
2) MLE curves increase sharply and are far from the AE curves when $m>1$, which means that the predictions will be inaccurate if a company only uses its own data to make predictions in real world applications. However, in order to protect privacy, companies cannot share data with each other, which increases the obstacles for companies to cooperate to improve prediction accuracy.  DKRR solves this dilemma significantly. We can see that AEs can always be compared with GMSEs when the number of local machines is not too large. This not only verifies the theoretical result of Corollary \ref{Corollary:KRR-derr-dec} once again, but also shows that we can effectively use the data information of several companies by using the DKRR under the premise of protecting privacy and not sharing their own data, and achieve the same prediction accuracy as the companies completely share their  data for predicting.

\subsubsection{Relation between MSE and number of samples}

{  In this simulation, we test the generalization performance of DKRR with total number of training samples varying from 5000 to 100000 (the interval is 5000) and fix numbers of testing samples as 100. Here, we set $N_{te}=100$ and the test samples are $D_{te}^1$. The other numerical settings are the same as its sequential counterpart in the previous subsection. For the sake of brevity, we only report the result for Mechanism 2 in Fig.~\ref{select_n}. }

\begin{figure}[htbp]
\centering
\subfigure[Gaussian noise]{\includegraphics[scale=0.295]{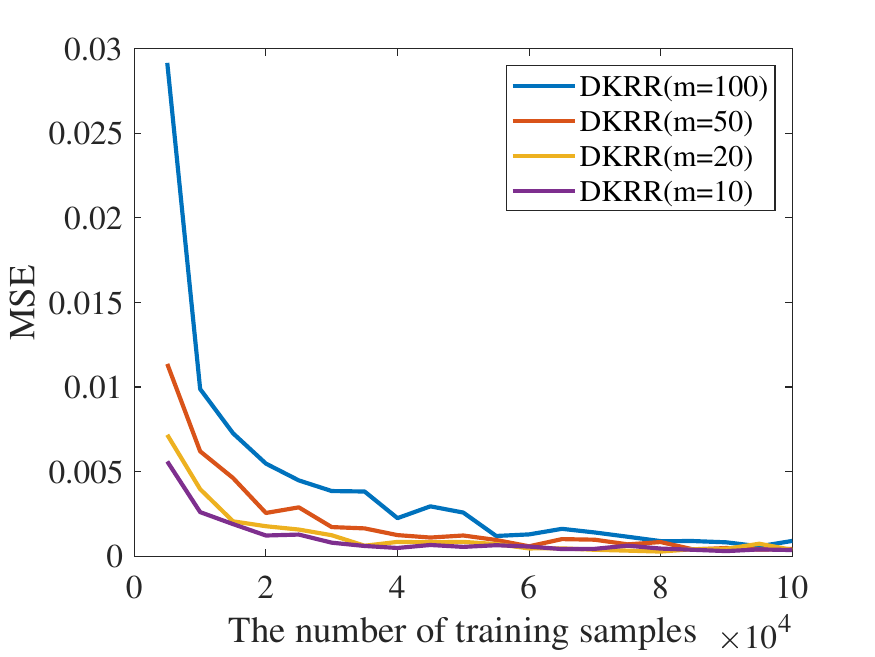}\label{para_gau_n}}
\subfigure[Uniform noise]{\includegraphics[scale=0.295]{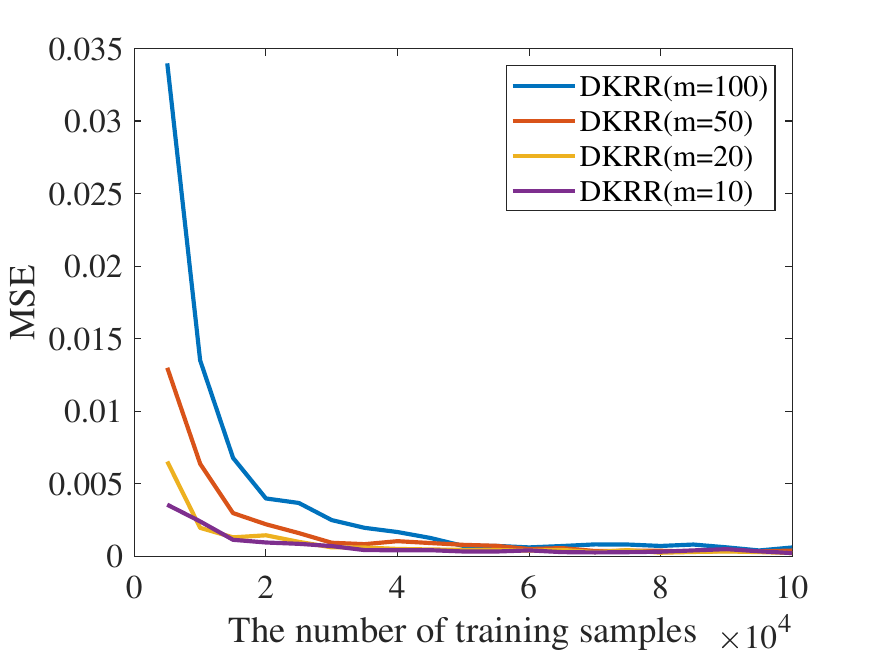}\label{para_uni_n}}
\caption{Relation between MSE and number of samples for (\ref{f_2}).}\label{select_n}
\end{figure}

From Fig.~\ref{select_n} we see that, for each fixed number of local machines, MSE for DKRR  decreases when the numbers of training samples   increases, and there exists a lower bound of training samples to achieve a relatively small error. Obviously, the lower bound increases as the number of local machines increases. This also verifies the theoretical result in Corollary \ref{Corollary:KRR-derr-dec}.

\subsubsection{Trends of DKRR in forecasting time series}

{ In this simulation, we predict the time series generated by (\ref{f_1}) with Gaussian noise and set $m = 10$ in DKRR. The total number of training samples and test samples are 5000 and 50 respectively. Here, we set $N_{te}=50$ and the test samples are $D_{te}^1$. The other numerical settings are the same as its sequential counterpart in the previous subsection and the numerical results are shown in Fig.~\ref{prediction_para}.
From figures we see that, DKRR can successfully extract the trend information, especially when the influence of noise is ignored. This shows that DKRR is still a powerful tool for predicting the trend of a time series when they are distributively in a parallel manner.
}

\begin{figure}[htbp]
\centering
\subfigure[label without noise]{\includegraphics[scale=0.295]{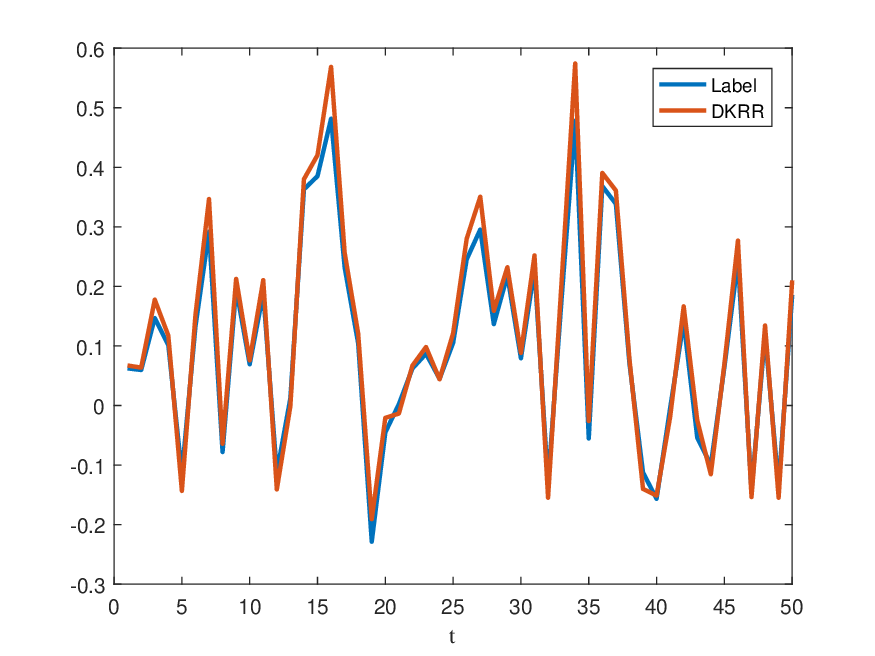}\label{prediction_para_w/onoise}}
\subfigure[label with noise]{\includegraphics[scale=0.295]{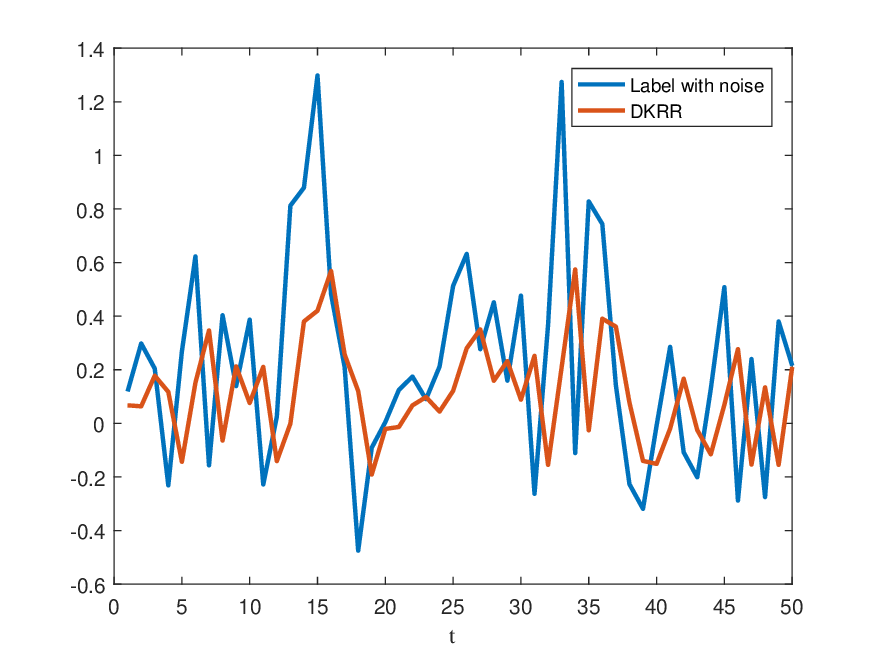}\label{prediction_para_wnoise}}
\caption{Prediction of time series generated by (\ref{f_1})}\label{prediction_para}
\end{figure}

\subsection{Real world applications}
{  In this part, we focus on Europe Brent Spot Prices data (\url{https://datahub.io/core/oil-prices}) from September 12, 2018 to August 28, 2020 and the data is daily recorded. There are 500 items in total, and we predict the last 100 samples by DKRR on setting m = 10 or 100. We use kernel function
\begin{eqnarray*}
K(x,x')&=&1+min(x,x')
\end{eqnarray*}
in the experiment.  The results are shown in Fig.~\ref{Brant_prediction}. According to the curves in the figure, we can see that the predicted trend has little difference from the real trend. When the value of $m$ is relatively large, we can still accurately predict the trend through DKRR. All these show the power of DKRR in predicting distributively stored time series.}

\begin{figure}[htbp]
\centering
\subfigure[The result of $m=10$]{\includegraphics[scale=0.295]{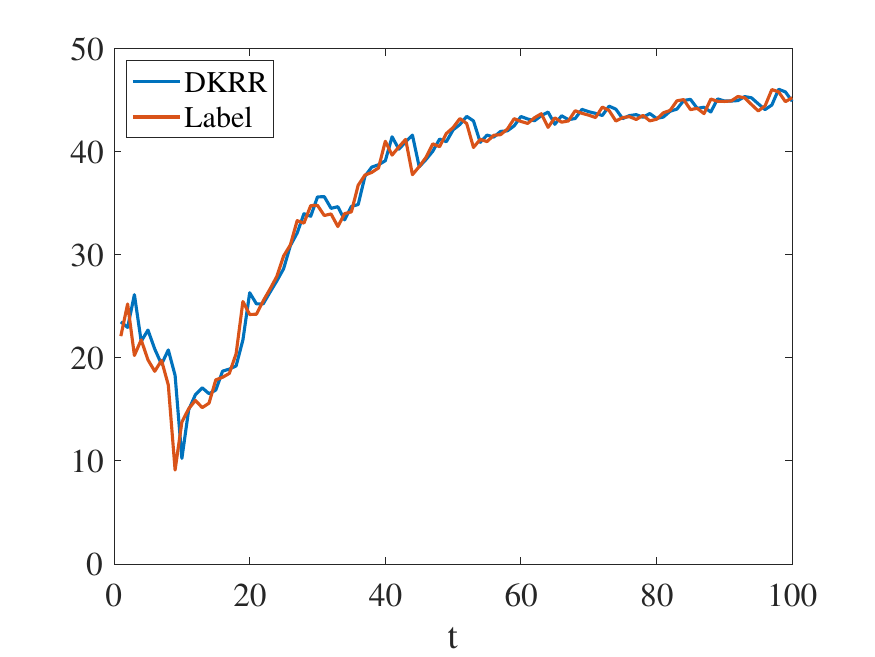}\label{prediction_daily_m10}}
\subfigure[The result of $m=100$]{\includegraphics[scale=0.295]{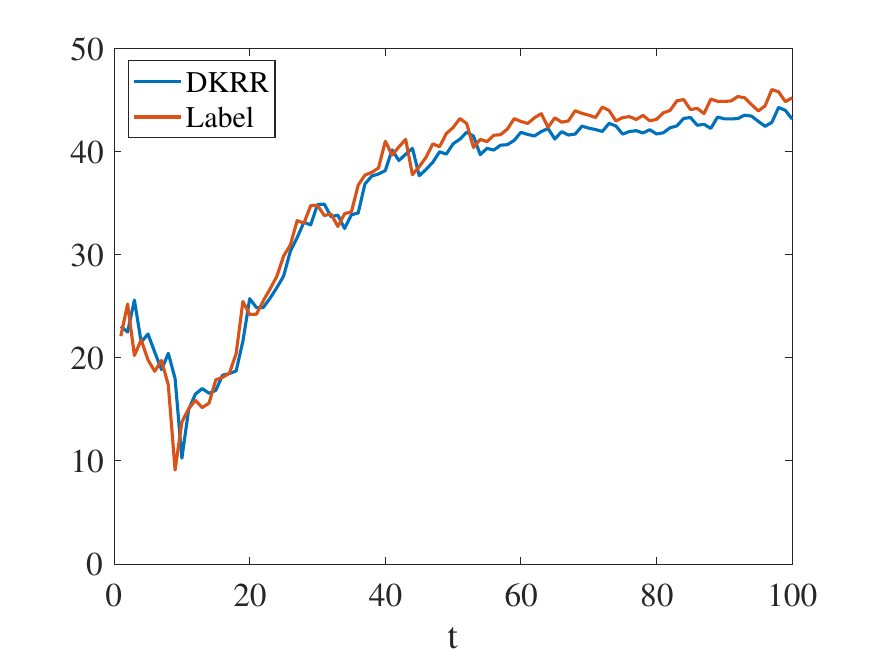}\label{prediction_daily_m100}}
\caption{The predicting trend of Europe Brent Spot Prices.}\label{Brant_prediction}
\end{figure}

\section{Step-stones for Analysis }\label{Sec:tools}
Since analysis technique based on Bernstein-type concentration inequality \cite{Modha1996} cannot provide optimal learning rates, we turn to utilizing the integral operator approach, just as \cite{Sun2010} did. Our main novelties in the proof are    the recently developed integral operator technique in \cite{Blanchard2016,Lin2017,Guo2017} and the following covariance inequality for Hilbert-valued strong mixing sequences   \cite[Lemma 2.2]{Dehling1982}, with which we succeed in deriving tight bounds for operator differences in Lemmas \ref{Lemma:Operator-difference}-\ref{Lemma:Operator-difference-2}.

\begin{lemma}\label{Lemma:tool1}
Let $\xi$ and $\eta$ be random variables with values in a separable Hilbert space $\mathcal H$.
Let $\mathcal J$ and $\mathcal K$ be two measurable $\sigma$-fields generated by $\xi$ and $\eta$, respectively.    $u=v=\infty$, $t=1$, then
\begin{equation}\label{tool-main}
    |E\langle \xi,\eta\rangle_{\mathcal H} - \langle E\xi,E\eta\rangle_{\mathcal H}|\leq  15(\alpha(\mathcal J,\mathcal K))^{1/t}\|\xi\|_u \|\eta\|_v,
\end{equation}
where $\|\xi\|_u$ denotes the $u$-th moment as $\|\xi\|_u=(E\|\xi\|_{\mathcal H}^u)^{1/u}$ for $1\leq u<\infty$ and $\|\xi\|_\infty=\sup\|\xi\|_{\mathcal H}$  and $\alpha(\mathcal J,\mathcal K)$ is defined by (\ref{alpha}).
\end{lemma}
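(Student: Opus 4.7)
The plan is to extend Davydov's classical scalar covariance inequality to the Hilbert-valued setting by projecting onto coordinates and handling the two extremes $u=v=\infty,\,t=1$ and $1<u,v,t<\infty$ separately. Removing means, write $\widetilde\xi=\xi-E\xi$ and $\widetilde\eta=\eta-E\eta$; the quantity to bound equals $|E\langle\widetilde\xi,\widetilde\eta\rangle_{\mathcal H}|$. Fixing a complete orthonormal basis $\{e_k\}_{k\ge 1}$ of $\mathcal H$ and setting $\widetilde\xi^{(k)}=\langle\widetilde\xi,e_k\rangle$, $\widetilde\eta^{(k)}=\langle\widetilde\eta,e_k\rangle$, Parseval gives
\[
E\langle\widetilde\xi,\widetilde\eta\rangle_{\mathcal H}=\sum_{k\ge 1}E[\widetilde\xi^{(k)}\widetilde\eta^{(k)}],
\]
where each $\widetilde\xi^{(k)}$ is $\mathcal J$-measurable and each $\widetilde\eta^{(k)}$ is $\mathcal K$-measurable, so that scalar Davydov applies term by term.

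For the extreme case $u=v=\infty$, $t=1$, I would approximate $\xi$ and $\eta$ in Hilbert norm by simple functions $\sum_i x_i\mathbf 1_{A_i}$ and $\sum_j y_j\mathbf 1_{B_j}$ with $A_i\in\mathcal J$, $B_j\in\mathcal K$; the centred inner product then reduces to $\sum_{i,j}\langle x_i,y_j\rangle_{\mathcal H}\bigl(P(A_i\cap B_j)-P(A_i)P(B_j)\bigr)$. Grouping the pairs $(i,j)$ by the sign of the bracketed difference into four rectangles of the form $A\times B$ with $A\in\mathcal J$ and $B\in\mathcal K$, and using $|\langle x_i,y_j\rangle_{\mathcal H}|\le\|\xi\|_\infty\|\eta\|_\infty$, the bound follows directly from the defining supremum of $\alpha(\mathcal J,\mathcal K)$. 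For the general case $1<u,v,t<\infty$, I would truncate $\xi,\eta$ at norm level $N$, apply the bounded-case result to the truncated pair, and control the truncation remainders by Markov and H\"older in terms of $N^{-(u-1)}\|\xi\|_u^u$ and $N^{-(v-1)}\|\eta\|_v^v$, optimising $N$ against the bounded contribution of order $N^2\alpha$ so that the exponent $1/t$ emerges from the balance.

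The main obstacle is that the naive Cauchy--Schwarz summation of the coordinate bounds $|E[\widetilde\xi^{(k)}\widetilde\eta^{(k)}]|\le c\,\alpha^{1/t}\|\widetilde\xi^{(k)}\|_u\|\widetilde\eta^{(k)}\|_v$ does not collapse to $\|\xi\|_u\|\eta\|_v$ when $u,v\ne 2$: already $\xi=e_K$ with $K$ uniform on $\{1,\dots,N\}$ yields $\sum_k\|\xi^{(k)}\|_u^2=N^{1-2/u}$ while $\|\xi\|_u=1$, so a direct coordinate-by-coordinate application of Davydov diverges in infinite dimension. I would circumvent this by a H\"older interpolation at each coordinate between the trivial bound $|E[\widetilde\xi^{(k)}\widetilde\eta^{(k)}]|\le\|\widetilde\xi^{(k)}\|_2\|\widetilde\eta^{(k)}\|_2$, which sums cleanly since $\sum_k\|\widetilde\xi^{(k)}\|_2^2=E\|\widetilde\xi\|_{\mathcal H}^2$, and the scalar Davydov bound, choosing the interpolation weight so that $1/u+1/v+1/t=1$ arises naturally; the absolute constant $15$ should then be inherited from the scalar Davydov constant through this interpolation, and the $u=v=\infty,\,t=1$ boundary case is recovered in the limit.
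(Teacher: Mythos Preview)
The paper does not prove this lemma at all: it is quoted verbatim as \cite[Lemma~2.2]{Dehling1982} and used as a black-box tool, so there is no ``paper's own proof'' to compare against. Your proposal is therefore an attempt to reprove the Dehling--Philipp result from scratch.

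On the substance of your attempt: the second paragraph (simple-function approximation for the bounded case, then truncation at a norm level $N$ and optimisation for the general case) is exactly the Dehling--Philipp strategy and is sufficient on its own. The coordinate expansion via Parseval in your first paragraph and the interpolation workaround in your last paragraph are both unnecessary detours. Once you have the bounded case $u=v=\infty$, truncating $\xi$ and $\eta$ at $\|\cdot\|_{\mathcal H}$-level $N$ works directly in $\mathcal H$ without ever picking a basis; the ``obstacle'' you describe only arises if you insist on summing scalar Davydov bounds coordinate by coordinate, which you do not need to do. Your proposed interpolation fix between the Cauchy--Schwarz bound and the scalar Davydov bound would not obviously produce $\|\xi\|_u\|\eta\|_v$ either, since interpolating inequalities at each $k$ and then summing does not in general commute with taking $L^u(\mathcal H)$-norms.

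One technical caution on the bounded case: the ``grouping by sign into four rectangles'' is too loose as stated. The indices $(i,j)$ for which $P(A_i\cap B_j)-P(A_i)P(B_j)$ is positive need not form a product set $A\times B$ with $A\in\mathcal J$, $B\in\mathcal K$. The clean argument is to view $\nu(C)=P(C)-(P|_{\mathcal J}\otimes P|_{\mathcal K})(C)$ as a signed measure on $\mathcal J\otimes\mathcal K$, observe that its total variation is at most $4\alpha(\mathcal J,\mathcal K)$, and bound $\bigl|\int\langle\xi,\eta\rangle_{\mathcal H}\,d\nu\bigr|\le\|\xi\|_\infty\|\eta\|_\infty\cdot 4\alpha$. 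The constant $15$ then comes out of the truncation-and-optimise step for general $u,v,t$.
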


\subsection{Integral operator approach for strong mixing sequences}

 Let $S_{D}:\mathcal H_K\rightarrow\mathbb R^{|D|}$ be the sampling
operator \cite{Smale2007} defined by
$$
         S_{D}f:=(f(x))_{(x,y)\in D}.
$$
 Its scaled adjoint $S_D^T:\mathbb R^{|D|}\rightarrow
\mathcal H_K$  is
given by
$$
       S_{D}^T{\bf c}:=\frac1{|D|}\sum_{i=1}^{|D|}c_iK_{x_i},\qquad {\bf
       c}:=(c_1,c_2,\dots,c_{|D|})^T
       \in\mathbb
       R^{|D|}.
$$
Define
$$
         L_{K,D}f:=S_D^TS_Df=\frac1{|D|}\sum_{(x,y)\in
         D}f(x)K_x.
$$
Then, it can be  found in  \cite{Lin2017} that
\begin{equation}\label{operator KRR}
    f_{D,\lambda}=\left(L_{K,D}+\lambda
    I\right)^{-1}S_{D}^Ty_D
\end{equation}
and
\begin{equation}\label{operator DKRR}
    \overline{f}_{D,\lambda}=\sum_{j=1}^m\frac{|D_j|}{|D|}
    \left(L_{K,D_j}+\lambda
    I\right)^{-1}S_{D_j}^Ty_{D_j},
\end{equation}
where $y_D:=(y_1,\dots,y_{|D|})^T$.

Our first tool is to derive upper bounds for operator differences
$\|L_K-L_{K,D}\|$ and $\|(L_K+\lambda I)^{-1/2}(L_K-L_{K,D})\|$ in the following lemma, whose proof will be given in Appendix.

 \begin{lemma}\label{Lemma:Operator-difference}
  For an $\alpha$-mixing sequence $\{x_i\}$ and arbitrary $\delta>0$, we have
\begin{equation}\label{operator-d1}
       E[\| (L_K-L_{K,D})\|^2]
       \leq
       \frac{\kappa^4}{|D|}\left(1+30\sum_{\ell=1}^{|D|}\alpha_\ell\right)
\end{equation}
and
\begin{eqnarray}\label{operator-d2}
   &&E[\|\left(L_K+\lambda I\right)^{-1/2} (L_K-L_{K,D})\|^2] \nonumber\\
   &\leq&
   \frac{\kappa^2\mathcal N(\lambda) }{|D|}
   +
   15 \kappa^{\frac{4(\delta+1)}{\delta+2}}(\mathcal N(\lambda))^{\frac{2}{\delta+2}} \lambda^{-\frac{\delta}{\delta+2}} \frac1{|D|}\sum_{\ell=1}^{|D|}(\alpha_\ell)^{\frac\delta{2+\delta}}.
\end{eqnarray}
\end{lemma}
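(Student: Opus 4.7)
The plan is to reduce both inequalities to Hilbert--Schmidt variance computations for stationary sums, in which Lemma \ref{Lemma:tool1} plays the role that orthogonality of independent increments plays in the i.i.d.\ case. Throughout I view rank-one tensors $K_x\otimes K_x$ as elements of the separable Hilbert space of Hilbert--Schmidt operators on $\mathcal H_K$ and exploit $\|A\|_{op}\leq \|A\|_{HS}$ to pass from the operator norm on the left of \eqref{operator-d1}--\eqref{operator-d2} to the HS norm.

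For \eqref{operator-d1}, write $L_{K,D}=\frac{1}{|D|}\sum_{i=1}^{|D|}K_{x_i}\otimes K_{x_i}$, so that
\[
   E\|L_K-L_{K,D}\|^2\leq E\|L_K-L_{K,D}\|_{HS}^2=\frac{1}{|D|^2}\Bigl(\sum_i V_i+\sum_{i\neq j}C_{ij}\Bigr),
\]
with $V_i=E\|K_{x_i}\otimes K_{x_i}\|_{HS}^2-\|L_K\|_{HS}^2$ and $C_{ij}=E\langle K_{x_i}\otimes K_{x_i},K_{x_j}\otimes K_{x_j}\rangle_{HS}-\|L_K\|_{HS}^2$. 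The diagonal terms are controlled by $V_i\leq E\|K_x\|_K^4\leq\kappa^4$. For each off-diagonal pair $i<j$, $K_{x_i}\otimes K_{x_i}$ is $\mathcal M_{1,i}$-measurable and $K_{x_j}\otimes K_{x_j}$ is $\mathcal M_{j,\infty}$-measurable, so Lemma \ref{Lemma:tool1} with $u=v=\infty$, $t=1$ and the deterministic bound $\|K_x\otimes K_x\|_{HS}\leq\kappa^2$ gives $|C_{ij}|\leq 15\alpha_{j-i}\kappa^4$. Coupling this with the counting identity $\sum_{i<j}\alpha_{j-i}\leq |D|\sum_{\ell=1}^{|D|}\alpha_\ell$ and dividing by $|D|^2$ produces \eqref{operator-d1}.

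For \eqref{operator-d2} I repeat the same template with the preconditioned tensors $\eta_i:=(L_K+\lambda I)^{-1/2}K_{x_i}\otimes K_{x_i}$, whose common expectation equals $(L_K+\lambda I)^{-1/2}L_K$. The factorization $\|\eta_i\|_{HS}=\|(L_K+\lambda I)^{-1/2}K_{x_i}\|_K\,\|K_{x_i}\|_K$ combined with $\|K_x\|_K\leq\kappa$ and the key identity
\[
   E\|(L_K+\lambda I)^{-1/2}K_x\|_K^2=\mbox{Tr}\bigl((L_K+\lambda I)^{-1}L_K\bigr)=\mathcal N(\lambda)
\]
yields $E\|\eta_i\|_{HS}^2\leq \kappa^2\mathcal N(\lambda)$, which accounts for the first summand of \eqref{operator-d2}. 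For the cross terms I apply Lemma \ref{Lemma:tool1} with the H\"older triple $u=v=\delta+2$, $t=(\delta+2)/\delta$, producing
\[
   |E\langle\eta_i,\eta_j\rangle_{HS}-\langle E\eta_i,E\eta_j\rangle_{HS}|\leq 15\,\alpha_{j-i}^{\delta/(\delta+2)}\,\|\eta_i\|_{\delta+2}\,\|\eta_j\|_{\delta+2},
\]
which is where the exponent $\delta/(\delta+2)$ on the mixing coefficient enters. The remaining step is a careful interpolation bound for $\|\eta_i\|_{\delta+2}$, obtained by expanding $\|\eta_i\|_{HS}^{\delta+2}$, using $\|(L_K+\lambda I)^{-1/2}\|_{op}\leq\lambda^{-1/2}$ and $\|K_x\|_K\leq\kappa$ on sup-type factors, and leaving a single factor $\|(L_K+\lambda I)^{-1/2}K_x\|_K^2$ to be closed by the $\mathcal N(\lambda)$ identity. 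This leads to a bound for $\|\eta_i\|_{\delta+2}^2$ of the form $\kappa^{c(\delta)}\,\lambda^{-\delta/(\delta+2)}\,\mathcal N(\lambda)^{2/(\delta+2)}$; summing the bilinear estimate over $i\neq j$ with $\sum_{i<j}\alpha_{j-i}^{\delta/(\delta+2)}\leq|D|\sum_{\ell=1}^{|D|}\alpha_\ell^{\delta/(\delta+2)}$ and rearranging delivers \eqref{operator-d2}.

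The main obstacle is the exponent bookkeeping in the interpolation for $\|\eta_i\|_{\delta+2}$: the three quantities involved --- the sup bound $\kappa^2\lambda^{-1/2}$, the second-moment bound $\kappa^2\mathcal N(\lambda)$, and the H\"older exponent $1/t=\delta/(\delta+2)$ inherited from the covariance inequality --- have to be balanced precisely to recover the $\mathcal N(\lambda)^{2/(\delta+2)}$ and $\lambda^{-\delta/(\delta+2)}$ factors in the second summand of \eqref{operator-d2}. Once this balancing is set up, the rest of the argument is a clean reduction of an operator-norm estimate to a stationary Hilbert-valued variance computation in which Lemma \ref{Lemma:tool1} substitutes for orthogonality.
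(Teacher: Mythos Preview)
Your proposal is correct and follows essentially the same route as the paper: pass to the Hilbert--Schmidt norm, expand the variance of the stationary sum of rank-one tensors, control the diagonal by the second-moment bound $E\|\eta_i\|_{HS}^2\le\kappa^2\mathcal N(\lambda)$, and control the off-diagonal covariances via Lemma~\ref{Lemma:tool1} with $u=v=\infty$ for \eqref{operator-d1} and with $u=v=\delta+2$, $t=(\delta+2)/\delta$ for \eqref{operator-d2}, using the interpolation $E\|\eta\|_{HS}^{2+\delta}\le(\sup\|\eta\|_{HS})^{\delta}\,E\|\eta\|_{HS}^{2}$ to extract $\lambda^{-\delta/(\delta+2)}\mathcal N(\lambda)^{2/(\delta+2)}$. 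The only cosmetic difference is that the paper outsources \eqref{operator-d1} to \cite{Sun2010} whereas you write it out, and the paper records the explicit $\kappa$-exponent where you leave it as $c(\delta)$.
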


As shown in \cite{Guo2017,Lin2018CA}, the product $\|(L_{K,D}+\lambda I)^{-1}(L_{K}+\lambda I)\|$ plays a  crucial role in deriving optimal learning rates of  KRR and DKRR for i.i.d. samples. In the following lemma, we adopt the recently developed second order decomposition for operator differences in \cite{Lin2017,Guo2017} to present an upper bound of $\|(L_{K,D}+\lambda I)^{-1}(L_{K}+\lambda I)\|$ for strong mixing sequences, whose proof is also postponed to Appendix.

\begin{lemma}\label{Lemma:Operator-product}
  For an $\alpha$-mixing sequence $\{x_i\}$ and arbitrary $\delta>0$, there holds
 \begin{eqnarray*}
    &&E\left[\|(L_{K,D}+\lambda I)^{-1}(L_{K}+\lambda I)\|\right]\\
    &\leq&
     \frac{2\kappa^2\mathcal N(\lambda) }{|D|\lambda}+
    30 \kappa^{\frac{4(\delta+1)}{ \delta+2}}(\mathcal N(\lambda))^{\frac{2}{\delta+2}} \lambda^{-\frac{2\delta+2}{\delta+2}} \frac{1}{|D|}\sum_{\ell=1}^{|D|}(\alpha_\ell)^{\frac\delta{2+\delta}}
     +2.
\end{eqnarray*}
\end{lemma}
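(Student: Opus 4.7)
The plan is to derive a self-referential bound for $X := (L_{K,D}+\lambda I)^{-1}(L_K+\lambda I)$ and then solve it as a quadratic. Since $(L_K+\lambda I) - (L_{K,D}+\lambda I) = L_K - L_{K,D}$, one has the immediate identity $X = I + (L_{K,D}+\lambda I)^{-1}(L_K-L_{K,D})$, so that
$$\|X\| \leq 1 + \bigl\|(L_{K,D}+\lambda I)^{-1}(L_K - L_{K,D})\bigr\|.$$
To connect the residual on the right with the bound (\ref{operator-d2}) from Lemma~\ref{Lemma:Operator-difference}, I would insert $(L_K+\lambda I)^{\pm 1/2}$ and factor
$$(L_{K,D}+\lambda I)^{-1}(L_K-L_{K,D}) = \bigl[(L_{K,D}+\lambda I)^{-1}(L_K+\lambda I)^{1/2}\bigr]\bigl[(L_K+\lambda I)^{-1/2}(L_K - L_{K,D})\bigr].$$
The second bracket is exactly the object estimated by (\ref{operator-d2}); the task then reduces to controlling the first bracket by $\|X\|$ itself.

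Setting $T = (L_{K,D}+\lambda I)^{-1}(L_K+\lambda I)^{1/2}$, I would use $\|T\|^2 = \|T^*T\| = \|(L_K+\lambda I)^{1/2}(L_{K,D}+\lambda I)^{-2}(L_K+\lambda I)^{1/2}\|$, and apply operator monotonicity together with $(L_{K,D}+\lambda I)^{-1}\preceq \lambda^{-1} I$ to deduce $(L_{K,D}+\lambda I)^{-2} \preceq \lambda^{-1}(L_{K,D}+\lambda I)^{-1}$, hence
$$\|T\|^2 \leq \lambda^{-1}\bigl\|(L_K+\lambda I)^{1/2}(L_{K,D}+\lambda I)^{-1}(L_K+\lambda I)^{1/2}\bigr\|.$$
The operator on the right is positive self-adjoint, so its norm equals its spectral radius; conjugation by $(L_K+\lambda I)^{1/2}$ shows it is similar to $X$ and therefore shares its spectrum, so its norm coincides with $\rho(X) \leq \|X\|$. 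This yields $\|T\|^2 \leq \lambda^{-1}\|X\|$, which, combined with the factorization, gives the self-referential quadratic inequality
$$\|X\| \leq 1 + \lambda^{-1/2}\sqrt{\|X\|}\;\bigl\|(L_K+\lambda I)^{-1/2}(L_K - L_{K,D})\bigr\|.$$

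Writing $u = \sqrt{\|X\|}$ and $c = \lambda^{-1/2}\|(L_K+\lambda I)^{-1/2}(L_K-L_{K,D})\|$, the relation $u^2 \leq 1 + cu$ implies $u \leq 1 + c$, hence $\|X\|\leq (1+c)^2 = 1 + 2c + c^2$. Taking expectations, applying Jensen's inequality $E[c]\leq \sqrt{E[c^2]}$, and using the elementary estimate $2\sqrt{a}\leq 1+a$, I would obtain
$$E[\|X\|] \leq 1 + 2E[c] + E[c^2] \leq 2 + 2E[c^2] = 2 + 2\lambda^{-1}\,E\bigl[\|(L_K+\lambda I)^{-1/2}(L_K-L_{K,D})\|^2\bigr].$$
Substituting (\ref{operator-d2}) from Lemma~\ref{Lemma:Operator-difference} then reproduces the right-hand side of the claim exactly. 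The step I expect to require the most care is the spectral-radius reduction $\|T\|^2\leq \lambda^{-1}\|X\|$: although $(L_{K,D}+\lambda I)^{-1}(L_K+\lambda I)$ is not self-adjoint, its similarity to a positive self-adjoint operator via $(L_K+\lambda I)^{1/2}$ is what lets the self-referential bound close; the remaining manipulations (solving $u^2\leq 1+cu$, Jensen, and AM--GM) are routine.
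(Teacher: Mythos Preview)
Your proof is correct but proceeds along a genuinely different line from the paper. The paper never sets up a self-referential inequality; instead it invokes the explicit second-order algebraic identity
\[
A^{-1}B = A^{-1}(B-A)B^{-1}(B-A) + B^{-1}(B-A) + I
\]
with $A = L_{K,D}+\lambda I$ and $B = L_K+\lambda I$, which after applying $\|A^{-1}\|\leq \lambda^{-1}$ and recognizing $\|(B-A)B^{-1}(B-A)\| = \|(L_K+\lambda I)^{-1/2}(L_K-L_{K,D})\|^2$ gives the pointwise bound $\|X\| \leq c^2 + c + 1$ (with your $c$). Taking expectations, Jensen, and $\sqrt{a}\leq a+1$ then yields $2+2E[c^2]$, exactly as you do. Your bootstrap route arrives at the slightly weaker pointwise estimate $\|X\|\leq c^2+2c+1$, but the same final bound after $2\sqrt{a}\leq 1+a$. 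The paper's identity is shorter and avoids the spectral-similarity step $\|T\|^2\leq \lambda^{-1}\|X\|$ altogether; your approach, on the other hand, is of independent interest because it would still close in situations where one has only a first-order perturbation relation $X = I + (\cdot)$ and no convenient second-order expansion at hand.
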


With the help of   Lemma \ref{Lemma:tool1}, we provide  our final tool which focuses on tight bounds for the difference between functions $L_Kf_\rho$ and $S_D^Ty_D$ in the following lemma.

\begin{lemma}\label{Lemma:Operator-difference-2}
Let  $|y_i|\leq M$ almost surely. For an $\alpha$-mixing sequence $\{(x_i,y_i)\}$ and arbitrary $\delta>0$, we have
\begin{equation}\label{operator-d-2-1}
    E[\| (L_Kf_\rho-S_D^Ty_D)\|_K^2]
    \leq \frac{M^2\kappa^2}{|D|}\left(1+30\sum_{\ell=1}^{|D|} \alpha_{\ell}\right)
\end{equation}
and
 \begin{eqnarray}\label{operator-d-2-2}
   &&E[\|\left(L_K+\lambda I\right)^{-1/2} (L_Kf_\rho-S_D^Ty_D)\|_K^2]\\
   &\leq&
   \frac{M^2\mathcal N(\lambda) }{|D|}
   +
   15 \kappa^{\frac{2\delta}{\delta+2}}M^2(\mathcal N(\lambda))^{\frac{2}{\delta+2}} \lambda^{-\frac{\delta}{\delta+2}}\frac1{|D|}\sum_{\ell=1}^{|D|}(\alpha_\ell)^{\frac\delta{2+\delta}}.\nonumber
\end{eqnarray}
\end{lemma}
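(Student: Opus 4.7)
The plan is to regard $L_K f_\rho - S_D^T y_D$ as a normalized centered sum of $\mathcal{H}_K$-valued random variables and bound its second moment by opening the squared norm into diagonal and off-diagonal contributions, then invoking the Hilbert-valued covariance inequality of Lemma \ref{Lemma:tool1} on the off-diagonal cross terms. In both parts (\ref{operator-d-2-1}) and (\ref{operator-d-2-2}) the same scheme applies; only the moment order used in Lemma \ref{Lemma:tool1} and the corresponding moment estimates differ.

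For (\ref{operator-d-2-1}), set $\xi_i = y_i K_{x_i} \in \mathcal{H}_K$. Stationarity together with $f_\rho(x) = \int_{\mathcal Y} y\, d\rho(y|x)$ yield $E\xi_i = E[y K_x] = E[f_\rho(x) K_x] = L_K f_\rho$ for every $i$, so $L_K f_\rho - S_D^T y_D = -|D|^{-1}\sum_i (\xi_i - E\xi_i)$. Expanding the squared norm,
\[
E\Big\|\frac{1}{|D|}\sum_i (\xi_i - E\xi_i)\Big\|_K^2 = \frac{1}{|D|^2}\sum_{i,j} \big(E\langle \xi_i,\xi_j\rangle_K - \langle E\xi_i, E\xi_j\rangle_K\big).
\]
The diagonal terms $i=j$ contribute at most $M^2\kappa^2/|D|$ using $\|\xi_i\|_K \le M\kappa$. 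For $i<j$, since $\xi_i$ is $\mathcal{M}_{1,i}$-measurable and $\xi_j$ is $\mathcal{M}_{j,\infty}$-measurable, Lemma \ref{Lemma:tool1} with $u=v=\infty$, $t=1$ bounds each cross term by $15\,\alpha_{|i-j|}\, M^2\kappa^2$. Summing over lags via $\sum_{i\ne j}\alpha_{|i-j|}\le 2|D|\sum_{\ell=1}^{|D|}\alpha_\ell$ produces the claimed bound $\tfrac{M^2\kappa^2}{|D|}\big(1 + 30\sum_\ell \alpha_\ell\big)$.

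For (\ref{operator-d-2-2}), run the same expansion with the preconditioned elements $\tilde\xi_i = (L_K+\lambda I)^{-1/2} y_i K_{x_i}$. The diagonal piece is controlled by
\[
E\|\tilde\xi_i\|_K^2 \le M^2\, E\langle (L_K+\lambda I)^{-1} K_{x_i}, K_{x_i}\rangle_K = M^2\,\mathrm{Tr}\big((L_K+\lambda I)^{-1} L_K\big) = M^2 \mathcal{N}(\lambda),
\]
contributing $M^2\mathcal{N}(\lambda)/|D|$. For off-diagonal cross terms, invoke Lemma \ref{Lemma:tool1} with $u=v=2+\delta$ and $t=(2+\delta)/\delta$, so the mixing exponent becomes the required $\delta/(2+\delta)$. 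The needed higher moments are obtained by interpolating the pointwise bound $\|(L_K+\lambda I)^{-1/2} K_x\|_K \le \kappa/\sqrt\lambda$ against the second-moment identity above:
\[
E\|\tilde\xi_i\|_K^{2+\delta} \le M^{2+\delta}\,(\kappa^2/\lambda)^{\delta/2}\, \mathcal{N}(\lambda),
\]
which gives $\|\tilde\xi_i\|_{2+\delta}\|\tilde\xi_j\|_{2+\delta} \le M^2 \kappa^{2\delta/(\delta+2)} \lambda^{-\delta/(\delta+2)} \mathcal{N}(\lambda)^{2/(\delta+2)}$, exactly the factor appearing on the right of (\ref{operator-d-2-2}). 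Summing lag contributions through $\sum_{i\ne j}(\alpha_{|i-j|})^{\delta/(2+\delta)} \le 2|D|\sum_{\ell=1}^{|D|}(\alpha_\ell)^{\delta/(2+\delta)}$ completes the estimate.

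The main obstacle is calibrating the moment order in Lemma \ref{Lemma:tool1} for the second bound: pushing $u=v$ too close to $2$ (the optimal $L^2$ choice) makes $t$ blow up so that the mixing coefficients are raised to an almost trivial power and the series diverges even for polynomial mixing; pushing $u=v\to\infty$ forces use of the worst-case $\kappa/\sqrt\lambda$ pointwise bound and destroys the effective-dimension gain. The calibration $u=v=2+\delta$ is exactly the interpolation that produces the $\mathcal{N}(\lambda)^{2/(\delta+2)} \lambda^{-\delta/(\delta+2)}$ trade-off needed downstream in Theorems \ref{Theorem:KRR-err-dec-HK-1}--\ref{Theorem:KRR-derr-dec}. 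Once this is fixed, the rest is routine: separation of diagonal and off-diagonal parts, application of the covariance inequality on each lag, and the standard pair count $\sum_{i\ne j}f(|i-j|)\le 2|D|\sum_{\ell=1}^{|D|}f(\ell)$.
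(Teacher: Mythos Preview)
Your proposal is correct and follows essentially the same route as the paper: define the $\mathcal H_K$-valued variables $\xi_i=y_iK_{x_i}$ (respectively $\tilde\xi_i=(L_K+\lambda I)^{-1/2}y_iK_{x_i}$), expand the squared norm into diagonal and off-diagonal parts, bound the diagonal by $M^2\kappa^2$ (respectively $M^2\mathcal N(\lambda)$), and control the off-diagonal cross terms via Lemma~\ref{Lemma:tool1} with $u=v=\infty,\ t=1$ for (\ref{operator-d-2-1}) and $u=v=2+\delta,\ t=(2+\delta)/\delta$ for (\ref{operator-d-2-2}), using the same interpolation $E\|\tilde\xi_i\|_K^{2+\delta}\le M^{2+\delta}(\kappa^2/\lambda)^{\delta/2}\mathcal N(\lambda)$ that the paper obtains for its $\eta_3$. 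Your discussion of why $u=v=2+\delta$ is the right calibration is a nice addition not made explicit in the paper.
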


\subsection{Error decompositions for KRR}
We present different error decompositions for KRR under the $\mathcal H_K$ norm and $L_{\rho_X}^2$ norm. Define the population version of $f_{D,\lambda}$ to be
\begin{equation}\label{best-app}
   f_\lambda=(L_K+\lambda I)^{-1}L_Kf_\rho.
\end{equation}
Then  (\ref{operator KRR}) and  (\ref{best-app}) yield the following two decompositions.
\begin{eqnarray}\label{Op-KRR-d1}
    &&f_{D,\lambda}-f_\lambda=(L_{K,D}+\lambda I)^{-1}S_D^T y_D-(L_K+\lambda I)^{-1}L_Kf_\rho    \\
    &=&
    ((L_{K,D}+\lambda I)^{-1}- (L_K+\lambda I)^{-1})S_D^T y_D \nonumber\\
    &+&
    (L_K+\lambda I)^{-1}(S_D^T y_D-L_Kf_\rho)\nonumber\\
    &=&
    (L_K+\lambda I)^{-1}(L_K-L_{K,D})f_{D,\lambda}+(L_K+\lambda I)^{-1}(S_D^T y_D-L_Kf_\rho)   \nonumber
\end{eqnarray}
and
\begin{eqnarray}\label{Op-KRR-d2}
    &&f_{D,\lambda}-f_\lambda=(L_{K,D}+\lambda I)^{-1}S_D^T y_D-(L_K+\lambda I)^{-1}L_Kf_\rho \nonumber \\
    &=&
    (L_{K,D}+\lambda I)^{-1}(S_D^T y_D-L_Kf_\rho)\nonumber\\
    &+&(L_{K,D}+\lambda I)^{-1}-(L_K+\lambda I)^{-1})L_Kf_\rho\nonumber\\
    &=&
    (L_{K,D}+\lambda I)^{-1}(S_D^T y_D-L_Kf_\rho)\nonumber\\
    & +&
    (L_{K,D}+\lambda I)^{-1}(L_K-L_{K,D})f_\lambda.
\end{eqnarray}

Denote
\begin{eqnarray}
    \mathcal P_{|D|,\lambda}
    &:=& E[\|(L_K+\lambda I)^{-1/2}(L_K-L_{K,D})\|^2] \label{Def.P}\\
    \mathcal Q_{|D|,\lambda}
    &:=& E\left[\|(L_{K,D}+\lambda I)^{-1}(L_{K}+\lambda I)\|\right] \label{Def.Q}\\
    \mathcal R_{|D|,\lambda}
    &:=&
    E[\|\left(L_K+\lambda I\right)^{-1/2} (L_Kf_\rho-S_D^Ty_D)\|_K^2] \label{Def.R}\\
    \mathcal S_{|D|}
    &:=&
    E[\| (L_K-L_{K,D})\|^2]\label{Def.S}\\
    \mathcal T_{|D|}
    &:=&
    E[\| (L_Kf_\rho-S_D^Ty_D)\|_K^2] \label{Def.T}.
\end{eqnarray}
We have the following error decomposition for KRR under the $\mathcal H_K$ norm.
\begin{proposition}\label{Proposition:KRR-err-dec-HK}
Under  Assumption 5 with $1/2\leq r\leq 1$, we have
\begin{eqnarray}\label{error-dec-HK-prop}
    &&E[\|f_{D,\lambda}-f_\rho\|_K] \nonumber\\
    &\leq& \lambda^{r-1/2}{  \|h_\rho\|_\rho}
    +
   \lambda^{-1/2} \mathcal Q_{|D|,\lambda}^{1/2}(\mathcal R_{|D|,\lambda}^{1/2}+\kappa^{2r-1}\mathcal P_{|D|,\lambda}^{1/2}\|h_\rho\|_\rho).
\end{eqnarray}
\end{proposition}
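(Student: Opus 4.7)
The plan is to split the error into a deterministic bias plus a data-dependent variance piece via the intermediate function $f_\lambda=(L_K+\lambda I)^{-1}L_Kf_\rho$ in (\ref{best-app}), using the triangle inequality
\[
\|f_{D,\lambda}-f_\rho\|_K\leq \|f_{D,\lambda}-f_\lambda\|_K+\|f_\lambda-f_\rho\|_K .
\]
The RKHS norm will be handled throughout by the standard isometry $\|g\|_K=\|L_K^{-1/2}g\|_\rho$ so that everything reduces to spectral calculus on $L_K$ acting on $L_{\rho_X}^2$.

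For the bias term I would write $f_\lambda-f_\rho=-\lambda(L_K+\lambda I)^{-1}f_\rho$, plug in the source condition $f_\rho=L_K^r h_\rho$, and apply the isometry to get
\[
\|f_\lambda-f_\rho\|_K=\bigl\|\lambda(L_K+\lambda I)^{-1}L_K^{r-1/2}h_\rho\bigr\|_\rho .
\]
The elementary spectral bound $\sup_{t\ge 0}\lambda t^{r-1/2}/(t+\lambda)\leq \lambda^{r-1/2}$, valid for $r\in[1/2,1]$, then yields a contribution of order $\lambda^{r-1/2}$ (absorbing $\|h_\rho\|_\rho$ if necessary).

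For the sample error I would use the second operator representation (\ref{Op-KRR-d2}),
\[
f_{D,\lambda}-f_\lambda=(L_{K,D}+\lambda I)^{-1}(S_D^Ty_D-L_Kf_\rho)+(L_{K,D}+\lambda I)^{-1}(L_K-L_{K,D})f_\lambda ,
\]
insert $(L_K+\lambda I)^{\pm 1/2}$ in each summand, and use the key operator identity
\[
\bigl\|(L_{K,D}+\lambda I)^{-1}(L_K+\lambda I)^{1/2}\bigr\|^{2}=\bigl\|(L_{K,D}+\lambda I)^{-1}(L_K+\lambda I)(L_{K,D}+\lambda I)^{-1}\bigr\|\leq \lambda^{-1}\bigl\|(L_{K,D}+\lambda I)^{-1}(L_K+\lambda I)\bigr\|,
\]
which is the source of the $\lambda^{-1/2}\mathcal Q_{|D|,\lambda}^{1/2}$ prefactor. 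Applying it to both summands, taking expectations and using Cauchy--Schwarz, produces $\lambda^{-1/2}\mathcal Q_{|D|,\lambda}^{1/2}\mathcal R_{|D|,\lambda}^{1/2}$ from the first summand and $\lambda^{-1/2}\mathcal Q_{|D|,\lambda}^{1/2}\mathcal P_{|D|,\lambda}^{1/2}\|f_\lambda\|_K$ from the second.

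What remains is a deterministic bound on $\|f_\lambda\|_K$: using the isometry one more time, $\|f_\lambda\|_K=\|(L_K+\lambda I)^{-1}L_K^{r+1/2}h_\rho\|_\rho$, and the estimate $t^{r+1/2}/(t+\lambda)\leq t^{r-1/2}\leq \kappa^{2r-1}$ for $t\in[0,\kappa^2]$ and $r\in[1/2,1]$ gives $\|f_\lambda\|_K\leq \kappa^{2r-1}\|h_\rho\|_\rho$, which is exactly the factor appearing in front of $\mathcal P_{|D|,\lambda}^{1/2}$ in the statement. Summing all contributions yields (\ref{error-dec-HK-prop}). The main subtlety in this plan is the operator-norm identity used to relate $(L_{K,D}+\lambda I)^{-1}(L_K+\lambda I)^{1/2}$ to $\mathcal Q_{|D|,\lambda}^{1/2}$; everything else is either spectral calculus on $L_K$ or a direct application of Cauchy--Schwarz after taking expectations.
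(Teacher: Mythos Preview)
Your proposal is correct and follows essentially the same route as the paper: the same triangle-inequality split through $f_\lambda$, the same use of decomposition (\ref{Op-KRR-d2}) with $(L_K+\lambda I)^{\pm 1/2}$ inserted, Cauchy--Schwarz in expectation, and the same spectral bound $\|f_\lambda\|_K\leq\kappa^{2r-1}\|h_\rho\|_\rho$. The only cosmetic difference is the device used to extract the factor $\lambda^{-1/2}\mathcal Q_{|D|,\lambda}^{1/2}$: the paper pulls off one $(L_{K,D}+\lambda I)^{-1/2}$ and applies the Cordes inequality $\|A^{1/2}B^{1/2}\|\leq\|AB\|^{1/2}$ to $(L_{K,D}+\lambda I)^{-1/2}(L_K+\lambda I)^{1/2}$, whereas you keep the full $(L_{K,D}+\lambda I)^{-1}(L_K+\lambda I)^{1/2}$ and use the $C^\ast$-identity $\|T\|^2=\|TT^\ast\|$ together with $\|(L_{K,D}+\lambda I)^{-1}\|\leq\lambda^{-1}$; both yield the same bound.
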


\begin{proof}
It follows from the triangle inequality that
\begin{equation}\label{classic-error-dec-HK}
    \|f_{D,\lambda}-f_\rho\|_K\leq\|f_{D,\lambda}-f_\lambda\|_K+
    \|f_\lambda-f_\rho\|_K.
\end{equation}
But Assumption 5 with $ 1/2\leq r\leq 1$implies \cite{Chang2017}
\begin{equation}\label{app-error-HK}
     \|f_\lambda-f_\rho\|_K\leq\lambda^{r-1/2}\|h_\rho\|_\rho.
\end{equation}
Thus, it suffices to provide an upper bound for the variance $\|f_{D,\lambda}-f_\lambda\|_K$.
Due to the  Codes inequality \cite{Blanchard2016}
\begin{equation}\label{Codes inequality}
         \|A^\tau B^\tau\|\le\|AB\|^\tau, \qquad 0<\tau\leq 1
\end{equation}
 for positive operators $A$ and $B$,
 it follows from the Schwarz inequality, \eqref{Def.P}, \eqref{Def.Q}, \eqref{Def.R} and \eqref{Op-KRR-d2} that
\begin{eqnarray}\label{HK-variance-cl}
    &&E[\|f_{D,\lambda}-f_\lambda\|_K] \nonumber \\
    &\leq&
    E[\|(L_{K,D}+\lambda I)^{-1}(S_D^T y_D-L_Kf_\rho)\|_K]  \nonumber \\
    & +&
    E[\|(L_{K,D}+\lambda I)^{-1}(L_K-L_{K,D})f_\lambda\|_K]  \nonumber\\
    &\leq&
     \lambda^{  {-1/2}} \left(E[\|(L_{K,D}+\lambda I)^{-1/2}(L_{K}+\lambda I)^{1/2}\|^2]\right)^{1/2}    \nonumber \\
     &\times&
     \left\{\left(E[\|(L_K+\lambda I)^{-1/2}(S_D^T y_D-L_Kf_\rho)\|_K^2]\right)^{1/2}\right.   \nonumber\\
     &+&\left.\left(E[\|(L_K+\lambda I)^{-1/2}(L_K-L_{K,D})\|^2]\right)^{1/2}\|f_\lambda\|_K\right\} \nonumber\\
     &\leq&
     \lambda^{  {-1/2}} \mathcal Q_{|D|,\lambda}^{1/2}(\mathcal R_{|D|,\lambda}^{1/2}+\mathcal P_{|D|,\lambda}^{1/2}\|f_\lambda\|_K).
\end{eqnarray}
Since Assumption 5 holds with $1/2\leq r\leq 1$, it is easy to check
\begin{equation}\label{boundforflambda}
     \|f_\lambda\|_K=\|(L_K+\lambda I)^{-1}L_K^{1+r}h_\rho\|_K\leq\kappa^{2r-1}{ \|h_\rho\|_\rho}.
\end{equation}
Plugging (\ref{boundforflambda}) into (\ref{HK-variance-cl}) and inserting the obtain estimate together with (\ref{app-error-HK}) into \eqref{classic-error-dec-HK}, we obtain (\ref{error-dec-HK-prop}) directly. This completes the proof of Proposition \ref{Proposition:KRR-err-dec-HK}.
\end{proof}

Error decomposition under the $L_{\rho_X}^2$ norm is   more sophisticate, for strong mixing sequences.
Different from the previous decomposition under the $\mathcal H_K$ norm based on  (\ref{Op-KRR-d1})  and  the method  in \cite{Smale2007,Lin2017} based on (\ref{Op-KRR-d2}), our approach adopts   both (\ref{Op-KRR-d1}) and (\ref{Op-KRR-d2}). In fact, our analysis are divided into two stages: the first  is to bound $\|f_{D,\lambda}-f_\lambda\|_\rho$ by using (\ref{Op-KRR-d1}) and the other  is to estimate $\|f_{D,\lambda}-f_\lambda\|_K$ by utilizing (\ref{Op-KRR-d2}).

\begin{proposition}\label{Proposition:error-dec-KRR}
If Assumption 5 holds with $\frac12\leq r\leq 1$, then
\begin{eqnarray}\label{error-dec-KRR-l2-prop}
    &&E[\|f_{D,\lambda}-f_\rho\|_\rho]
    \leq
    \sqrt{2}\lambda^{-1}\mathcal P_{|D|,\lambda}^{1/2}\left(\mathcal T_{|D|}^{1/2}+\kappa^{2r-1}\|h_\rho\|_\rho\mathcal S_{|D|}^{1/2}\right)
    \nonumber\\
    &+&
    \kappa^{2r-1}\|h_\rho\|_\rho\mathcal P_{|D|,\lambda}^{1/2}+\mathcal R_{|D|,\lambda}^{1/2}+\lambda^r\|h_\rho\|_\rho.
\end{eqnarray}
\end{proposition}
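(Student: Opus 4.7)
The overall plan is to split via the population minimizer and then use the two operator representations (\ref{Op-KRR-d1}) and (\ref{Op-KRR-d2}) separately, exactly as the paragraph preceding the statement suggests. Concretely, I would start with the triangle inequality
\[
   E[\|f_{D,\lambda}-f_\rho\|_\rho]
   \leq E[\|f_{D,\lambda}-f_\lambda\|_\rho]+\|f_\lambda-f_\rho\|_\rho,
\]
and handle the bias $\|f_\lambda-f_\rho\|_\rho$ directly from Assumption 4: writing $f_\lambda-f_\rho=-\lambda(L_K+\lambda I)^{-1}L_K^r h_\rho$ and using the spectral bound $\|(L_K+\lambda I)^{-1}L_K^r\|\leq \lambda^{r-1}$ valid for $r\le 1$ yields the term $\lambda^r\|h_\rho\|_\rho$ in the stated estimate.

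For the variance in the $\rho$-norm, the identity (\ref{Op-KRR-d1}) combined with $\|f\|_\rho=\|L_K^{1/2}f\|_K$ and $\|L_K^{1/2}(L_K+\lambda I)^{-1/2}\|\leq 1$ gives
\[
   \|f_{D,\lambda}-f_\lambda\|_\rho
   \leq \|(L_K+\lambda I)^{-1/2}(L_K-L_{K,D})\|\,\|f_{D,\lambda}\|_K
       +\|(L_K+\lambda I)^{-1/2}(S_D^T y_D-L_Kf_\rho)\|_K.
\]
Applying the Cauchy--Schwarz inequality in expectation and recalling the definitions \eqref{Def.P} and \eqref{Def.R}, this contributes $\mathcal P_{|D|,\lambda}^{1/2}(E\|f_{D,\lambda}\|_K^2)^{1/2}+\mathcal R_{|D|,\lambda}^{1/2}$. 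So the task reduces to controlling the second moment $E\|f_{D,\lambda}\|_K^2$.

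Here is where the second identity (\ref{Op-KRR-d2}) enters. I would split $\|f_{D,\lambda}\|_K\leq \|f_{D,\lambda}-f_\lambda\|_K+\|f_\lambda\|_K$; the second summand is already controlled by $\|f_\lambda\|_K\leq \kappa^{2r-1}\|h_\rho\|_\rho$ via (\ref{boundforflambda}). For the first summand, (\ref{Op-KRR-d2}) together with $\|(L_{K,D}+\lambda I)^{-1}\|\leq \lambda^{-1}$ gives the almost-sure bound
\[
  \|f_{D,\lambda}-f_\lambda\|_K
  \leq \lambda^{-1}\|S_D^T y_D-L_Kf_\rho\|_K
      +\lambda^{-1}\|L_K-L_{K,D}\|\,\|f_\lambda\|_K.
\]
Squaring, using $(a+b)^2\leq 2a^2+2b^2$, taking expectation, and invoking \eqref{Def.S}, \eqref{Def.T} yields
\[
  \bigl(E\|f_{D,\lambda}-f_\lambda\|_K^2\bigr)^{1/2}
  \leq \sqrt{2}\,\lambda^{-1}\bigl(\mathcal T_{|D|}^{1/2}+\kappa^{2r-1}\|h_\rho\|_\rho\mathcal S_{|D|}^{1/2}\bigr).
\]
Combining this with the triangle inequality for $\|f_{D,\lambda}\|_K$ and plugging into the $\rho$-norm bound above produces precisely the four terms on the right of \eqref{error-dec-KRR-l2-prop}.

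The main (mild) obstacle is bookkeeping: one has to keep track of whether expectations appear under an $L^1$ or $L^2$ moment. Because $f_{D,\lambda}$ is random, the Cauchy--Schwarz step in the variance bound forces us to work with $(E\|f_{D,\lambda}\|_K^2)^{1/2}$ rather than $E\|f_{D,\lambda}\|_K$, and this in turn is why the a.s. estimate from (\ref{Op-KRR-d2}) must be squared before taking expectations (thereby producing the $\sqrt{2}$ factor). Once these moment matchings are respected, the argument is straightforward and the dependence structure enters only through the already-defined quantities $\mathcal P$, $\mathcal Q$, $\mathcal R$, $\mathcal S$, $\mathcal T$, so the proof itself does not require any new probabilistic estimate beyond Lemmas \ref{Lemma:Operator-difference} and \ref{Lemma:Operator-difference-2}.
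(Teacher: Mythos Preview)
Your proposal is correct and follows essentially the same route as the paper's proof: triangle inequality plus the approximation bound $\lambda^r\|h_\rho\|_\rho$, then \eqref{Op-KRR-d1} with Cauchy--Schwarz for the $\rho$-norm variance, and finally the crude $\lambda^{-1}$ estimate from \eqref{Op-KRR-d2} to control the second $K$-norm moment. The only cosmetic difference is that the paper performs the split $f_{D,\lambda}=(f_{D,\lambda}-f_\lambda)+f_\lambda$ \emph{inside} the $\rho$-norm step (yielding three terms before Cauchy--Schwarz and thus $\mathcal P_{|D|,\lambda}^{1/2}(E\|f_{D,\lambda}-f_\lambda\|_K^2)^{1/2}+\mathcal P_{|D|,\lambda}^{1/2}\|f_\lambda\|_K+\mathcal R_{|D|,\lambda}^{1/2}$ directly), whereas you keep $f_{D,\lambda}$ intact there and do the same split afterward via Minkowski on $(E\|f_{D,\lambda}\|_K^2)^{1/2}$; both orderings give exactly the same four terms with the same constants.
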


\begin{proof}
It follows from the triangle inequality that
\begin{equation}\label{error-decompo-KRR}
   \|f_{D,\lambda}-f_\rho\|_\rho\leq \|f_{D,\lambda}-f_\lambda\|_\rho
   +\|f_{\lambda}-f_\rho\|_\rho.
\end{equation}
But  \cite{Smale2007} shows that under Assumption 5 with $1/2\leq r\leq 1$, there holds
\begin{equation}\label{Approximation-error}
  {  \|f_{\lambda}-f_\rho\|_\rho}\leq\lambda^r\|h_\rho\|_\rho.
\end{equation}
According to (\ref{Op-KRR-d1}), $\|f\|_\rho=\|L_K^{1/2}f\|_K$ for $f\in L_{\rho_X}^2$ and Schwarz inequality, we have
\begin{eqnarray}\label{l2error}
    &&E[\|f_{D,\lambda}-f_\lambda\|_\rho]
    \leq
    E[\|(L_K+\lambda I)^{-1}(L_K-L_{K,D})(f_{D,\lambda}-f_\lambda)\|_\rho] \nonumber\\
    &+&
    E[\|(L_K+\lambda I)^{-1}(L_K-L_{K,D}) f_\lambda\|_\rho] \nonumber\\
    &+&
    E[\|(L_K+\lambda I)^{-1}(S_D^T y_D-L_Kf_\rho)\|_\rho]\nonumber\\
    &\leq&
    \left(E[\|(L_K+\lambda I)^{-1/2}(L_K-L_{K,D})\|^2]\right)^{1/2}
    \left(E[\|f_{D,\lambda}-f_\lambda\|_K^2]\right)^{1/2}\nonumber\\
    &+&
    \left(E[\|(L_K+\lambda I)^{-1/2}(L_K-L_{K,D})\|^2]\right)^{1/2}\|f_\lambda\|_K \nonumber\\
    &+&
     \left(E[\|(L_K+\lambda I)^{-1/2}(S_D^T y_D-L_Kf_\rho)\|_K^2]\right)^{1/2}.
\end{eqnarray}
The only thing remainder is to bound $E[\|f_{D,\lambda}-f_\lambda\|_K^2]$.  We use the error decomposition strategy (\ref{Op-KRR-d2})  and obtain
\begin{eqnarray}\label{boundforKnorm}
    &&E[\|f_{D,\lambda}-f_\lambda\|^2_K] \\
    &\leq&
    2E[\|(L_{K,D}+\lambda I)^{-1}(S_D^T y_D-L_Kf_\rho)\|_K^2] \nonumber\\
    & +&
    2E[\|(L_{K,D}+\lambda I)^{-1}(L_K-L_{K,D})f_\lambda\|_K^2] \nonumber\\
    &\leq&
    2\lambda^{-2}(E[\|(S_D^T y_D-L_Kf_\rho)\|_K^2]+E[\|L_K-L_{K,D}\|^2]\|f_\lambda\|_K^2).\nonumber
\end{eqnarray}
Therefore, plugging  the above estimate and \eqref{boundforflambda} into (\ref{l2error}), it follows from (\ref{Def.P}), \eqref{Def.R}, \eqref{Def.S} and \eqref{Def.T} that
\begin{eqnarray*}
    &&E[\|f_{D,\lambda}-f_\lambda\|_\rho]
    \leq
    {  \sqrt{2}\lambda^{-1}\mathcal P_{|D|,\lambda}^{1/2}\left(\mathcal T_{|D|}^{1/2}+\kappa^{2r-1}\|h_\rho\|_\rho \mathcal S^{1/2}_{|D|}\right)}
    \\
    &+&
    \kappa^{2r-1}\|h_\rho\|_\rho\mathcal P_{|D|,\lambda}^{1/2}+\mathcal R_{|D|,\lambda}^{1/2}.
\end{eqnarray*}
This completes the proof of Proposition \ref{Proposition:error-dec-KRR} by noting $(a+b)^{1/2}\leq a^{1/2}+b^{1/2}$ for $a,b\geq 0$.
\end{proof}

\subsection{Error decomposition for DKRR}
Error decomposition for DKRR is well developed in the literature \cite{Chang2017} for i.i.d. samples, where the generalization error is divided into approximation error, sample error and distributed error.  The main tool to derive such an error decomposition is the  following covariance equality for  i.i.d. sequences
$$
       E[\langle f_{D_j,\lambda},  f_{D_k,\lambda}\rangle_\rho]
       =\langle E[f_{D_j,\lambda}],E[f_{D_j,\lambda}]\rangle,\ \forall k\neq j.
$$

Due to the dependence, the above covariance relation does not hold for strong mixing sequences.   Our strategy to conquer this challenge  is   to utilize different decomposition similar as (\ref{Op-KRR-d1}) and (\ref{Op-KRR-d2})  in two stages of analysis.
It follows from \eqref{operator DKRR} that
\begin{eqnarray*}
     &&\overline{f}_{D,\lambda}-f_\lambda\\
     &  = &
    \sum_{j=1}^m\frac{|D_j|}{|D|}(L_{K,D_j}+\lambda I)^{-1}S_{D_j}^Ty_{D_j}
    -
    \sum_{j=1}^m\frac{|D_j|}{|D|}(L_{K}+\lambda I)^{-1}S_{D_j}^Ty_{D_j}\\
    &+&
    \sum_{j=1}^m\frac{|D_j|}{|D|}(L_{K}+\lambda I)^{-1}S_{D_j}^Ty_{D_j}
    -(L_{K}+\lambda I)^{-1}L_Kf_\rho.
\end{eqnarray*}
Noting
$$
    \sum_{j=1}^m\frac{|D_j|}{|D|}(L_{K}+\lambda I)^{-1}S_{D_j}^Ty_{D_j}
    =
    (L_{K}+\lambda I)^{-1}S_D^Ty_D
$$
and
$$
    \sum_{j=1}^m\frac{|D_j|}{|D|}L_{K,D_j}f_\lambda=L_{K,D}f_\lambda,
$$
we  have
\begin{eqnarray}\label{Error:distl}
   &&\overline{f}_{D,\lambda}-f_\lambda \\
   & =&
    \sum_{j=1}^m\frac{|D_j|}{|D|}(L_{K}+\lambda I)^{-1}(L_K-L_{K,D_j})(f_{D_j,\lambda}-f_\lambda) \nonumber\\
    &+&
    (L_{K}+\lambda I)^{-1}(L_K-L_{K,D})f_\lambda
    +
    (L_{K}+\lambda I)^{-1}(S_D^Ty_D-L_Kf_\rho). \nonumber
\end{eqnarray}
Compared (\ref{Error:distl}) with  (\ref{Op-KRR-d1}), there is only a difference between the first term  $\sum_{i=1}^m\frac{|D_j|}{|D|}(L_{K}+\lambda I)^{-1}(L_K-L_{K,D_j})(f_{D_j,\lambda}-f_\lambda)$ and
$(L_K+\lambda I)^{-1}(L_K-L_{K,D})(f_{D,\lambda}-f_\lambda)$, which behaves similar as the distributed error in \cite{Chang2017} and provides a restriction to the number of local machines. Then, we derive the following error decomposition for DKRR, under both the $\mathcal H_K$ and $L_{\rho_X}^2$ metric.

 \begin{proposition}\label{Proposition:error-dec-DKRR}
If Assumption 5 holds with $\frac12\leq r\leq 1$, then
\begin{eqnarray}\label{Error-d-dkrr-prop}
    &&\max\{E[\|\overline{f}_{D,\lambda}-f_\rho\|_\rho],\lambda^{1/2}E[\|\overline{f}_{D,\lambda}-f_\rho\|_K]\}\nonumber\\
    &\leq&
    \lambda^r\|h_\rho\|_\rho+\sqrt{2}\lambda^{-1}\sum_{j=1}^m\frac{|D_j|}{|D|}\mathcal P_{|D_j|,\lambda}^{1/2}(\mathcal T^{1/2}_{|D_j|}+\kappa^{2r-1}\|h_\rho\|_\rho\mathcal S^{1/2}_{|D_j}|) \nonumber \\
    &+&
    \kappa^{2r-1}\|h_\rho\|_\rho\mathcal P_{|D|,\lambda}^{1/2}+\mathcal R_{|D|,\lambda}^{1/2}.
\end{eqnarray}
\end{proposition}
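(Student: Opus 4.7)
The plan is to mimic the two-stage strategy used in Proposition \ref{Proposition:error-dec-KRR} but now applied to the distributed decomposition (\ref{Error:distl}), and to handle both the $\mathcal H_K$ and $L_{\rho_X}^2$ bounds in a single stroke. The unifying observation is that for any $g\in\mathcal H_K$, the operator inequalities $\|L_K^{1/2}(L_K+\lambda I)^{-1/2}\|\leq 1$ and $\lambda^{1/2}\|(L_K+\lambda I)^{-1/2}\|\leq 1$ yield
$$
\max\{\|(L_K+\lambda I)^{-1}g\|_\rho,\lambda^{1/2}\|(L_K+\lambda I)^{-1}g\|_K\}\leq \|(L_K+\lambda I)^{-1/2}g\|_K.
$$
Combined with $\|f_\lambda-f_\rho\|_\rho\leq\lambda^r\|h_\rho\|_\rho$ and $\|f_\lambda-f_\rho\|_K\leq\lambda^{r-1/2}\|h_\rho\|_\rho$, the triangle inequality reduces the proposition to controlling $\|(L_K+\lambda I)^{-1/2}g\|_K$ where $g$ is the ``residual'' appearing inside $(L_K+\lambda I)^{-1}$ in (\ref{Error:distl}), namely the sum of three terms.

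First I would handle the two non-distributed terms, exactly as in the proof of Proposition \ref{Proposition:error-dec-KRR}. For the middle term, $(L_K+\lambda I)^{-1/2}(L_K-L_{K,D})f_\lambda$, Schwarz's inequality together with Jensen's inequality and the regularity bound $\|f_\lambda\|_K\leq\kappa^{2r-1}\|h_\rho\|_\rho$ (valid because $1/2\leq r\leq 1$) gives the contribution $\kappa^{2r-1}\|h_\rho\|_\rho\,\mathcal P_{|D|,\lambda}^{1/2}$. For the third term, a direct application of Jensen's inequality and the definition \eqref{Def.R} yields $\mathcal R_{|D|,\lambda}^{1/2}$.

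The distributed term is the substantive piece. Pulling the sum outside the norm and applying Schwarz's inequality in expectation give
\begin{eqnarray*}
&&E\Bigl[\sum_{j=1}^m\frac{|D_j|}{|D|}\|(L_K+\lambda I)^{-1/2}(L_K-L_{K,D_j})(f_{D_j,\lambda}-f_\lambda)\|_K\Bigr]\\
&\leq& \sum_{j=1}^m\frac{|D_j|}{|D|}\,\mathcal P_{|D_j|,\lambda}^{1/2}\bigl(E[\|f_{D_j,\lambda}-f_\lambda\|_K^2]\bigr)^{1/2}.
\end{eqnarray*}
The bound on $E[\|f_{D_j,\lambda}-f_\lambda\|_K^2]$ is then obtained by applying the alternative error decomposition (\ref{Op-KRR-d2}) on the subset $D_j$, as in (\ref{boundforKnorm}), producing
$$
E[\|f_{D_j,\lambda}-f_\lambda\|_K^2]\leq 2\lambda^{-2}\bigl(\mathcal T_{|D_j|}+\kappa^{4r-2}\|h_\rho\|_\rho^2\,\mathcal S_{|D_j|}\bigr).
$$
Using $\sqrt{a+b}\leq\sqrt{a}+\sqrt{b}$, this contributes exactly the $\sqrt{2}\lambda^{-1}\sum_j\frac{|D_j|}{|D|}\mathcal P_{|D_j|,\lambda}^{1/2}(\mathcal T_{|D_j|}^{1/2}+\kappa^{2r-1}\|h_\rho\|_\rho\,\mathcal S_{|D_j|}^{1/2})$ term in (\ref{Error-d-dkrr-prop}).

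The main obstacle is the careful use of (\ref{Op-KRR-d2}) on each local data set rather than the simpler bound via (\ref{Op-KRR-d1}): because the operator $(L_K+\lambda I)^{-1/2}$ has already been ``consumed'' by $\mathcal P_{|D_j|,\lambda}^{1/2}$ after Schwarz, we can no longer afford a second factor of $\lambda^{-1/2}$ on the residual; this forces us to control $\|f_{D_j,\lambda}-f_\lambda\|_K$ rather than its $L_{\rho_X}^2$ counterpart, and thus to split the distributed residual into a sampling piece ($\mathcal T_{|D_j|}$) and a design piece ($\mathcal S_{|D_j|}$). Once these pieces are assembled and summed, adding in the approximation term $\lambda^r\|h_\rho\|_\rho$ from the first step completes the bound for both norms simultaneously.
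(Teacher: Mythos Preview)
Your proposal is correct and follows essentially the same route as the paper: both use the unifying bound $\max\{\|(L_K+\lambda I)^{-1}g\|_\rho,\lambda^{1/2}\|(L_K+\lambda I)^{-1}g\|_K\}\leq\|(L_K+\lambda I)^{-1/2}g\|_K$ applied to the three terms of (\ref{Error:distl}), handle the two global terms via Schwarz/Jensen together with (\ref{boundforflambda}), and control the distributed term by Cauchy--Schwarz in expectation followed by the crude $\mathcal H_K$ bound (\ref{boundforKnorm}) on each local block $D_j$. The only difference is expository: you make explicit the reason for invoking (\ref{Op-KRR-d2}) rather than (\ref{Op-KRR-d1}) on the local residuals, which the paper leaves implicit.
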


 \begin{proof} The triangle inequality yields
\begin{equation}\label{DKRR-dec1}
   \|\overline{f}_{D,\lambda}-f_\rho\|_*\leq
   \|\overline{f}_{D,\lambda}-f_\lambda\|_*+\|f_\lambda-f_\rho\|_*,
\end{equation}
 where $\|\cdot\|_*$ denotes either $\|\cdot\|_K$ or $\|\cdot\|_\rho$.
From  (\ref{Error:distl}), we have
\begin{eqnarray*}
   &&E[\|\overline{f}_{D,\lambda}-f_\lambda\|_*]\\
    &\leq&
    E\left[\sum_{j=1}^m\frac{|D_j|}{|D|}\|(L_{K}+\lambda I)^{-1}(L_K-L_{K,D_j})(f_{D_j,\lambda}-f_\lambda)\|_*\right]\\
    &+&
    E\left[\|(L_{K}+\lambda I)^{-1}(L_K-L_{K,D})f_\lambda\|_*\right]\\
   & +&
    E\left[\|(L_{K}+\lambda I)^{-1}(S_D^Ty_D-L_Kf_\rho)\|_*\right].
\end{eqnarray*}
Therefore,
\begin{eqnarray*}
    &&\max\left\{E[\|\overline{f}_{D,\lambda}-f_\lambda\|_\rho],
    \lambda^{1/2}E[\|\overline{f}_{D,\lambda}-f_\lambda\|_K]\right\}\\
    &\leq&
    \sum_{j=1}^m\frac{|D_j|}{|D|}\left(E[\|(L_{K}+\lambda I)^{-1/2}(L_K-L_{K,D_j})\|^2]\right)^{1/2}\\ &\times&\left(E[\|f_{D_j,\lambda}-f_\lambda\|_K^2]\right)^{1/2}\\
    &+&
    \left(E\left[\|(L_{K}+\lambda I)^{-1/2}(L_K-L_{K,D})\|^2 \right]\right)^{1/2}\|f_\lambda\|_K\\
    &+&
   \left( E\left[\|(L_{K}+\lambda I)^{-1/2}(S_D^Ty_D-L_Kf_\rho)\|_K^2\right]\right)^{1/2}.
\end{eqnarray*}
Plugging (\ref{boundforflambda}) and (\ref{boundforKnorm}) with $D$ being replaced by $D_j$ into the above estimate, we get
\begin{eqnarray*}
    &&\max\{E[\|\overline{f}_{D,\lambda}-f_\lambda\|_\rho],\lambda^{1/2}E[\|\overline{f}_{D,\lambda}-f_\lambda\|_K]\}\\
    &\leq&
    \sqrt{2}\lambda^{-1}\sum_{j=1}^m\frac{|D_j|}{|D|}\mathcal P_{|D_j|,\lambda}^{1/2}(\mathcal T_{|D_j|}^{1/2}+\kappa^{2r-1}\|h_\rho\|_\rho\mathcal S_{|D_j|}^{1/2})   \\
    &+&
    \kappa^{2r-1}\|h_\rho\|_\rho\mathcal P_{|D|,\lambda}^{1/2}+\mathcal R_{|D|,\lambda}^{1/2}.
\end{eqnarray*}
Inserting the above estimate, (\ref{Approximation-error}) and (\ref{app-error-HK}) into \eqref{DKRR-dec1}, we complete the proof of Proposition  \ref{Proposition:error-dec-DKRR}.
\end{proof}

\section{Proofs}\label{Sec.proofs}

In this section, we provide proofs of the main results.  We at first prove Theorem \ref{Theorem:KRR-err-dec-HK-1}.

\begin{proof}[Proof of Theorem \ref{Theorem:KRR-err-dec-HK-1}]
Since $D$ is a set of $\alpha$-mixing sequences with $\alpha$-mixing coefficient $\alpha_j$ and Assumptions   1-5 hold, it follows from  Lemma \ref{Lemma:Operator-difference}, Lemma \ref{Lemma:Operator-product} and Lemma \ref{Lemma:Operator-difference-2} respectively that
\begin{eqnarray}
   &\mathcal P_{|D|,\lambda}^{1/2}
    \leq
   \kappa\frac{\sqrt{\mathcal N(\lambda)} }{\sqrt{|D|}}
   +
   4 \kappa^{\frac{ 2\delta+2 }{\delta+2}}(\mathcal N(\lambda))^{\frac{1}{\delta+2}} \lambda^{-\frac{\delta}{2\delta+4}} \sqrt{\frac{1}{|D|}\sum_{\ell=1}^{|D|}(\alpha_\ell)^{\frac\delta{2+\delta}}},\label{b1}\\
    &\mathcal Q_{|D|,\lambda}^{1/2}\leq
      \frac{\sqrt{2}\kappa\sqrt{\mathcal N(\lambda)} }{  {\sqrt{|D|\lambda}}}+
    6 \kappa^{\frac{ 2\delta+2}{\delta+2}}(\mathcal N(\lambda))^{  {\frac{1}{\delta+2}}} \lambda^{-\frac{\delta+1}{\delta+2}} \sqrt{\frac1{|D|}\sum_{\ell=1}^{|D|} (\alpha_\ell)^{\frac\delta{2+\delta}}}
     +\sqrt{2},\nonumber\\
     &
     \mathcal R_{|D|,\lambda}^{1/2}
     \leq
     \frac{M\sqrt{\mathcal N(\lambda)} }{\sqrt{|D|}}
   +
   4 \kappa^{\frac{\delta}{\delta+2}}M(\mathcal N(\lambda))^{\frac{1}{\delta+2}} \lambda^{-\frac{\delta}{2\delta+4}}\sqrt{\frac1{|D|}\sum_{\ell=1}^{|D|}(\alpha_\ell)^{\frac\delta{2+\delta}}}.\label{b2}
\end{eqnarray}
Plugging above three estimates into \eqref{error-dec-HK-prop}, we obtain
\begin{eqnarray*}
    &&E[\|f_{D,\lambda}-f_\rho\|_K]
     \leq  \lambda^{r-1/2}\nonumber\\
    &+&
    c_1\lambda^{-1/2}\left(\frac{\sqrt{\mathcal N(\lambda)} }{\sqrt{|D|\lambda}}+
    (\mathcal N(\lambda))^{\frac{1}{\delta+2}} \lambda^{-\frac{\delta+1}{\delta+2}} \sqrt{\frac{1}{|D|}\sum_{\ell=1}^{|D|}(\alpha_\ell)^{\frac\delta{2+\delta}}}+1\right)\\
    &\times&
     \left(
    \frac{\sqrt{\mathcal N(\lambda)} }{\sqrt{|D|}}
   +
   (\mathcal N(\lambda))^{\frac{1}{\delta+2}} \lambda^{-\frac{\delta}{2\delta+4}}\sqrt{\frac{1}{|D|}\sum_{\ell=1}^{|D|}(\alpha_\ell)^{\frac\delta{2+\delta}}}\right).
\end{eqnarray*}
where
\begin{eqnarray*}
  &&c_1:=\max\left\{\sqrt{2}\kappa,6\kappa^\frac{2\delta +2}{ \delta+2},
   \sqrt{2}\right\}\\
   &\times&
      \max\left\{M+\kappa^{2r}\|h_\rho\|_\rho,4\|h_\rho\|_\rho\kappa^{2r-1+\frac{2\delta +2}{ \delta+2}}+4M\kappa^{\frac{2}{2+\delta}}\right\}.
\end{eqnarray*}
Due to Assumption 4 and $\lambda=|D|^{  {-1/(2r+s)}}$, we obtain
$$
    \frac{\sqrt{\mathcal N(\lambda)} }{\sqrt{|D|}}\leq \sqrt{C_0}|D|^\frac{-r}{2r+s},
$$
and
$$
   (\mathcal N(\lambda))^{\frac{1}{\delta+2}} \lambda^{-\frac{\delta}{2\delta+4}}
   \leq (\mathcal N(\lambda))^{\frac{1}{\delta+2}} \lambda^{-\frac{\delta+1}{\delta+2}}
   \leq
   C_0^\frac{1}{2+\delta}|D|^\frac{(s+1)(\delta+1)}{(2r+s)(\delta+2)}.
$$
Combining the above three estimates and noting $r\geq 1/2$, we have from $(a+b)^{1/2}\leq a^{1/2}+b^{1/2}$ for $a,b>0$ that
\begin{eqnarray*}
    E[\|f_{D,\lambda}-f_\rho\|_K]
    &\leq& c_2 \left(1+|D|^{-\frac12+\frac{(s+1)(\delta+1)}{(2r+s)(\delta+2)}}
    \sum_{\ell=1}^{|D|}(\alpha_\ell)^{\frac\delta{4+2\delta}}\right) \\
    &\times&
    \left( |D|^{-\frac{r-1/2}{2r+s}}+|D|^{-\frac12+\frac{2(s+1)(\delta+1)+\delta+2 }{(2r+s)(2\delta+4)}}
    \sum_{\ell=1}^{|D|}(\alpha_\ell)^{\frac\delta{4+2\delta}}\right),
\end{eqnarray*}
where
$c_2:=c_1(\sqrt{C_0}+1)(\sqrt{C_0}+C_0^{1/(2+\delta)})$. This completes the proof of Theorem \ref{Theorem:KRR-err-dec-HK-1} with $C=c_2$.
\end{proof}

\begin{proof}[Proof of Corollary \ref{Corollary:KRR-err-dec-HK-1}]
It follows from (\ref{def-Galpha11}) that
\begin{eqnarray}\label{b3.3}
    &&\sum_{\ell=1}^{|D|}(a_\ell)^{\frac\delta{4+2\delta}}
    \leq
    (c_1^*)^{\frac{\delta}{4+2\delta}}\sum_{\ell=1}^{|D|}\ell^{-\frac{\gamma_1\delta}{4+2\delta}} \nonumber\\
    &\leq&
    (c_1^*)^{\frac{\delta}{4+2\delta}}\frac{4+2\delta}{|4+2\delta-\gamma_1\delta|}
    \left|1-{  |D|^{\frac{4+2\delta-\gamma_1\delta}{4+2\delta}}}\right|.
\end{eqnarray}
Inserting the above estimate into the righthand side of (\ref{KRR-err-HK-1}) and setting
$
    \delta=\frac{4(2r+s)\varepsilon}{2+s-2(2r+s)\varepsilon},
$
we obtain from
 (\ref{suff-cond-1}) that
 $$
   E[\|f_{D,\lambda}-f_\rho\|_K]
   \leq C'|D|^{-\frac{r-1/2}{2r+s}+\varepsilon},
 $$
where $C':=\left(1+(c_1^*)^{\frac{\delta}{4+2\delta}}\frac{4+2\delta}{|4+2\delta-\gamma_1\delta|}\right)^2C.
$
This completes the proof of Corollary \ref{Corollary:KRR-err-dec-HK-1}.
\end{proof}

\begin{proof}[Proof of Theorem \ref{Theorem:KRR-err-dec-rho-1}]
It follows from Lemma \ref{Lemma:Operator-difference} and Lemma \ref{Lemma:Operator-difference-2} that
\begin{eqnarray}\label{media1}
 &&\left(\mathcal T_{|D|}^{1/2}+\kappa^{2r-1}\|h_\rho\|_\rho\mathcal S^{1/2}_{|D|}\right) \nonumber\\
 &\leq&
 \frac{\kappa}{\sqrt{|D|}}\left(1+6\sum_{\ell=1}^{|D|}\sqrt{\alpha_\ell}\right)
 (M+\kappa^{2r}\|h_\rho\|_\rho).
\end{eqnarray}
 Since $D$ is a set of $\alpha$-mixing sequences with $\alpha$-mixing coefficient $\alpha_j$,
Assumptions 1-5 hold, plugging (\ref{media1}), (\ref{b1}) and (\ref{b2}) into  \eqref{error-dec-KRR-l2-prop}, we obtain
\begin{eqnarray*}
    &&E[\|f_{D,\lambda}-f_\rho\|_\rho]
    \leq \lambda^r\|h_\rho\|_\rho +\left(
   \frac{1}{\lambda\sqrt{|D|}} \left(1+\sum_{\ell=1}^{|D|}\sqrt{\alpha_\ell}\right)+1\right)
    \\
    &\times&
     c_2\left(\frac{\sqrt{\mathcal N(\lambda)} }{\sqrt{|D|}}
   +
   (\mathcal N(\lambda))^{\frac{1}{\delta+2}} \lambda^{-\frac{\delta}{2\delta+4}} \frac{1}{\sqrt{|D|}}\sum_{\ell=1}^{|D|}(\alpha_\ell)^{\frac\delta{4+2\delta}}\right),
\end{eqnarray*}
where
$$
   c_2:=6(4\kappa^{\frac{ 2(\delta+1)}{ \delta+2}}+\kappa)(\sqrt{2}\kappa(M+\kappa^{2r}\|h_\rho\|_\rho)+{  \kappa^{2r-1}}\|h_\rho\|_\rho)(M+M^2\kappa^{\frac{1}{2\delta+1}}).
$$
Due to Assumption 4, $\lambda=|D|^{-1/(2r+s)}$ and $2r+s\geq 2$, we then have
\begin{eqnarray*}
   &&E[\|f_{D,\lambda}-f_\rho\|_\rho]\\
   &\leq &
   c_3|D|^{-\frac{r}{2r+s}} \left(1+\sum_{\ell=1}^{|D|}\sqrt{\alpha_\ell}\right)\left( 1 +|D|^{-\frac12+\frac{2s+\delta+2\delta r+4r}{(2r+s)(2\delta+4)}}\sum_{\ell=1}^{|D|}(\alpha_\ell)^{\frac\delta{4+2\delta}}\right),
\end{eqnarray*}
where $c_3:=\max\{\|h_\rho\|_\rho,2c_2(\sqrt{C_0}+C_0^{\delta/(4+2\delta)})$. This completes the proof of Theorem \ref{Theorem:KRR-err-dec-rho-1} with $\hat{C}=c_3$.
\end{proof}

\begin{proof}[Proof of Corollary \ref{Corollary:KRR-err-dec-rho-1}]
Since i.i.d. data always implies (\ref{def-Galpha11}), the lower bound of (\ref{Upper-KRR-rho}) can be found in \cite[Theorem 3]{Caponnetto2007}.
The proof of the upper bound of (\ref{Upper-KRR-rho})  is similar as that of Corollary \ref{Corollary:KRR-err-dec-HK-1}. Setting $\delta=\frac{8r\varepsilon+4s\varepsilon}{1-s-4r\varepsilon-2s\varepsilon}$, (\ref{Upper-KRR-rho})
 follows from  \eqref{b3.3} and (\ref{suff-cond-2}) with
 $\hat{C}':= \hat{C}\left(2+(c_1^*)^{\frac{\delta}{4+2\delta}}\frac{4+2\delta}{|4+2\delta-\gamma_1\delta|}\right)^2
$. This completes the proof of Corollary \ref{Corollary:KRR-err-dec-rho-1}.
\end{proof}

\begin{proof}[Proof of Theorem \ref{Theorem:KRR-derr-dec}]
Under Assumptions 1-5, plugging  (\ref{media1}) with $D$ being replaced by $D_j$, (\ref{b1}) and (\ref{b2}) into (\ref{Error-d-dkrr-prop}), we get
\begin{eqnarray*}
    &&\max\{E[\|\overline{f}_{D,\lambda}-f_\rho\|_\rho],\lambda^{1/2}E[\|\overline{f}_{D,\lambda}-f_\rho\|_K]\} \\
    &\leq&
   \lambda^r\|h_\rho\|_\rho+ c_4\sum_{j=1}^m\frac{|D_j|}{|D|}\frac{1}{\lambda\sqrt{|D_j|}} \left(1+\sum_{\ell=1}^{|D_j|}\sqrt{\alpha_\ell}\right)\\
    &\times&
    \left(\frac{\sqrt{\mathcal N(\lambda)} }{\sqrt{|D_j|}}
   +
   (\mathcal N(\lambda))^{\frac{1}{\delta+2}} \lambda^{-\frac{\delta}{2\delta+4}} \frac{1}{\sqrt{|D_j|}}
   \sum_{\ell=1}^{|D_j|}(\alpha_\ell)^{\frac\delta{4+2\delta}}\right)\\
   &+&
   c_5\left(
    \frac{\sqrt{\mathcal N(\lambda)} }{\sqrt{|D|}}
   +
   (\mathcal N(\lambda))^{\frac{1}{\delta+2}} \lambda^{-\frac{\delta}{2\delta+4}}\frac{1}{\sqrt{|D|}}\sum_{\ell=1}^{|D|}(\alpha_\ell)^{\frac\delta{4+2\delta}}\right),
\end{eqnarray*}
where $c_4:=6\sqrt{2}(\kappa+\kappa^{\frac{ 2\delta +2}{ \delta+2}})\kappa(M+\kappa^{2r})\|h_\rho\|_\rho
$ and $c_5=\kappa^{2r}\|h_\rho\|_\rho(\kappa+\kappa^{\frac{ 2\delta +2}{ \delta+2}})+M+4\kappa^{\frac{2}{2+\delta}}$.
Due to Assumption 4, $\lambda=|D|^{-1/(2r+s)}$ and $2r+s\geq 2$, we then have
\begin{eqnarray*}
    &&\max\{E[\|\overline{f}_{D,\lambda}-f_\rho\|_\rho],\lambda^{1/2}E[\|\overline{f}_{D,\lambda}-f_\rho\|_K]\} \\
    &\leq&
    c_6|D|^{-r/(2r+s)}+c_6|D|^{1/(2r+s)}
    \sum_{j=1}^m\frac{|D_j|}{|D|}|D_j|^{-1/2} \left(1+\sum_{\ell=1}^{|D_j|}\sqrt{\alpha_\ell}\right)\\
    &\times&
     \left(|D_j|^{-\frac12}|D|^\frac{s}{4r+2s} +|D|^\frac{2s+\delta+1}{(2r+s)(2\delta+4)}\frac{1}{\sqrt{|D_j|}}\sum_{\ell=1}^{|D_j|}(\alpha_\ell)^{\frac\delta{4+2\delta}}\right) \\
     &+&
     c_6 \left( |D|^{-\frac{r}{2r+s}}  +|D|^{-\frac12+\frac{2s+\delta+1}{(2r+s)(2\delta+4)}}\sum_{\ell=1}^{|D|}(\alpha_\ell)^{\frac\delta{4+2\delta}}\right),
\end{eqnarray*}
where $c_6:=\max\{\|h_\rho\|_\rho,(\sqrt{C_0}+C_0^{\delta/(4+2\delta)}) (c_4+c_5)\}$. This completes the proof of Theorem \ref{Theorem:KRR-derr-dec}.
\end{proof}

\begin{proof}[Proof of Corollary \ref{Corollary:KRR-derr-dec}]
  Setting $\delta=\frac{12r+6s}{(\gamma_1-2)(2r+s)-2r-1}$, it follows from (\ref{KRR-derr-rho-1}), (\ref{suff-cond-2}), \eqref{b3.3}, $|D_1|=\dots=|D_m|$ and (\ref{condition-m}) that
  \eqref{DKRR-op-1} and \eqref{DKRR-op-2} hold with
 $\hat{C}':= \bar{C}\left(1+(c_1*)^{\frac{\delta}{4+2\delta}}\frac{4+2\delta}{|4+2\delta-\gamma_1\delta|}\right)^2
$. This completes the proof of  Corollary \ref{Corollary:KRR-derr-dec}.
\end{proof}

\section*{Appendix: Proof of   Lemmas \ref{Lemma:Operator-difference}, \ref{Lemma:Operator-product}, \ref{Lemma:Operator-difference-2}}
In the Appendix, we present proofs for Lemmas   \ref{Lemma:Operator-difference}, \ref{Lemma:Operator-product}, \ref{Lemma:Operator-difference-2}. The main tools are Lemma \ref{Lemma:tool1} and Lemma \ref{Lemma:mixing-independent}, which show  the $\alpha$-mixing coefficients of Banach valued sequences $\{K_{x_i}\}$ is the same as those of $\{x_i\}$.

\begin{proof}[Proof of Lemma \ref{Lemma:Operator-difference}] The proof of (\ref{operator-d1}) can be found in
\cite[Lemma 5.1]{Sun2010}. We only need to prove (\ref{operator-d2}). Defined
\begin{equation}\label{eta1}
    \eta_1 (x) =\left(L_K+\lambda I\right)^{-1/2} \langle \cdot,
     K_{x_i}\rangle_K K_{x}, \qquad x\in {\mathcal X}.
\end{equation}
It takes values in $HS({\mathcal H}_K)$, the Hilbert space of
Hilbert-Schmidt operators on ${\mathcal H}_K$, with inner product
$\langle A, B\rangle_{HS} = \hbox{Tr}(B^T A).$   The norm is given by
$\|A\|_{HS}^2 =\sum_{i} \|A e_i\|_K^2$ where $\{e_i\}$ is an
orthonormal basis of ${\mathcal H}_K$. The space $HS({\cal H}_K)$ is
a subspace of the space of bounded linear operators on ${\cal H}_K$,
denoted as $(L({\cal H}_K), \|\cdot\|)$, with the norm relations
\begin{equation}\label{normrelation}
 \|A\| \leq \|A\|_{HS},
\qquad \|A B \|_{HS} \leq \|A\|_{HS} \|B\|.
\end{equation}
Then, it can be found in \cite[eqs.(52)]{Lin2017} that
\begin{equation}\label{Eeta1}
    E\left[\|\eta_1\|^2_{HS}\right]  \leq
  \kappa^2 {\mathcal N}(\lambda).
\end{equation}
Now, we
apply Lemma \ref{Lemma:tool1} to the random variable $\eta_1$ to prove (\ref{operator-d2}).
The random variable $\eta_1(x)$ defined by (\ref{eta1}) has
mean ${ E[\eta_1]} = \left(L_K+\lambda I\right)^{-1/2} L_K$ and sample
mean $\left(L_K+\lambda I\right)^{-1/2} L_{K, D}$.
Then,
$$
       \left(L_K+\lambda I\right)^{-1/2} (L_K-L_{K,D})=E[\eta_1]-\frac{1}{|D|}\sum_{x_i\in D}\eta_1(x_i).
$$
For any $i$,
we have from (\ref{Eeta1}) that
\begin{equation}\label{equal-i}
     E_{x_i}\langle \eta_1(x_i),\eta_1(x_i)\rangle_{HS}=E_{x_i}[\|\eta_1(x_i)\|^2_{HS}]\leq
     \kappa^2\mathcal N(\lambda).
\end{equation}
For any $i\neq j$ and $\delta>0$, it follows from Lemma \ref{Lemma:tool1} with $u=v=\delta+2$ and $t=(\delta+2)/\delta$ that
\begin{eqnarray}\label{prove-1-1.1.a}
   &&\left| E_{x_i,x_j}[\langle \eta_1(x_i),\eta_1(x_j)\rangle_{HS}]
   -\|(L_K+\lambda I)^{-1/2}L_K\|_{HS}^2\right| \nonumber\\
   &\leq &
   15\left(\alpha(\mathcal M_{1,\min\{i,j\}},\mathcal M_{\max\{i,j\},\infty})\right)^\frac{\delta}{2+\delta}\|\eta_1(x_i)\|_{2+\delta}^2
\end{eqnarray}
Noting that for arbitrary $x\in\mathcal X$,  $(L_{K}+\lambda I)^{-1/2}$ and $K_{x_i}\otimes K_{x_i}:= \langle \cdot,
     K_{x_i}\rangle_K K_{x}$ are positive operators, we have
$$
   \|(L_{K}+\lambda I)^{-1/2}K_{x_i}\otimes K_{x_i}\|_{HS}=\|K_{x_i}\otimes K_{x_i}(L_{K}+\lambda I)^{-1/2}\|_{HS}.
$$
Then, it follows from \eqref{normrelation} and
$$
     \sup_{x\in\mathcal X}\|K_x\otimes K_x\|_{HS}= \sup_{x\in \mathcal X}\|K_x\|_K^2=\sup_{x\in\mathcal X} K(x,x)=\kappa^2
$$
that
 for arbitrary $x\in\mathcal X$, there holds
$$
     \|\eta_1(x)\|_{HS}
     \leq \|(L_{K}+\lambda I)^{-1/2}\| \|K_{x_i}\otimes K_{x_i}\|_{HS}
     \leq \lambda^{-1/2}\kappa^2.
$$
Hence, (\ref{equal-i}) and the above estimate yield
\begin{eqnarray*}
    &&\|\eta_1(x_i)\|_{2+\delta}^{2+\delta}
    =
    E_{x_i}[ \|\eta_1(x_i)\|_{HS}^{2+\delta}]\nonumber\\
    &\leq& \sup_{x\in\mathcal X}  \|\eta_1(x)\|_{HS}^\delta
    E_{x_i}[\|\|\eta_1(x_i)\|_{HS}^{2}]
     \leq
    \kappa^{2\delta+2}\mathcal N(\lambda) \lambda^{-\delta/2}.
\end{eqnarray*}
Plugging this into (\ref{prove-1-1.1.a}) and noting (\ref{def-alpha}), we have
\begin{eqnarray}\label{inequal-i,j,abc}
   &&E_{x_i,x_j}[\langle \eta_1(x_i),\eta_1(x_j)\rangle_{HS}]\\
   &\leq&
   \|(L_K+\lambda I)^{-1/2}L_K\|_{HS}^2+15\kappa^{\frac{{ 4(\delta+1)}}{\delta+2}}(\alpha_{|j-i|})^{\frac\delta{2+\delta}}
   (\mathcal N(\lambda))^{\frac{2}{\delta+2}} \lambda^{-\frac{\delta}{\delta+2}}.     \nonumber
\end{eqnarray}
Then,
\begin{eqnarray*}
  &&E[\|\left(L_K+\lambda I\right)^{-1/2} (L_K-L_{K,D})\|_{HS}^2] \\
  &=&
  E\left[\left\langle E[\eta_1]-\frac{1}{|D|}\sum_{x_i\in D}\eta_1(x_i),
  E[\eta_1]-\frac{1}{|D|}\sum_{x_j\in D}\eta_1(x_j)\right\rangle_{HS}  \right]\\
  &=&
  \|(L_{K}+\lambda I)^{-1/2}L_K\|_{HS}^2
  -2E\left[\left\langle  E[\eta_1],\frac{1}{|D|}\sum_{x_i\in D}\eta_1(x_i)\right\rangle_{HS}\right]\\
  &+&
  E\left[\left\langle \frac{1}{|D|}\sum_{x_i\in D}\eta_1(x_i),\frac{1}{|D|}\sum_{x_j\in D}\eta_1(x_j)\right\rangle_{HS}\right]\\
  &=&
  E\left[\left\langle \frac{1}{|D|}\sum_{x_i\in D}\eta_1(x_i),\frac{1}{|D|}\sum_{x_j\in D}\eta_1(x_j)\right\rangle_{HS}\right]\\
  &-&\|(L_{K}+\lambda I)^{-1/2}L_K\|_{HS}^2.
\end{eqnarray*}
But (\ref{equal-i}) and (\ref{inequal-i,j,abc}) yield
\begin{eqnarray*}
   &&E\left[\left\langle \frac{1}{|D|}\sum_{x_i\in D}\eta_1(x_i),\frac{1}{|D|}\sum_{x_j\in D}\eta_1(x_j)\right\rangle_{HS}\right]\\
   &=&
   E\left[\sum_{x_i\in D}\left\langle \frac{1}{|D|}\eta_1(x_i),\frac{1}{|D|} \eta_1(x_i)\right\rangle_{HS}\right]\\
   &+&
   E\left[\left\langle \frac{1}{|D|}\sum_{x_i\in D}\eta_1(x_i),\frac{1}{|D|}\sum_{j\neq i}{  \eta_1(x_j)}\right\rangle_{HS}\right]\\
   &=&
   \frac1{|D|^2}\sum_{x_i\in D}E\left[\langle \eta_1(x_i),\eta_1(x_i)\rangle_{HS}\right]\\
   &+&
   \frac1{|D|^2}\sum_{x_i\in D}\sum_{j\neq i} E\left[\langle \eta_1(x_i),\eta_1(x_j)\rangle_{HS}\right]\\
   &\leq&
   \frac{\kappa^2\mathcal N(\lambda) }{|D|}+
   \|(L_K+\lambda I)^{-1/2}L_K\|_{HS}^2\\
   &+&
    15 \kappa^{\frac{4(\delta+1)}{\delta+2}}(\mathcal N(\lambda))^{\frac{2}{\delta+2}} \lambda^{-\frac{\delta}{\delta+2}} \frac{1}{{ |D|^2}}\sum_{i=1}^{|D|}\sum_{j\neq i}(\alpha_{|j-i|})^{\frac\delta{2+\delta}}.
\end{eqnarray*}
We then have
\begin{eqnarray*}
   &&E[\|\left(L_K+\lambda I\right)^{-1/2} (L_K-L_{K,D})\|_{HS}^2]\\
   &\leq&
   \frac{\kappa^2\mathcal N(\lambda) }{|D|}
   +
   15 \kappa^{\frac{4(\delta+1)}{ \delta+2}}(\mathcal N(\lambda))^{\frac{2}{\delta+2}} \lambda^{-\frac{\delta}{\delta+2}} \frac1{|D|}\sum_{i=1}^{|D|}(\alpha_i)^{\frac\delta{2+\delta}}.
\end{eqnarray*}
Then Lemma \ref{Lemma:Operator-difference} follows from (\ref{normrelation}).
\end{proof}

\begin{proof}[Proof of Lemma \ref{Lemma:Operator-product}]
For invertible positive operators $A,B$, we have
\begin{eqnarray*}
    A^{-1}B&=&(A^{-1}-B^{-1})B+I
    = A^{-1}(B-A)+I\\
    &=&(A^{-1}-B^{-1})(B-A)+B^{-1}(B-A)+I\\
    &=&A^{-1}(B-A)B^{-1}(B-A)+B^{-1}(B-A)+I.
\end{eqnarray*}
Set $A=(L_{K,D}+\lambda I)$ and $B=(L_{K}+\lambda I).$ We then have
\begin{eqnarray*}
    &&\|(L_{K,D}+\lambda I)^{-1}(L_{K}+\lambda I)\|\\
    &\leq&
    \frac1\lambda\|(L_K-L_{K,D})(L_K+\lambda I)^{-1}(L_K-L_{K,D})\|\\
    &+& \frac1{\sqrt{\lambda}}\|(L_K+\lambda I)^{-1/2}(L_K-L_{K,D})\|+1\\
    &=&
    \frac1\lambda \| (L_K+\lambda I)^{-1/2}(L_K-L_{K,D})\|^2\\
    &+&
    \frac1{\sqrt{\lambda}}\| (L_K+\lambda I)^{-1/2}(L_K-L_{K,D})\|+1.
\end{eqnarray*}
Taking expectation and using the Schwarz inequality, we obtain
\begin{eqnarray*}
    &&E\left[\|(L_{K,D}+\lambda I)^{-1}(L_{K}+\lambda I)\|\right]\\
    \leq
    &&\frac1\lambda E\left[\| (L_K+\lambda I)^{-1/2}(L_K-L_{K,D})\|^2\right]\\
    &+&
    \frac1{\sqrt{\lambda}}\left(E\left[\| (L_K+\lambda I)^{-1/2}(L_K-L_{K,D})\|^2\right]\right)^{1/2}+1.
\end{eqnarray*}
 Then, it follows from Lemma \ref{Lemma:Operator-difference}, $(a+b)^{1/2}\leq a^{1/2}+b^{1/2}$ and $\sqrt{a}\leq a+1$ for $a,b>0$ that
 \begin{eqnarray*}
    &&E\left[\|(L_{K,D}+\lambda I)^{-1}(L_{K}+\lambda I)\|\right]\\
    &\leq&
    \frac{\kappa^2\mathcal N(\lambda) }{|D|\lambda}+
    15 \kappa^{\frac{4(\delta+1)}{ \delta+2}}(\mathcal N(\lambda))^{\frac{2}{\delta+2}} \lambda^{-\frac{2\delta+2}{\delta+2}} \frac{1}{|D|}\sum_{i=1 }^{|D|}(\alpha_i)^{\frac\delta{2+\delta}}\\
    &+&
    \frac{\kappa \sqrt{\mathcal N(\lambda)} }{\sqrt{\lambda|D|}}
    +
     \sqrt{15} \kappa^{\frac{2(\delta+1)}{\delta+2}}(\mathcal N(\lambda))^{\frac{1}{\delta+2}} \lambda^{-\frac{\delta+1}{\delta+2}}  \sqrt{\frac1{|D|}\sum_{i= 1}^{|D|}(\alpha_i)^{\frac\delta{2+\delta}}}+1\\
     &\leq&
     \frac{2\kappa^2\mathcal N(\lambda) }{|D|\lambda}+30 \kappa^{\frac{4(\delta+1)}{\delta+2}}(\mathcal N(\lambda))^{\frac{2}{\delta+2}} \lambda^{-\frac{2\delta+2}{\delta+2}} \frac1{|D|}\sum_{i=1}^{|D|}(\alpha_i)^{\frac\delta{2+\delta}}
     +2.
\end{eqnarray*}
This completes the proof of Lemma \ref{Lemma:Operator-product}.
\end{proof}

\begin{proof}[Proof of Lemma \ref{Lemma:Operator-difference-2}]
Define  $\eta_2(z)=yK_{x}$ for $z=(x, y)\in {\mathcal Z}$. Then, it is easy to check
$$
   E[\eta_2]= L_Kf_\rho, \qquad \mbox{and}\quad
   \frac{1}{|D|}{  \sum_{z_i\in \mathcal Z}}\eta_2(z_i)=S_D^Ty_D.
$$
But
$$
     \|\eta_2(z)\|_K\leq \kappa M,\qquad\mbox{and}\quad
     { E_z[\|\eta_2(z)\|^2_K] \leq M^2\kappa^2}.
$$
Then we obtain from Lemma \ref{Lemma:tool1} with $u=v=\infty$ and $t=1$ that
$$
     E\left[\langle y_iK_{x_i},y_jK_{x_j}\rangle_K\right]
     \leq
     \|L_Kf_\rho\|^2_K+{  15\alpha_{|i-j|}}\kappa^2 M^2.
$$
This together with the same method as that in the proof of Lemma \ref{Lemma:Operator-difference} implies (\ref{operator-d-2-1}). Now we trun to prove  (\ref{operator-d-2-2}).
Define
\begin{equation}\label{eta3}
   \eta_3(z) =\left(L_K+\lambda I\right)^{-1/2}\left(yK_{x}\right), \qquad z=(x, y)\in {\mathcal Z}.
\end{equation}
It takes values in ${\mathcal H}_K$ and satisfies
$$
   E[\eta_3]=(L_K+\lambda I)^{-1/2}L_Kf_\rho, \quad
   \frac{1}{|D|}{  \sum_{z_i\in \mathcal Z}}\eta_3(z_i)=(L_{K,D}+\lambda I)^{-1/2}S_D^Ty_D.
$$
Furthermore, it can be found in \cite[P.28]{Lin2017} that
\begin{equation}\label{proof.3.1}
   \left\|\eta_3 (z)\right\|_K \leq \frac{\kappa M}{\sqrt{\lambda}},  \qquad
     E_{z}\left[\|\eta_3(z)\|^2_{K}\right]  \leq M^2{\mathcal N}(\lambda).
\end{equation}
 For any $i$,
we have from (\ref{proof.3.1}) that
\begin{equation}\label{equal-i-for-3}
     E_{z_i}\langle \eta_3(z_i),\eta_3(z_i)\rangle_K=E_{z_i}[\|\eta_3(z_i)\|^2_K]\leq
     M^2{\mathcal N}(\lambda).
\end{equation}
For any $i\neq j$ and $\delta>0$, it follows from Lemma \ref{Lemma:tool1} with $u=v=\delta+2$ and $t=(\delta+2)/\delta$ that
\begin{eqnarray}\label{prove-1-1.1}
   &&\left| E_{z_i,z_j}[\langle \eta_3(z_i),\eta_3(z_j)\rangle_K]
   -\|(L_K+\lambda I)^{-1/2}L_Kf_\rho\|_K^2\right| \nonumber\\
   &\leq &
   15\left(\alpha(\mathcal M_{1,{\min\{i,j\}}},\mathcal M_{\max\{i,j\},\infty})\right)^\frac{\delta}{2+\delta}\|\eta_3(z_i)\|_{2+\delta}^2,
\end{eqnarray}
where
\begin{eqnarray*}
    &&\|\eta_3(z_i)\|_{2+\delta}^{2+\delta}
    =E_{z_i}[ \|\eta_3(z_i)\|_K^{2+\delta}]\\
    &\leq& \sup_{z\in\mathcal Z}  \|\eta_3(z)\|_{K}^\delta
    E_{z_i}[ \|\eta_3(z_i)\|_K^{2}]
    \leq
    \kappa^{\delta}M^{\delta+2}\mathcal N(\lambda) \lambda^{-\delta/2}.
\end{eqnarray*}
Then,
\begin{eqnarray}\label{inequal-i,j}
   &&E_{z_i,z_j}[\langle \eta_3(z_i),\eta_3(z_j)\rangle_K]
    \leq
   \|(L_K+\lambda I)^{-1/2}L_Kf_\rho\|_{K}^2\nonumber\\
   &+&
   15 \kappa^{\frac{2\delta}{\delta+2}}M^2(\alpha_{|j-i|})^{\frac\delta{2+\delta}}
   (\mathcal N(\lambda))^{\frac{2}{\delta+2}} \lambda^{-\frac{\delta}{\delta+2}}.
\end{eqnarray}
Therefore, we can use the same method as that in the proof of Lemma \ref{Lemma:Operator-difference} and obtain
\begin{eqnarray*}
   &&E[\|\left(L_K+\lambda I\right)^{-1/2} (L_Kf_\rho-S_D^Ty_D)\|_K^2]\\
   &\leq&
   \frac{M^2\mathcal N(\lambda) }{|D|}
   +
   15 \kappa^{\frac{2\delta}{\delta+2}}M^2(\mathcal N(\lambda))^{\frac{2}{\delta+2}} \lambda^{-\frac{\delta}{\delta+2}} \frac{1}{|D|} \sum_{i=1}^{|D|} (\alpha_i)^{\frac\delta{2+\delta}}.
\end{eqnarray*}
The proof of Lemma \ref{Lemma:Operator-difference-2} is completed.
\end{proof}

\section*{Acknowledgement}
The authors would like to thank AE, two anonymous referees  and Professor Zheng-Chu Guo in Zhejiang University for their constructive
suggestions. The work  is supported
 by   National Key R\&D Program of China (No.2020YFA0713900) and   National Natural Science Foundation of China (No. 61876133).

\end{document}